\def\isarxiv{1} 

\ifdefined\isarxiv
\documentclass[11pt]{article}

\usepackage[numbers]{natbib}

\else
\documentclass{article}
\usepackage{algorithm}
\usepackage{microtype}
\usepackage{graphicx}
\usepackage{subfigure}
\usepackage{booktabs}
\usepackage{hyperref}
\usepackage{icml2025}

\usepackage{amsmath}
\usepackage{amssymb}
\usepackage{mathtools}
\usepackage{amsthm}
\usepackage{algorithm}
\usepackage{algpseudocode}
\fi

\ifdefined\isarxiv
\usepackage{amsmath}
\usepackage{amsthm}
\usepackage{amssymb}
\usepackage{algorithm}
\usepackage{subfig}
\usepackage{algpseudocode}
\usepackage{graphicx}
\usepackage{grffile}
\usepackage{wrapfig,epsfig}
\usepackage{url}
\usepackage{xcolor}
\usepackage{epstopdf}

\usepackage{bbm}
\usepackage{dsfont}
\fi

\allowdisplaybreaks

\ifdefined\isarxiv

\usepackage{tikz}
\usepackage{hyperref}  
\hypersetup{colorlinks=true,citecolor=blue,linkcolor=blue} 
\usetikzlibrary{arrows}
\usepackage[margin=1in]{geometry}

\else


\fi
 
\graphicspath{{./figs/}}

\theoremstyle{plain}
\newtheorem{theorem}{Theorem}[section]
\newtheorem{lemma}[theorem]{Lemma}
\newtheorem{definition}[theorem]{Definition}

\newtheorem{remark}[theorem]{Remark}

\newtheorem{hypothesis}[theorem]{Hypothesis}

\newcommand{\R}{\mathbb{R}}

\newcommand{\Attn}{\mathsf{Attn}}
\newcommand{\AAttC}{\mathsf{AAttC}}

\DeclareMathOperator{\poly}{poly}

\DeclareMathOperator{\diag}{diag}


\DeclareMathOperator{\VAR}{\mathsf{VAR}}

\makeatletter
\newcommand*{\RN}[1]{\expandafter\@slowromancap\romannumeral #1@}
\makeatother

\usepackage{lineno}

\begin{document}

\ifdefined\isarxiv

\date{}

\title{On Computational Limits and Provably Efficient Criteria of Visual Autoregressive Models: A Fine-Grained Complexity Analysis}
\author{
Yekun Ke\thanks{\texttt{
keyekun0628@gmail.com}. Independent Researcher.}
\and
Xiaoyu Li\thanks{\texttt{
xiaoyu.li2@student.unsw.edu.au}. University of New South Wales.}
\and
Yingyu Liang\thanks{\texttt{
yingyul@hku.hk}. The University of Hong Kong. \texttt{
yliang@cs.wisc.edu}. University of Wisconsin-Madison.} 
\and
Zhizhou Sha\thanks{\texttt{
shazz20@mails.tsinghua.edu.cn}. Tsinghua University.}
\and
Zhenmei Shi\thanks{\texttt{
zhmeishi@cs.wisc.edu}. University of Wisconsin-Madison.}
\and
Zhao Song\thanks{\texttt{ magic.linuxkde@gmail.com}. The Simons Institute for the Theory of Computing at UC Berkeley.}
}

\else


\icmltitlerunning{On Computational Limits and Provably Efficient Criteria of Visual Autoregressive Models}

\twocolumn[
\icmltitle{On Computational Limits and Provably Efficient Criteria of Visual Autoregressive Models: A Fine-Grained Complexity Analysis}



\icmlsetsymbol{equal}{*}

\begin{icmlauthorlist}
\icmlauthor{Firstname1 Lastname1}{equal,yyy}
\icmlauthor{Firstname2 Lastname2}{equal,yyy,comp}
\icmlauthor{Firstname3 Lastname3}{comp}
\icmlauthor{Firstname4 Lastname4}{sch}
\icmlauthor{Firstname5 Lastname5}{yyy}
\icmlauthor{Firstname6 Lastname6}{sch,yyy,comp}
\icmlauthor{Firstname7 Lastname7}{comp}
\icmlauthor{Firstname8 Lastname8}{sch}
\icmlauthor{Firstname8 Lastname8}{yyy,comp}
\end{icmlauthorlist}

\icmlaffiliation{yyy}{Department of XXX, University of YYY, Location, Country}
\icmlaffiliation{comp}{Company Name, Location, Country}
\icmlaffiliation{sch}{School of ZZZ, Institute of WWW, Location, Country}

\icmlcorrespondingauthor{Firstname1 Lastname1}{first1.last1@xxx.edu}
\icmlcorrespondingauthor{Firstname2 Lastname2}{first2.last2@www.uk}

\icmlkeywords{Machine Learning, ICML}

\vskip 0.3in
]



\printAffiliationsAndNotice{} 
\fi

\ifdefined\isarxiv
\begin{titlepage}
  \maketitle
  \begin{abstract}
Recently, Visual Autoregressive ($\mathsf{VAR}$) Models introduced a groundbreaking advancement in the field of image generation, offering a scalable approach through a coarse-to-fine ``next-scale prediction'' paradigm. 
Suppose that $n$ represents the height and width of the last VQ code map generated by $\mathsf{VAR}$ models,  the state-of-the-art algorithm in [Tian, Jiang, Yuan, Peng and Wang, NeurIPS 2024] takes $O(n^{4+o(1)})$ time,  which is computationally inefficient. In this work, we analyze the computational limits and efficiency criteria of $\mathsf{VAR}$ Models through a fine-grained complexity lens. Our key contribution is identifying the conditions under which $\mathsf{VAR}$ computations can achieve sub-quadratic time complexity. 
We have proved that assuming the Strong Exponential Time Hypothesis ($\mathsf{SETH}$) from fine-grained complexity theory, a sub-quartic time algorithm for $\mathsf{VAR}$ models is impossible. 
To substantiate our theoretical findings, we present efficient constructions leveraging low-rank approximations that align with the derived criteria. 

Formally, suppose that $n$ is the height and width of the last VQ code map in $\mathsf{VAR}$ models, $d$ is the hidden dimension, $R$ is the bound of the entries of the input matrices for attention calculations in $\mathsf{VAR}$ models. We present two results: 
\begin{itemize}
    \item On the negative side, we show that when $d = O(\log n)$ and $R = \Theta(\sqrt{\log n})$, assuming $\mathsf{SETH}$, it is impossible to approximate the output of $\mathsf{VAR}$ model up to $1/\mathrm{poly}(n)$ additive error in truly sub-quartic time $O(n^{4-\Omega(1)})$.
    \item On the positive side, we demonstrate a sharp phase transition in the computational complexity of the $\mathsf{VAR}$ model. When $d = O(\log n)$ and $R = o(\sqrt{\log n})$, the runtime improves dramatically from $O(n^{4+o(1)})$ to $O(n^{2+o(1)})$, which approximates the output of the output of $\mathsf{VAR}$ model up to $1/\mathrm{poly}(n)$ additive error. 
\end{itemize}

This work initiates the study of the computational efficiency of the $\mathsf{VAR}$ model from a theoretical perspective. Our technique will shed light on advancing scalable and efficient image generation in $\mathsf{VAR}$ frameworks.

  \end{abstract}
  \thispagestyle{empty}
\end{titlepage}

{\hypersetup{linkcolor=black}
\tableofcontents
}
\newpage

\else

\begin{abstract}

\end{abstract}

\fi

\section{Introduction}
Visual generation technologies now underpin a broad array of applications, ranging from image enhancement \cite{lhc+25,gld+25} and augmented reality \cite{awt+24} to medical diagnostics \cite{akh+24,mhl+24,lll+24} and creative pursuits like game development \cite{rhr+20, cgx+25}. By translating text descriptions or other input into detailed and diverse visuals, these models are reshaping both how machines interpret images and how new visual content is created. Leading methods in the field include Variational AutoEncoders (VAE) \cite{doe16}, Generative Adversarial Networks (GAN) \cite{gpm+20}, and Diffusion models \cite{hja20}. Their advancements in producing high-resolution, high-fidelity, and varied imagery have significantly broadened the scope of visual generation, driving improvements in realism, diversity, and overall quality.

The emergence of the Visual AutoRegressive model ($\VAR$) \cite{tjy+24} marks a notable paradigm shift in image generation. Rather than relying on conventional ``next-token prediction'', the $\VAR$ model introduces a coarse-to-fine ``next-scale prediction'' approach, enabling autoregressive transformers to more efficiently learn visual distributions and outperform diffusion-based alternatives. Moreover, the $\VAR$ model demonstrates robust zero-shot capabilities in tasks like image inpainting and editing, underscoring its potential for advancing autoregressive models in visual generation.

Despite its demonstrated strengths, there remains a critical need to investigate the $\VAR$ model’s computational limits and to design efficient algorithms. In \cite{tjy+24}, the authors report that the $\VAR$ model has a computational cost of $O(n^4)$, improving upon the $O(n^6)$ complexity associated with earlier autoregressive (AR) methods, where $n$ is the height and width of the last (largest) VQ code map. In this work, we aim to investigate the computational limits
and potential efficient algorithms of $\VAR$ models. Specifically, we ask the following questions:
\begin{center}
    {\it Can we perform the computations of $\VAR$ models \\ faster than $O(n^4)$ time?}
\end{center}

We answer this question affirmatively and summarize our contributions as follows.
\begin{itemize}

    \item \textbf{Computational Limits:} We analyze the computation of the $\VAR$ models under the Strong Exponential Time Hypothesis. Let $R$ represent the upper bound of the elements in the input matrices used for attention calculations in $\VAR$ models. We establish an upper bound criterion $R^* = \Theta(\sqrt{\log n})$. Crucially, only when $R$ is below this threshold, one can compute $\VAR$ models in $O(n^{4-\Omega(1)})$ time (truly sub-quartic time).

    \item \textbf{Provably Efficient Criteria:} We further show that when $R = o(\sqrt{\log n})$, it becomes possible to design an algorithm that approximates the $\VAR$ model in almost quadratic time, specifically $O(n^{2+o(1)})$.
\end{itemize}

\subsection{Our Results}

Our first result shows that when the attention entry range $R \geq \Omega(\sqrt{\log n})$, it is impossible to design a truly sub-quartic time algorithm. Our results for the lower bound make use of the Strong Exponential Time Hypothesis ($\mathsf{SETH}$)~\cite{ip01} from the area of fine-grained complexity regarding the time required to solve $k$-SAT.

\begin{theorem}[Computational Limits of $\VAR$ Models, informal version of Theorem~\ref{thm:lower_bound:formal}]
\label{thm:lower_bound:informal}
Suppose $d = O(\log n)$ and $R = \Theta(\sqrt{\log n})$. Assuming $\mathsf{SETH}$, there is no algorithm that approximates the $\VAR$ model up to $1/\poly(n)$ additive error in $O(n^{4-\Omega(1)})$ time.
\end{theorem}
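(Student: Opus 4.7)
The plan is to reduce from the fine-grained hardness of approximate self-attention, which is known to require $N^{2-o(1)}$ time under $\SETH$ in the regime $d=O(\log N)$, $R=\Theta(\sqrt{\log N})$ (the threshold established by the Alman--Song line of work on attention lower bounds, which itself reduces from Approximate Orthogonal Vectors and thence from $\SETH$). Since the last VQ code map in $\VAR$ has $n\times n = n^2$ tokens and the attention inside $\VAR$ must process all these tokens jointly (across all scales, whose total length is still $\Theta(n^2)$), we should identify $N=\Theta(n^2)$. Then $N^{2-o(1)} = n^{4-o(1)}$, and an $O(n^{4-\Omega(1)})$ algorithm for $\VAR$ would yield an $O(N^{2-\Omega(1)})$ algorithm for approximate attention in the hard regime, contradicting $\SETH$. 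Note that with $N=n^2$ the parameters align cleanly: $\log N = 2\log n$, so $d=O(\log n)=O(\log N)$ and $R=\Theta(\sqrt{\log n}) = \Theta(\sqrt{\log N})$.

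The main work is the embedding. First, I would take an arbitrary hard instance $(Q,K,V)\in\R^{N\times d}\times\R^{N\times d}\times\R^{N\times d}$ of approximate attention with $N=n^2$ and bounded entries, and construct a $\VAR$ instance whose internal attention call is (up to negligible perturbation) this instance. Concretely, I would place the $N$ ``hard'' rows at the final (largest) scale $n\times n$, and fill every earlier scale with dummy tokens whose keys have very large negative inner product with every query. After softmax, the contribution of the dummy tokens becomes $1/\mathrm{poly}(n)$, so they affect the output by at most a $1/\mathrm{poly}(n)$ additive error, which can be absorbed into the promised approximation slack. The block-wise causal mask of $\VAR$ is not an obstacle here because the queries at the last scale are allowed to attend to \emph{all} previous tokens (and to themselves), which is exactly the dense attention pattern we need. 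Any per-scale normalization constants, position embeddings, and the token-to-image decoder at the end of $\VAR$ are fixed linear/pointwise operations that can be inverted or absorbed into the target output specification in $O(n^2)$ additional time, well below the $n^{4-\Omega(1)}$ budget.

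Finally, I would verify the accuracy bookkeeping: the attention lower bound is robust to $1/\mathrm{poly}(N)$ additive error, and $1/\mathrm{poly}(n) = 1/\mathrm{poly}(N)$, so the $\VAR$ approximation guarantee translates directly to an attention approximation guarantee of the required quality. Combining, an algorithm computing $\VAR$ to $1/\mathrm{poly}(n)$ additive error in $O(n^{4-\Omega(1)})$ time would give approximate attention in $O(N^{2-\Omega(1)})$ time at $R=\Theta(\sqrt{\log N})$, $d=O(\log N)$, contradicting $\SETH$ by the known attention hardness result.

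The main obstacle, and where most of the care will be needed in the formal write-up, is engineering the dummy tokens on the sub-scales so that (i) they are consistent with $\VAR$'s quantization/embedding pipeline and its block-causal structure, (ii) the queries at the final scale see effectively only the embedded $(Q,K,V)$, and (iii) the entry-bound $R$ on the constructed $\VAR$ input matches $\Theta(\sqrt{\log n})$ without blowing up through the scale-wise position encodings or per-scale affine transforms. Once this ``padding and masking'' lemma is in place, the complexity bookkeeping is a routine translation from the attention lower bound to the $\VAR$ lower bound via $N=n^2$.
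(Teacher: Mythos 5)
Your proposal rests on exactly the same two pillars the paper uses: (i) the token count entering the final attention layer of the $\VAR$ transformer is $L_K = \sum_{i=1}^{K}\alpha^{2(i-1)} = \Theta(n^2)$ when $n=\alpha^{K-1}$, and (ii) the Alman--Song hardness result for approximate attention at the threshold $d=O(\log N)$, $R=\Theta(\sqrt{\log N})$, which gives an $N^{2-o(1)}$ lower bound and hence $n^{4-o(1)}$ after substituting $N=\Theta(n^2)$. Your observation that $\log N = 2\log n$ keeps both $d$ and $R$ in the hard regime is the same parameter bookkeeping the paper implicitly relies on. Where you diverge is in \emph{how} the two pieces are glued: the paper's proof is a direct two-line computation --- it evaluates the geometric sum $L_K$, plugs it into the attention lower bound (Lemma~\ref{lem:lower_bound_attc}), and stops --- whereas you build an explicit reduction by planting a hard $(Q,K,V)$ instance at the final scale and neutralizing the $O(n^2/\alpha^2)$ extra tokens from coarser scales with far-negative key--query inner products. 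This reduction framing is the more principled way to argue (a lower bound should show that a fast $\VAR$ solver \emph{implies} a fast attention solver, not merely that $\VAR$ ``contains'' a hard subroutine), and it is something the paper does not do at all. The trade-off is that your route raises a question the direct argument sidesteps but also never answers: in the paper's $\VAR$ pipeline, the input to $\mathsf{Attn}_K$ is $Z_{K-1}=\Phi_{\mathrm{up},K-1}(\cdots)$, so the final-scale rows are bicubic upsamples of only $(n/\alpha)^2$ tokens --- strictly fewer degrees of freedom than the $n^2\cdot d$ needed to encode an arbitrary hard instance. You correctly flag the ``padding and masking lemma'' as the crux, but you should also confront this compression through $\Phi_{\mathrm{up},K-1}$: either argue that the relevant hard instances from the Orthogonal-Vectors reduction survive bicubic interpolation, or (more cleanly, and closer to how the paper seems to intend the result) state the lower bound for the final attention layer of the $\VAR$ transformer viewed as a standalone map on $Z_{K-1}$, which is what the paper's direct computation actually establishes.
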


Our second result shows that when $R$ is $o(\sqrt{\log n})$, an almost quadratic time algorithm exists:

\begin{theorem}[Existence of Almost Quadratic Time Algorithm, informal version of Theorem~\ref{thm:upper_bound:formal}]
\label{thm:upper_bound:informal}
Suppose $d = O(\log n)$ and $R = o(\sqrt{\log n})$. There is an algorithm that approximates the $\VAR$ model up to $1/\poly(n)$ additive error in $O(n^{2+o(1)})$ time.
\end{theorem}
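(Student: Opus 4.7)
The plan is to reduce the $\VAR$ attention computation to a matrix product of the form $D^{-1} \exp(QK^\top/\sqrt{d}) V$ on an $N \times N$ attention matrix, where $N = \Theta(n^2)$ is the total token count across all scales of the $\VAR$ pyramid (since $\sum_{k=1}^K h_k w_k = \Theta(n^2)$ when the last code map is $n \times n$), and then to exploit the regime $R = o(\sqrt{\log n})$ to replace $\exp(\cdot)$ by a low-degree polynomial, yielding a low-rank factorization of the attention matrix that can be multiplied through in almost quadratic time.

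\textbf{Step 1: Reformulating $\VAR$ computation as attention.} First I would unfold the definition of the $\VAR$ model used earlier in the paper and isolate the bottleneck: the block-causal self-attention across the concatenated multi-scale token sequence of length $N = \Theta(n^2)$. Since $d = O(\log n)$, all linear projections (computing $Q, K, V$) and residual/MLP layers can be done in $O(N d) = n^{2+o(1)}$ time; the only potentially super-quadratic step is forming $\exp(QK^\top/\sqrt{d})$ and multiplying by $V$.

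\textbf{Step 2: Polynomial approximation of $\exp$ under the entry bound $R$.} By the assumption that input entries are bounded by $R$, every entry of $QK^\top/\sqrt{d}$ lies in $[-R^2, R^2]$, so when $R = o(\sqrt{\log n})$ we have $R^2 = o(\log n)$. I would then invoke the standard truncated-Taylor approximation of $\exp$ on bounded intervals: there is a degree-$g$ polynomial $P(x)$ with $|P(x) - e^x| \le 1/\poly(n)$ for all $|x| \le R^2$, where $g = n^{o(1)}$. Applying $P$ entrywise gives a matrix $\wt{A}$ with $\|\wt{A} - \exp(QK^\top/\sqrt{d})\|_\infty \le 1/\poly(n)$.

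\textbf{Step 3: Low-rank factorization and fast multiply.} Because $P$ has degree $g = n^{o(1)}$ and $Q, K \in \R^{N \times d}$ with $d = O(\log n)$, the matrix $\wt{A}$ admits an explicit factorization $\wt{A} = U_1 U_2^\top$ with $U_1, U_2 \in \R^{N \times r}$ where $r = \binom{d+g}{g} = n^{o(1)}$ (each column indexed by a monomial of degree $\le g$ in the $d$ coordinates of a row of $Q$ or $K$). The matrices $U_1, U_2$ can be built in $O(N r) = n^{2+o(1)}$ time. Computing the normalizer $D^{-1}$ requires the row sums $U_1 (U_2^\top \mathbf{1})$, and the output requires $U_1 (U_2^\top V)$; both reduce to products of an $N \times r$ matrix with an $r \times O(d)$ matrix followed by an $N \times r$ by $r \times d$ product, each costing $O(N r d) = n^{2+o(1)}$. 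The block-causal mask, being constant across blocks at a given scale, can be absorbed by partitioning the sum into a constant number of prefix products and applying the same low-rank trick within each prefix, preserving the $n^{2+o(1)}$ runtime.

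\textbf{Step 4: Error control across layers and scales.} Finally, I would propagate the per-entry $1/\poly(n)$ approximation error through the $O(1)$ transformer layers and the constant number of $\VAR$ scales, using Lipschitz bounds on softmax together with the boundedness of subsequent linear maps, to conclude that the final output is within $1/\poly(n)$ additive error (by choosing the polynomial degree to suppress a sufficiently large polynomial in $n$). The main obstacle I anticipate is the error-propagation bookkeeping: making sure that the constants $R$, $d$, and the normalization denominators $D$ do not blow up enough to destroy the per-entry additive guarantee across layers; this is handled by choosing $g = \Theta(\max\{R^2, \log n\}) = o(\log n) \cdot \omega(1)$ large enough that all polynomial factors are absorbed, which still leaves $g = n^{o(1)}$ and hence $r = n^{o(1)}$.
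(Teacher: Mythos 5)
Your approach is essentially the paper's: in the regime $R = o(\sqrt{\log n})$ you replace each exact attention computation by the polynomial-method low-rank factorization $U_1 U_2^\top$ with rank $\binom{d+g}{g} = n^{o(1)}$, which is exactly the content of the result of Alman--Song that the paper invokes as a black box (Lemma~\ref{lem:as23_attention}); you re-derive it rather than cite it, and then propagate a $1/\poly(n)$ per-layer error, which matches the paper's Lemmas~\ref{lem:fast_var_transformer_formal} and~\ref{error_analysis_fast_var_transformer_formal}. Two points need patching. First, the $\VAR$ pipeline in this paper has three stages, and the VQ-VAE decoder (Stage~3) contains its own attention layers acting on the $n \times n$ feature map, i.e.\ on $\Theta(n^2)$ tokens --- a second quartic bottleneck that your proposal never mentions; the same low-rank substitution handles it (Lemma~\ref{lem:fast_decoder_formal}), but the theorem is about the whole model, so you must say so. Second, the number of scales is not constant: since $n = \alpha^{K-1}$, there are $K = \Theta(\log n)$ attention layers, so your Step~4 error bookkeeping must survive a product of $\Theta(\log n)$ per-layer blow-up factors of the form $O(kR^{g+1}d)$, not $O(1)$ of them; the paper handles this by induction together with the assumption that entries are representable in $O(\log n)$ bits, and your argument needs the analogous care to keep the accumulated error at $1/\poly(n)$.
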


\paragraph{Roadmap.}  Section~\ref{sec:related_work} offers a summary of related work. In Section~\ref{sec:model_formulation}, we outline the mathematical formulation of both the $\VAR$ model and its fast version and divide the model into three stages: the $\VAR$ Transformer, the feature map reconstruction block, and the VQVAE-Decoder. Section~\ref{sec:computation_limits} delves into analyzing the computation limits of the $\VAR$ model. In Section~\ref{sec:provably_efficient_criterial}, we examine the running time and error propagation for each block in the fast $\VAR$ model and establish the conditions under which the model can be accelerated with proven efficiency.
In Section~\ref{sec:discussion}, we discuss the potential impacts and future directions. 
In Section~\ref{sec:conclusion}, we conclude our contributions. 
\section{Related Work}\label{sec:related_work}
\subsection{Visual Generation Models}\label{sec:im_gen_models}

Recent years have witnessed remarkable advancements in visual generation models, driven by progress in several prominent architectures.

\paragraph{AutoRegressive Models.} AutoRegressive models for visual generation \cite{dyh+21,dzht22} transform 2D images into 1D token sequences for processing. Early works like PixelCNN \cite{vke+16} and PixelSNAIL \cite{cmr+18} pioneered pixel-by-pixel generation using a raster-scan approach. Subsequent studies \cite{rvav19,ero21,lkk+22} extended this concept by generating image tokens in a similar raster order. For example, VQ-GAN \cite{ero21} employs a GPT-2-style decoder-only transformer for image generation, while models such as VQVAE-2 \cite{rvav19} and RQ-Transformer \cite{lkk+22} enhance this method with additional hierarchical scales or stacked representations. More recently, Visual AutoRegressive ($\VAR$) modeling \cite{tjy+24} introduced a novel coarse-to-fine ``next-scale prediction'' approach. This method improves scalability, inference speed, and image quality, outperforming traditional autoregressive techniques and diffusion transformers.

\paragraph{Diffusion Models.} Diffusion models \cite{hja20,rbl+22} are known for their ability to generate high-resolution images by progressively refining noise into coherent visuals. Models such as DiT \cite{px23} and U-ViT \cite{bnx+23} exemplify this approach, leveraging probabilistic frameworks to capture underlying data distributions. Recent advancements in diffusion-based generation focus on improving sampling techniques and training efficiency \cite{se19,sme20,lzb+22,hwl+24}, exploring latent-space learning \cite{rbl+22,wsd+24,wxz+24,lzw+24}, enhancing model architectures \cite{hsc+22,px23,lsss24,wcz+23,xsg+24}, and 3D generation \cite{pjbm22,wlw+24,xlc+24}.

\subsection{Acceleration via Low-rank Approximation}

Low-rank approximation has emerged as a powerful technique for addressing the computational challenges associated with modern transformer architectures. By approximating key operations such as attention and gradient computations, these methods significantly reduce the time and resource requirements of training and inference.

\paragraph{Accelerating Attention Mechanisms.}
Due to its quadratic computational complexity with respect to context length, the attention mechanism faces increasing difficulty as the sequence length grows in modern large language models~\cite{gpto1,llama3_blog,claude3_pdf}. To tackle this problem, polynomial kernel approximation methods \citep{aa22} have been proposed, leveraging low-rank approximations to construct an efficient approximation of the attention matrix. These approaches lead to notable improvements in computation speed, enabling a single attention layer to handle both training and inference tasks with near-linear time complexity \citep{as23, as24b}. Additionally, these methods can be extended to more sophisticated attention mechanisms, like tensor attention, while maintaining almost linear time complexity in both training and inference phases \cite{as24_iclr}. Furthermore, there are works considering RoPE-based attention~\cite{as24_rope,chl+24_rope}, and differentially private cross attention~\cite{lssz24_dp}. Additionally, alternative approaches like the conv-basis method introduced in \cite{lls+24_conv} offer further opportunities for accelerating attention computations, providing complementary solutions to this critical bottleneck. Furthermore, there are many other works that use pruning to accelerate attention mechanisms~\cite{lls+24_prune,cls+24,llss24_sparse,ssz+25_prune,ssz+25_dit,hyw+23,whl+24,xhh+24}.  

\paragraph{Gradient Approximation.} 
The low-rank approximation is a widely used technique for optimizing transformer training by reducing computational complexity \cite{lss+24,lssz24_tat,as24b,hwsl24,cls+24,lss+24_grad,lll+25_loop}. Specifically, \cite{as24b} builds upon the low-rank approximation framework introduced in \cite{as23}, which originally focused on forward attention computation, to approximate the gradient of attention mechanisms. This method effectively reduces the computational cost associated with gradient calculations. In \cite{lss+24}, this low-rank gradient approximation approach is further extended to multi-layer transformers, demonstrating that backward computations in such architectures can be approximated in nearly linear time. Additionally, \cite{lssz24_tat} generalizes the work of \cite{as24b} to a tensor-based attention model, leveraging the forward computation results from \cite{as24_iclr} to enable efficient training of tensorized attention mechanisms. Finally, \cite{hwsl24} utilizes low-rank approximation methods in the training process of Diffusion Transformers (DiTs)., highlighting the versatility of these methods in various transformer-based models.
\section{Model Formulation}\label{sec:model_formulation}

In this section, we begin by introducing the notations used throughout the paper in Section~\ref{sec:notations}. Section~\ref{sec:overall_architecture} presents the overall architecture of the VAR model and divides its processing workflow into three stages. In Section~\ref{sec:phase_1}, we provide the mathematical formulation for the modules involved in the pyramid-shaped token map generation stage. Section~\ref{sec:phase_2} offers the mathematical formulation for the modules in the feature map reconstruction stage, while Section~\ref{sec:phase_3} presents the mathematical formulation for the modules in the VQ-VAE Decoder process stage.

\subsection{Notations}\label{sec:notations}
Given an integer $n \in \mathbb{Z}^+$, we use $[n]$ to denote the set $\{1,\dots,n\}$. Given a vector $ c $, the diagonal matrix formed by $ c $ is denoted as $ \diag(c) $, where $ c_i $ denotes the $ i $-th diagonal element. Given a matrix $ U$, we use $U^{\top}$ to denote the transpose of $U$. Given a matrix $U$, we denote it Frobenius norm as $\|U\|_F$, which is defined as $\|U\|_F := \sqrt{\sum_{i,j} U^2_{i,j}}$. Additionally, we define $\|U\|_\infty$ as the maximum norm of $U$, which is defined as $\|U\|_\infty = \max_{i,j} |U_{i,j}|$. Given two vectors $c,d \in \R^n$, the notation $c\circ d$ represents the element-wise product (also known as the Hadamard Product). It is defined as: $c \circ d = (c_1 d_1, \dots, c_n d_n)$. In our paper, nearly linear time is defined as $O(n \poly \log n)$, and almost linear time is defined as $O(n^{1+o(1)})$.

\subsection{Overall Architecture}\label{sec:overall_architecture}

In this section, We present the overall architecture of the $\VAR$ model and divide its processing workflow into three stages. 

{\bf Stage 1: Pyramid-Shaped Token Maps Generation.} Firstly, the $\VAR$ model will start by quantizing an initial input token map $X_{\mathrm{init}} \in \R^{1 \times 1 \times d}$ into $K$ multiple scale pyramid-shaped token maps $(r_1, \dots, r_K)$, each at an increasingly higher resolution $h_k \times w_k$. During the $k$-th autoregressive step, all the $h_k \times w_k$ will be generated in parallel, conditioned on $r_k$'s prefix $r_1,\dots,r_{k-1}$. In Section~\ref{sec:phase_1}, we provide a mathematical definition for each module in this stage.

{\bf Stage 2: Feature Map Reconstruction.} 
The second stage of the $\VAR$ model is to reconstruct the generated pyramid-shaped token maps $r_1, \dots, r_K$ into a Feature Map. Specifically, the $\VAR$ model uses an up-interpolation layer to interpolate each of the token maps $(r_1, ..., r_{K-1})$ to the size of $r_K$ and applies a convolution layer to reduce the loss introduced by the interpolation. After this process, the VAR model sums the K token maps to obtain the desired Feature Map. In Section~\ref{sec:phase_2}, we provide a mathematical definition for each module in this stage.

{\bf Stage 3: Generating Image Using VQ-VAE Decoder.} 
The third stage of $\VAR$ model is to use VQ-VAE Decoder to generate the final output image by taking the input of feature map. We follow the implementation of \cite{tjy+24} and regard the VQ-VAE Decoder as a module composed of fixed-depth ResNet layers, attention layers, and up-interpolation layers. In Section~\ref{sec:phase_3}, we provide a mathematical definition for each module in this stage.

\subsection{Stage 1: Token Maps Generation} \label{sec:phase_1}

The $\VAR$ model uses the $\VAR$ Transformer to convert the initialized token map $X_{\mathrm{init}}$ into a series of pyramid-shaped token maps. 
The $\VAR$ Transformer alternates between up sample blocks and attention layers to get the output.

\paragraph{Up Sample Blocks.} 
The $k$-th up sample block takes as input the initial token map $X_{\mathrm{init}}$ and the previous pyramid-shaped token maps $X_1, \ldots, X_k$, sets $Y_1 = X_{\mathrm{init}}$ and up samples each $X_i$ into a new token map $Y_{i+1}$, and outputs the new pyramid-shaped token maps $Y_1, \ldots, Y_{k+1}$.  

The upsampling on each token map $X_r (r \in [k])$ uses interpolation with a bicubic spline kernel.
\begin{definition}[Bicubic Spline Kernel]\label{def:bi_spline_kernel}
    A bicubic spline kernel is a piecewise cubic function $W: \R \to \R$ that satisfies $W(x) \in [0,1]$ for all $x \in \R$.
\end{definition}

\begin{definition}[Up-interpolation Layer for One-Step Geometric Sequence]\label{def:up_inter_layer_one_step}
The layer $\phi_{{\rm up}, r}$ takes the input feature map $X_r \in \R^{h_r \times w_r \times d}$ and computes the output feature map $Y_{r+1} \in \R^{h_{r+1} \times w_{r+1} \times d}$, where $h_r < h_{r+1}$ are the heights, $w_r < w_{r+1}$ are the widths, and $d \in \mathbb{N}$ is the hidden dimension. It computes $Y_{r+1} = \phi_{{\rm up}, r}(X_r)$ with a bicubic spline kernel $W$: for $i \in [h_{r+1}], j \in [w_{r+1}], l \in [d]$, \begin{align}\label{eq:ui_entry_cal}
    [Y_{r+1}]_{i,j,l} := \sum_{s=-1}^2 \sum_{t=-1}^2 W(s) \cdot [X_r]_{\frac{i \cdot h_r}{h_{r+1} }+s,\frac{j \cdot w_r}{w_{r+1}}+t,l} \cdot  W(t) 
\end{align}
\end{definition}

\begin{figure}[!t]
    \centering
    \includegraphics[width=0.65\linewidth]{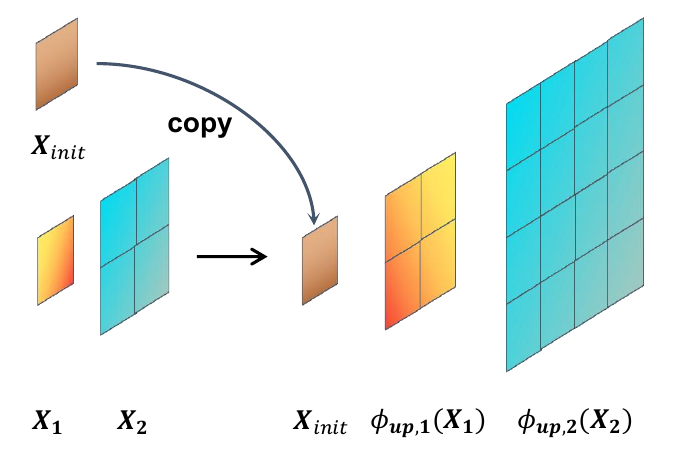}
    \caption{Example of the Pyramid Up-Interpolation Layer $\Phi_{{\rm up},2}$ used in the model.}
    \label{fig:iteration_trajectory}
\end{figure}

We are now ready to present the up sample block $\Phi$.
\begin{definition}[Pyramid Up-Interpolation Layer $\Phi_{{\rm}}$]\label{def:pyramid_up_interpolation_layer}
The layer $\Phi_{{\rm up}, k}$ takes the initial token map $X_{\mathrm{init}}$ and the token maps $X_r \in \R^{h_r \times w_r \times c} (r \in [k])$ and computes new token maps $Y_{r} \in \R^{h_{r} \times w_{r} \times c}$. 
It sets $Y_1 = X_{\mathrm{init}}$ and computes $Y_{r+1} = \phi_{{\rm up}, r}(X_r)$ as in Definition~\ref{def:up_inter_layer_one_step}. The output is the set consisting $Y_i (i \in [k+1])$.
\end{definition}

\paragraph{Attention Layer.}
After an up sample block, the token maps (after being flattened into a proper shape) will be input into an attention layer. 
\begin{definition}[Single Attention Layer]\label{def:attn_matrix}\label{def:single_layer_attn}
 Let $X \in \mathbb{R}^{n \times d}$ denote the input matrix. Let $W_Q, W_K, W_V \in \R^{d \times d}$ denote the weight matrix for query, key, and value, respectively. 
 First, compute the attention matrix $A \in \mathbb{R}^{n \times n}$:
\begin{align*}
A_{i,j} := & ~\exp(  X_{i,*}   W_Q   W_K^\top   X_{j,*}^\top), \text{~~for~} i, j \in [n].
\end{align*}
Then, compute the output: 
\begin{align*}
    \mathsf{Attn} (X) := & ~ D^{-1} A X W_V, 
\end{align*}
where $D := \diag(A {\bf 1}_n) \in \mathbb{R}^{n \times n}$
\end{definition}

\paragraph{$\VAR$ Transformer.}
A $\VAR$ Transformer with $K$ layers alternates between the attention layer and up sample blocks (where the output of each layer is reshaped to a proper shape as the input for the next layer): 
\begin{definition}[$\VAR$ transformer]\label{def:var_transformer}
The transformer $\mathsf{TF}$ takes an initial token map $X_{\mathrm{init}} \in \R^{1 \times d}$, computes 
\begin{itemize}
    \item $Z_0  = X_{\mathrm{init}}$, 
    \item $Z_k  = \Phi_{ \mathrm{up}, k}(X_{\mathrm{init}}, \mathsf{Attn}_k(Z_{k-1})), \text{~~for~} k \in [K-1]$
\end{itemize}
and finally outputs $\mathsf{Attn}_K(Z_{K-1})$. 
Here $\Phi_{\mathrm{up},k}$ is defined in Definition~\ref{def:pyramid_up_interpolation_layer}, $\mathsf{Attn}_i$ is defined in Definition~\ref{def:single_layer_attn}, $Z_{k-1}$ is flatten into shape $(\sum_{r=1}^{k} h_r w_r)\times d$ as input for $\mathsf{Attn}_k$, and 
the output of $\mathsf{Attn}_k$ is reshaped into $X_r \in \R^{h_r \times w_r \times c} (r \in [k])$ as input for $\Phi_{ \mathrm{up}, k}$.

For convenience, we often abuse notation slightly and write: 
\begin{align*}
     \mathsf{TF}(X_{\mathrm{init}}) :=&~  \mathsf{Attn}_K \circ \Phi_{ \mathrm{up}, K-1} \circ \cdots \circ \Phi_{ \mathrm{up}, 1} \circ \mathsf{Attn}_1  (X_{ \mathrm{init} }),
\end{align*}
where $\circ$ denotes function composition. 
\end{definition}

\subsection{Stage 2: Feature Map Reconstruction }\label{sec:phase_2}
In phase 2, the $\VAR$ model will transform the generated pyramid-shaped token maps into feature maps. This phase has the following main modules:

\paragraph{Up Sample Blocks.} 
The $\VAR$ model performs up-sampling on token maps of different sizes, scaling them to the size of the final output feature map. In this process, the $\VAR$ model will use the up-interpolation blocks defined in Definition~\ref{def:up_inter_layer_one_step}. To mitigate information loss during token map up-scaling, the $\VAR$ model employs convolution blocks to post-process the up-scaled token maps. We define the convolution layers as the following:
\begin{definition}[Convolution Layer]\label{def:conv_layer}
The Convolution Layer is defined as follows:
\begin{itemize}
    \item Let $h \in \mathbb{N}$ denote the height of the input and output feature map.
    \item Let $w \in \mathbb{N}$ denote the width of the input and output feature map.
    \item Let $c_{\rm in} \in \mathbb{N}$ denote the number of channels of the input feature map.
    \item Let $c_{\rm out} \in \mathbb{N}$ denote the number of channels of the output feature map.
    \item Let $X \in \R_p^{h \times w \times c_{\rm in}}$ represent the input feature map.
    
    \item For $l \in [c_{\rm out}]$, we use $K^l \in \mathsf{F}_p^{3 \times 3 \times c_{\rm in}}$ to denote the $l$-th convolution kernel.
    \item Let $p = 1$ denote the padding of the convolution layer.
    \item Let $s = 1$ denote the stride of the convolution kernel.
    \item Let $Y \in \R_p^{h \times w \times c_{\rm out}}$ represent the output feature map.
\end{itemize}
We use $\phi_{\rm conv}: \R_p^{h \times w \times c_{\rm in}} \to \R_p^{h \times w \times c_{\rm out}}$ to represent the convolution operation then we have $Y = \phi_{\rm conv}(X)$. Specifically, for $i \in [h], j \in [w], l \in [c_{\rm out}]$, we have
\begin{align*}
    Y_{i,j,l} := \sum_{m=1}^3 \sum_{n=1}^3 \sum_{c = 1}^{c_{\rm in}} X_{i+m-2,j+n-2,c} \cdot K^l_{m,n,c} + b
\end{align*}

\end{definition}

\begin{remark}
    Assumptions of kernel size, padding of the convolution layer, and stride of the convolution kernel are based on the specific implementation of \cite{tjy+24}.
\end{remark}

\subsection{Stage 3: VQ-VAE Decoder Process}\label{sec:phase_3}
$\VAR$ will use the VQ-VAE Decoder Module to reconstruct the feature map generated in Section~\ref{sec:phase_2} into a new image. The Decoder of VQ-VAE has the following main modules:

\paragraph{ResNet Layers.} 
In the VQVAE decoder, the ResNet block, which includes two (or more) convolution blocks, plays a crucial role in improving the model's ability to reconstruct high-quality outputs. The convolution blocks help capture spatial hierarchies and patterns in the data, while the residual connections facilitate better gradient flow and allow the model to focus on learning the residuals (differences) between the input and output. The definition of convolution block is given in Definition~\ref{def:conv_layer}.

\paragraph{Attention Layers.} 
The Attention block helps the Decoder fuse information from different locations during the generation process, which can significantly improve the clarity and detail of the generated images. When applied to a feature map, the attention mechanism computes attention scores for all pairs of pixels, capturing their pairwise relationships and dependencies. The definitions of blocks in attention are given in Section~\ref{sec:phase_1}.

\paragraph{Up Sample Layers.} 
The VQ-VAE decoder uses Up-Sample Blocks to progressively increase the spatial resolution of the latent representation. The Up-Sample Blocks in VQVAE combine up-interpolation and convolution blocks to restore the spatial dimensions of the feature maps, facilitating the reconstruction of the high-resolution output image. The convolution block has already been defined in Definition~\ref{def:conv_layer}, and the up-interpolation block has already been defined in Definition~\ref{def:up_inter_layer_one_step}.
\section{Computational Limits}\label{sec:computation_limits}

In this section, we delve into the computational limits of $\VAR$ Models, particularly in the context of solving key problems under the assumptions of the Strong Exponential Time Hypothesis ($\mathsf{SETH}$). Section~\ref{sec:seth} introduces $\mathsf{SETH}$ as the basis for our complexity analysis. In Section~\ref{sec:hardness_of_approximate_attention_computation}, we discuss a key result from~\cite{as23} that establishes the hardness of Approximate Attention Computation. Finally, Section~\ref{sec:var-limits} presents the lower bound for $\VAR$ model efficiency, pinpointing the limitations for sub-quartic performance.

\subsection{Strong Exponential Time Hypothesis}\label{sec:seth}
We begin by presenting the foundational hypothesis ($\mathsf{SETH}$)~\cite{ip01}, which underpins much of our complexity analysis:

\begin{hypothesis}[Strong Exponential Time Hypothesis ($\mathsf{SETH}$)~\cite{ip01}]
For every $\epsilon > 0$, there exists a positive integer $k \ge 3$ such that no randomized algorithm can solve $k\text{-SAT}$ on formulas with $n$ variables in $O\bigl(2^{(1-\epsilon)n}\bigr)$ time.
\end{hypothesis} 

\subsection{Hardness of Approximate Attention Computation} 
\label{sec:hardness_of_approximate_attention_computation}

We begin by introducing
the definition of Approximate Attention Computation ($\mathsf{AAttC}$).

\begin{definition}[Approximate Attention Computation $\mathsf{AAttC}(n, d, B, \delta)$, Definition 1.2 in \cite{as23}]\label{def:aattc}
    Let $\delta > 0$. Let $R > 0$. Let $X \in \R^{n \times d}$ denote the input of the attention mechanism. Given three matrices $Q,K,V \in \R^{n \times d}$, with the guarantees that $\|Q\|_\infty \leq R$, $\|K\|_\infty \leq R$ and $\|V\|_\infty \leq R$, output a matrix $T \in \R^{n \times d}$ that approximately represents $\Attn(X)$, meaning
    \begin{align*}
        \| T - \Attn(X)\|_\infty \leq \delta
    \end{align*}
\end{definition}

Now, we are able to give the definition of Fa ast $\VAR$ transformer.
\begin{definition}[Fast $\VAR$ transformer]\label{def:fast_var_transformer}
We define fast $\VAR$ transformer as follows:
\begin{itemize}
    \item Assume the $\VAR$ transformer has $m$ attention layers.

    \item Let $\Phi_{\mathrm{up},r}$ denote the pyramid up-interpolation layer defined in Definition~\ref{def:pyramid_up_interpolation_layer}.
    \item  Let $\mathsf{AAttC}_i$ stand for the $i$-th approximate attention computation layer, which is defined in Definition~\ref{def:aattc}.
    \item Given  an input token map $X_{\mathrm{init}} \in \R^{1 \times d}$.
    \item Let $l = \sum_{i=1}^K h_i w_i$
\end{itemize}
     We define the fast $\VAR$ transformer as the following
\begin{align*}
     \mathsf{FTF}(X_{\mathrm{init}}) :=&~  \mathsf{AAttC}_K \circ \Phi_{ \mathrm{up}, K-1} \circ \cdots \circ \mathsf{AAttC}_2 \circ \Phi_{ \mathrm{up}, 1} \\ \circ &~\mathsf{AAttC}_1  (X_{ \mathrm{init} }) \in \R^{l \times d},
\end{align*}
In this expression, $\circ$ stands for functional composition.
\end{definition}

Next, we state a result for the Approximate Attention Computation (\textsf{AAttC}) from~\cite{as23}.

\begin{lemma}[Theorem 4.6 in \cite{as23}]\label{lem:lower_bound_attc}\label{sec:attc}
Suppose $d = O(\log n)$ and $R = \Theta(\sqrt{\log n})$. Assuming $\mathsf{SETH}$, there is no algorithm that solves the Approximate Attention Computation $(\mathsf{AAttC})$ up to $1/\poly(n)$ additive error in $O(n^{4-\Omega(1)})$ time.
\end{lemma}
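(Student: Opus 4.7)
The lemma is exactly Theorem 4.6 of \cite{as23}, so the plan is to invoke it directly and, for context, sketch the reduction that underlies it. The starting point is the standard fine-grained chain beginning with $\mathsf{SETH}$: by the sparsification lemma, $\mathsf{SETH}$ implies the Orthogonal Vectors (OV) conjecture, namely that for every $\varepsilon > 0$ there is a constant $c$ such that no algorithm decides in time $O(N^{2-\varepsilon})$ whether two given sets $A, B$ of $N$ vectors in $\{0,1\}^{c \log N}$ contain an orthogonal pair.

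Next, I would embed an OV instance into an instance of $\mathsf{AAttC}$. The crux is to take $d = \Theta(\log n)$ and construct $Q, K \in \R^{n \times d}$ whose rows encode the vectors of $A$ and $B$ (appropriately shifted and scaled) so that $\exp(Q_{i,*} K_{j,*}^{\top})$ separates the orthogonal case from the non-orthogonal case by at least a $\poly(n)$ multiplicative factor, while still obeying $\|Q\|_\infty, \|K\|_\infty \leq R = \Theta(\sqrt{\log n})$. Choosing $V$ as a suitable indicator matrix, the softmax-normalized output $\Attn(X)$ differs in the two cases by an additive $\Omega(1/\poly(n))$ in the $\ell_\infty$ norm, so any $\delta = 1/\poly(n)$ approximation is enough to decide OV. Since the reduction is size-preserving up to polylogarithmic factors, a hypothetical approximation algorithm for $\mathsf{AAttC}$ running in truly sub-quartic time would translate into a truly sub-quadratic algorithm for OV at the corresponding parameter $N$, contradicting $\mathsf{SETH}$.

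The main obstacle in carrying out the underlying argument (which \cite{as23} executes in full detail) is the calibration of the encoding: one must design the shift and scale so that the inner-product gap between orthogonal and non-orthogonal pairs, once exponentiated, is large enough to survive softmax normalization, while simultaneously keeping $\|Q\|_\infty$ and $\|K\|_\infty$ on the tight order $\sqrt{\log n}$. This tightness is precisely what makes $R = \Theta(\sqrt{\log n})$ the sharp threshold appearing in the statement: any looser control on the encoding would either yield a weaker lower bound or push the instance into the regime where the polynomial-method algorithm of \cite{aa22} applies, and the matching upper bound of Theorem~\ref{thm:upper_bound:informal} confirms that this threshold cannot be improved.
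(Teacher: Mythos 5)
Your proposal matches the paper's treatment: this lemma is imported verbatim as Theorem~4.6 of \cite{as23}, and the paper gives no proof of its own, so invoking the cited result directly is exactly what is done here. One small caveat on your optional sketch: the reduction actually carried out in \cite{as23} goes through the $\mathsf{SETH}$-hardness of the \emph{approximate} Hamming nearest-neighbor (gap closest-pair) problem due to Rubinstein, rather than through exact Orthogonal Vectors, since the gap version is what survives the softmax normalization and the $1/\poly(n)$ additive error; this does not affect the correctness of your plan, because you are citing the theorem rather than reproving it.
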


\subsection{Computational Limits of Fast \texorpdfstring{$\VAR$}{} Models}\label{sec:var-limits}

We now present a main theorem detailing the lower bound for $\VAR$ model computation.

\begin{theorem}[Computational Limits of Fast $\VAR$ Models, formal version of Theorem~\ref{thm:lower_bound:informal}]\label{thm:lower_bound:formal}
Suppose $d = O(\log n)$ and $R = \Theta(\sqrt{\log n})$. Assuming $\mathsf{SETH}$, there is no algorithm that approximates the $\VAR$ model up to $1/\poly(n)$ additive error in $O(n^{4-\Omega(1)})$ time.
\end{theorem}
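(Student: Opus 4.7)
The plan is to reduce the Approximate Attention Computation $\AAttC$ problem, for which Lemma~\ref{lem:lower_bound_attc} supplies a $\mathsf{SETH}$-based $n^{4-\Omega(1)}$ lower bound in the parameter window $d = O(\log n)$, $R = \Theta(\sqrt{\log n})$, to the approximation of the Fast $\VAR$ transformer of Definition~\ref{def:fast_var_transformer}. Concretely, I want to show that any algorithm that approximates $\mathsf{FTF}(X_{\mathrm{init}})$ up to $1/\poly(n)$ additive error in truly sub-quartic time $O(n^{4-\Omega(1)})$ can be converted, with at most $O(n^{2+o(1)})$ reduction overhead, into a truly sub-quartic algorithm for $\AAttC$ on $\Theta(n^2)$ tokens, which would contradict Lemma~\ref{lem:lower_bound_attc}.

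The key structural observation is that the final layer $\AAttC_K$ of $\mathsf{FTF}$ acts on a flattened pyramid of length $l = \sum_{k=1}^K h_k w_k$, and since the pyramid sizes grow geometrically to $h_K = w_K = n$ this sum is $\Theta(n^2)$. So the final layer is literally an $\AAttC$ call on $\Theta(n^2)$ tokens inside the parameter window of Lemma~\ref{lem:lower_bound_attc}. Given a hard $\AAttC$ instance with input matrices $Q, K, V$ in this regime, I would set the weight matrices $W_Q^{(K)}, W_K^{(K)}, W_V^{(K)}$ of the final layer so that, applied to the incoming $Z_{K-1}$, the resulting query/key/value matrices coincide with $Q, K, V$; the remaining freedom in $X_{\mathrm{init}}$ and in the weights of the preceding attention and $\Phi_{\mathrm{up},r}$ layers is then used to shape $Z_{K-1}$ into the required input pattern. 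Because the hard $\AAttC$ instances of \cite{as23} arise from orthogonal-vector reductions whose $Q, K, V$ matrices have a very regular block structure, the composition of bicubic up-interpolation with the intermediate $\AAttC_j$ layers is flexible enough to realize these patterns while respecting the entry bound $R = \Theta(\sqrt{\log n})$. The $1/\poly(n)$ approximation tolerance on the whole $\VAR$ output then transfers directly to the final-layer output, matching the tolerance demanded by Lemma~\ref{lem:lower_bound_attc}.

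The main obstacle I expect is precisely this embedding step: every intermediate state $Z_k$ is deterministically generated from the single-token input $X_{\mathrm{init}}$ by a thin stack whose total parameter count is only $O(\poly \log n)$, whereas the hard $\AAttC$ family carries $\Theta(n^2 d)$ bits of structure. I therefore need to either (i) verify that the restricted family of $Z_{K-1}$'s reachable by $\mathsf{FTF}$ is nonetheless broad enough to host the hard family of \cite{as23} after the final $W_Q^{(K)}, W_K^{(K)}, W_V^{(K)}$ projection, or (ii) side-step the capacity gap by redoing the $\mathsf{SETH}$-reduction from $k$-SAT or orthogonal vectors directly inside $\VAR$, exploiting that the adversarial inputs there already have a highly regular block form that can be emitted incrementally by pyramid growth. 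Once either route is established, the implication chain ``fast $\VAR$ approximator $\Rightarrow$ fast $\AAttC$ approximator $\Rightarrow$ violation of $\mathsf{SETH}$'' closes the proof.
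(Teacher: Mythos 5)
Your approach is essentially the paper's: its proof of Theorem~\ref{thm:lower_bound:formal} likewise applies Lemma~\ref{lem:lower_bound_attc} to the final attention layer, whose input length is $L_K = \sum_{i=1}^K (\alpha^{i-1})^2 = \Theta(n^2)$, and concludes a cost of $\Omega(L_K^{2-q}\cdot d) = \Omega(n^{4-2q})$. The embedding obstacle you flag --- that $Z_{K-1}$ is generated from a single token $X_{\mathrm{init}}$ by a stack with only $\poly\log n$ parameters and so may not be able to realize the hard $\AAttC$ instances at the last layer --- is not resolved in the paper either; its proof simply asserts that the $K$-th step ``must compute attention'' at this cost without exhibiting the reduction, so you have correctly identified the missing step rather than overlooked an argument present in the text.
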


\begin{proof}
    By Lemma~\ref{lem:lower_bound_attc}, in the $K$-th step ($K = \log_a n$), the $\VAR$ Transformer must compute attention with a computational cost at least
   \begin{align*}
       \Omega(L_K^{2-q} \cdot d )
       = &~ \Omega((\sum_{i=1}^K (\alpha^{i-1})^2)^{2-q} \cdot d) \\
       = &~ \Omega( (\frac{\alpha^{2K}-1}{\alpha^2 - 1})^{2-q} \cdot d) \\
       \geq &~ \Omega( (\alpha^{2K} - 1)^{2-q} \cdot d) \\
       \geq &~  \Omega(n^{4-2q} \, d).
   \end{align*}
   In the first step above, we use the definition of $L_K$. The second step applies the standard geometric series formula.  The third step involves basic algebra, and the final inequality is due to the fact $K = \log_a n$.
\end{proof}

In Theorem~\ref{thm:lower_bound:formal}, we show our hardness result. The theorem states that we can't accurately approximate ($\epsilon = 1/\poly(n)$) the output of $\VAR$ model in $O(n^{4-\Omega(1)})$ time when $\mathsf{SETH}$ holds. 
This implies that, under the condition of $\mathsf{SETH}$, achieving an efficient approximation algorithm for the $\VAR$ model within this time bound is infeasible. 
As a result, we set a lower bound on the complexity of approximating the output of the $\VAR$ model, indicating that any algorithm aimed at approximating the model with a small error will need to operate in at least $O(n^{4-\Omega(1)})$ time in the worst case.

\section{Provably Efficient Criteria}\label{sec:provably_efficient_criterial}
Section~\ref{sec:ec_running_time} details the running time of the fast $\VAR$ Transformer, feature map reconstruction layer, and Fast VQ-VAE Decoder. In Section~\ref{sec:ec_error_propagation_analysis}, we analyze the error propagation in both the Fast $\VAR$ Transformer and the Fast VQ-VAE Decoder. Section~\ref{sec:ea_existence_almost_quuadratic} presents our findings regarding the existence of an almost quadratic time algorithm.

\subsection{Running Time}\label{sec:ec_running_time}
Here, we present an overview of the computational cost associated with the Fast $\VAR$ Transformer, feature map reconstruction block, and Fast VQ-VAE Decoder.

Firstly, we show that the runtime of the $\VAR$ Transformer can be sped up to $O(n^{2+o(1)})$.

\begin{lemma}[Running time of Fast $\VAR$ Transformer, informal version of Lemma~\ref{lem:fast_var_transformer_formal}]\label{lem:fast_var_transformer_informal}
Assume the fast $\VAR$ transformer defined in Definition~\ref{def:fast_var_transformer} has $K$ attention layers. Let $k_1 \in [K]$             . Let $X_{\mathrm{init}} \in \R^{1\times 1 \times d}$ denote the first scale token map. Let $\alpha > 1$ denote the growth rate of the height and width of the token map at each level. Then for $k_1 \in [K]$, the $k_1$-th token map $r_{k_1} \in \R^{\alpha^{k_1-1} \times \alpha^{k_1-1} \times d}$.
Let $r_K \in \R^{n \times n \times d}$ denote the last scale token map, where $n = \alpha^{K-1}$. Let $d = O(\log n)$ denote the embedding size of each token. 

Then, the total runtime of the $\VAR$ Transformer for generating token maps can be accelerated to $O(n^{2+o(1)})$.
\end{lemma}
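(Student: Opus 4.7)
The plan is to decompose the runtime of the fast $\VAR$ transformer into the cost of the $K$ approximate attention calls and the $K-1$ pyramid up-interpolation layers, and then show that both contributions sum to $O(n^{2+o(1)})$ by exploiting the geometric growth of the token-map sizes. First I would observe that the flattened input $Z_{k-1}$ to the $k$-th attention layer, built from the first $k$ scales, has length $L_k = \sum_{r=1}^{k} h_r w_r = \sum_{r=1}^{k} \alpha^{2(r-1)}$. Since $\alpha > 1$ is a constant, this geometric sum satisfies $L_k = \Theta(\alpha^{2(k-1)})$, and in particular $L_K = \Theta(n^2)$ using $n = \alpha^{K-1}$, while $K = \Theta(\log n)$.

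Next, I would invoke the positive-direction companion of Lemma~\ref{lem:lower_bound_attc} from \cite{as23}: in the regime $d = O(\log n)$ and $R = o(\sqrt{\log n})$, a single $\mathsf{AAttC}$ call on a length-$L$ input runs in $L^{1+o(1)}$ time while meeting the $1/\poly(n)$ additive-error guarantee required by Definition~\ref{def:aattc}. Summing across the $K$ attention layers,
\begin{align*}
\sum_{k=1}^{K} L_k^{1+o(1)} \cdot d \;=\; O\!\left(L_K^{1+o(1)} \cdot d\right) \;=\; O(n^{2+o(1)}),
\end{align*}
because the geometric series is dominated by its last term and $d = O(\log n) = n^{o(1)}$.

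For each pyramid up-interpolation layer $\Phi_{\mathrm{up},k}$, Definition~\ref{def:up_inter_layer_one_step} shows that every output entry is a weighted sum over a fixed $4\times 4$ bicubic stencil, so computing $\phi_{\mathrm{up},r}(X_r)$ takes $O(h_{r+1} w_{r+1} d)$ time; summing over $r$ as another geometric series gives total interpolation cost $O(n^2 d) = O(n^{2+o(1)})$. Combining the attention and interpolation contributions yields the claimed bound.

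The main obstacle I anticipate is verifying that the input-magnitude precondition of the fast attention algorithm (namely, entries of the query/key/value matrices bounded by $R = o(\sqrt{\log n})$) is preserved from one layer to the next, since the approximate output of $\mathsf{AAttC}_{k}$ feeds, via $\Phi_{\mathrm{up},k}$, into $\mathsf{AAttC}_{k+1}$. Controlling these intermediate magnitudes uniformly over all $K = \Theta(\log n)$ layers is the sort of invariant that must be carried along by the error-propagation analysis developed in Section~\ref{sec:ec_error_propagation_analysis}; without such uniform control one cannot legitimately chain the per-layer $L_k^{1+o(1)}$ bound across all scales.
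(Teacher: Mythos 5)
Your proof matches the paper's argument (Lemma~\ref{lem:fast_var_transformer_formal}) essentially step for step: the same decomposition into attention and up-interpolation costs, the same bound $L_k = \Theta(\alpha^{2(k-1)})$ with $L_K = \Theta(n^2)$, the same invocation of the near-linear-time attention algorithm of \cite{as23} (Lemma~\ref{lem:as23_attention}), and the same geometric-series summation. Your closing remark about verifying that the entry-magnitude precondition $R = o(\sqrt{\log n})$ is preserved across the $K = \Theta(\log n)$ chained layers is a legitimate concern that the paper defers to its error-propagation section rather than addressing inside the running-time lemma, so flagging it is a reasonable addition rather than a divergence in approach.
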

\begin{proof}
    Please refer to Lemma~\ref{lem:fast_var_transformer_formal} for the proof's details.
\end{proof}

Then, we proceed to show that the runtime of the feature map reconstruction layer is $O(n^{2+o(1)})$.

\begin{lemma}[Running time of Feature Map Reconstruction Layer, informal version of Lemma~\ref{lem:run_time_fm_reconstruction_formal}]\label{lem:run_time_fm_reconstruction_informal}
Let $X_{\mathrm{init}} \in \R^{1 \times 1 \times d}$ denote the initial token map.
Let $\alpha > 1$ denote the growth rate of the height and width of the token map at each level. Then for $k \in [K]$, the $k$-th token map $r_k \in \R^{\alpha^{k-1} \times \alpha^{k-1} \times d}$.
Let $r_K \in \R^{n \times n \times d}$ denote the last scale token map, where $n = \alpha^{K-1}$.
Let $d = O(\log n)$ denote the embedding size of each token.

Then, the total runtime of the Feature Map Reconstruction Layer is $O(n^{2+o(1)})$.
\end{lemma}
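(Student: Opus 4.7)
The plan is to decompose the Feature Map Reconstruction Layer into its elementary modules as described in Section~\ref{sec:phase_2}, bound the cost of each module at a fixed scale, and then sum over the $K = \Theta(\log_\alpha n) = O(\log n)$ scales. Specifically, on scale $k$ the module does three things: (i) it applies the up-interpolation layer $\phi_{\mathrm{up},r}$ from Definition~\ref{def:up_inter_layer_one_step} repeatedly until the $k$-th token map of shape $\alpha^{k-1} \times \alpha^{k-1} \times d$ is brought up to the final resolution $n \times n \times d$; (ii) it applies a convolution block $\phi_{\mathrm{conv}}$ from Definition~\ref{def:conv_layer} at that resolution to correct the interpolation loss; and (iii) it sums the $K$ resulting tensors into a single feature map. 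Because no attention is invoked here, the analysis is purely a matter of counting arithmetic operations, so we do not need the $\mathsf{AAttC}$ machinery of Section~\ref{sec:hardness_of_approximate_attention_computation}.

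For the per-scale cost, the key observation is locality. By Definition~\ref{def:up_inter_layer_one_step}, each output entry of the bicubic up-interpolation depends on a $4\times 4$ window of input entries via the constant-support kernel $W$ from Definition~\ref{def:bi_spline_kernel}, so producing an output tensor of shape $n \times n \times d$ costs $O(n^2 d)$, irrespective of the starting resolution (one can either interpolate to $n \times n$ in one call or telescope through the intermediate resolutions $\alpha^{k}, \alpha^{k+1}, \ldots, n$; in either case the dominant cost is the final $O(n^2 d)$ pass, with earlier passes contributing a geometric series bounded by the same order). Similarly, by Definition~\ref{def:conv_layer}, the convolution uses a $3 \times 3 \times d$ kernel with $c_{\mathrm{out}} = O(d)$ output channels, and at each of the $n^2$ spatial positions it performs $O(d^2)$ multiply-adds, yielding a total of $O(n^2 d^2)$ per scale. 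The final summation of $K$ tensors of shape $n \times n \times d$ costs $O(K n^2 d)$.

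Aggregating over all $K$ scales gives a total runtime of
\begin{align*}
O\!\left( K \cdot n^2 d^2 \right) = O\!\left( \log n \cdot n^2 \cdot \log^2 n \right) = O\!\left( n^2 \log^3 n \right) = O(n^{2+o(1)}),
\end{align*}
using the hypotheses $K = O(\log n)$ and $d = O(\log n)$. The formal version of the lemma should simply assemble these three ingredients with explicit constants, referencing Definitions~\ref{def:up_inter_layer_one_step} and \ref{def:conv_layer} and the bound $K = \log_\alpha n + 1$ implied by $n = \alpha^{K-1}$.

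The main thing to be careful about, rather than a genuine obstacle, is bookkeeping for the up-interpolation when a token map has to traverse several doublings before reaching scale $K$: one must confirm that the cost at intermediate resolutions $\alpha^{k}, \alpha^{k+1}, \ldots, n$ forms a geometric series dominated by the final term $O(n^2 d)$, so that no hidden $K$-factor appears inside each scale. Once that is checked, everything else is a routine arithmetic aggregation, and the bound $O(n^{2+o(1)})$ drops out immediately from $K, d = O(\log n)$.
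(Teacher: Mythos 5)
Your proposal is correct and follows essentially the same route as the paper's formal proof (Lemma~\ref{lem:run_time_fm_reconstruction_formal}): bound the per-scale cost of up-interpolation at $O(n^2 d)$, convolution at $O(n^2 d^2)$, and the final summation at $O(K n^2 d)$, then multiply by $K = O(\log n)$ scales and use $d = O(\log n)$ to conclude $O(n^{2+o(1)})$. The paper performs each up-interpolation as a single call directly to resolution $n \times n$, so the telescoping concern you raise does not arise there, but your geometric-series argument covers that variant correctly anyway.
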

\begin{proof}
    Please refer to Lemma~\ref{lem:run_time_fm_reconstruction_formal} for the proof's details.
\end{proof}

Finally, we show that the runtime of the VQVAE Decoder can be sped up to $O(n^{2+o(1)})$.

\begin{lemma}[Running time of Fast VQ-VAE Decoder, informal version of Lemma~\ref{lem:fast_decoder_formal}]\label{lem:fast_decoder_informal}
Let $k_1,k_2,k_3 \in \mathbb{N}$ be constant numbers. Given $X \in \R^{n \times n \times d}$ as the input feature map. Let $d = O(\log n)$. Assume that there are $k_1$ up-interpolation layers $\phi_{\rm up}$ defined in Definition~\ref{def:up_inter_layer_one_step}. Given a feature map $M \in \R^{h \times w \times d}$.   
For $i \in [k_1]$, we assume $i$-th up-interpolation layer's output $\phi^i_{\rm up}(M) \in \R^{O(h) \times O(w) \times d}$. We assume there are $k_2$ approximate attention layers $\AAttC$ defined in Definition~\ref{def:aattc}. Given a feature map $M \in \R^{h \times w \times d}$.   For $i \in [k_1]$, the $i$-th approximate attention layer's output $\AAttC(M) \in \R^{h \times w \times d}$. We assume there are $k_3$ convolution layers $\phi_{\rm conv}$ defined in Definition~\ref{def:conv_layer}. Given a feature map $M \in \R^{h \times w \times d}$.   
For $i \in [k_1]$, we assume $i$-th convolution layer's output $\phi^i_{\rm conv}(M) \in \R^{h \times w \times O(d)}$.

Then, the total runtime of the VQ-VAE decoder can be accelerated to $O(n^{2+o(1)})$.
\end{lemma}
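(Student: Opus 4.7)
The plan is to bound the running time contributed by each of the three constituent layer types (up-interpolation, approximate attention, and convolution), and then observe that, because $k_1, k_2, k_3$ are constants, summing over all layers preserves the $O(n^{2+o(1)})$ bound.

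First I would track the spatial dimensions through the composition. The input to the decoder has spatial shape $n \times n$. Each up-interpolation layer multiplies the height and width by a constant factor, each convolution preserves the spatial shape, and each approximate attention layer also preserves the spatial shape. Since there are only $k_1 = O(1)$ up-interpolation layers, the spatial dimensions at every intermediate point are upper bounded by $\Theta(n) \times \Theta(n)$, with the channel dimension bounded by $\Theta(d) = \Theta(\log n)$ throughout (since $k_3$ convolutions each blow the channel count up by a constant factor). This bookkeeping step is routine but essential: it lets us plug $h, w = O(n)$ and $d = O(\log n)$ into the per-layer cost bounds.

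Next I would bound each layer's cost. For a single up-interpolation layer, Equation~\eqref{eq:ui_entry_cal} shows that every output entry requires a constant-size ($4 \times 4$) bicubic stencil, so the cost is $O(hw d) = O(n^2 \log n) = O(n^{2+o(1)})$. For a single convolution layer as in Definition~\ref{def:conv_layer}, every output entry requires a $3 \times 3 \times c_{\mathrm{in}}$ stencil, giving cost $O(hw \cdot c_{\mathrm{in}} c_{\mathrm{out}}) = O(n^2 \log^2 n) = O(n^{2+o(1)})$. The most delicate step is the approximate attention layer: Definition~\ref{def:aattc} treats attention on a sequence of length $N$, and in the decoder the ``sequence'' is the flattened feature map of length $N = hw = O(n^2)$ with hidden dimension $d = O(\log n) = O(\log N)$. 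Under the regime $R = o(\sqrt{\log n}) \subseteq o(\sqrt{\log N})$, the positive result underlying $\AAttC$ (the companion to Lemma~\ref{lem:lower_bound_attc}) yields an $N^{1+o(1)} = n^{2+o(1)}$ algorithm, which is exactly what we need.

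Finally, I would sum the contributions: with $k_1 + k_2 + k_3 = O(1)$ layers in total and each contributing $O(n^{2+o(1)})$ work, the total runtime is $O(n^{2+o(1)})$, as claimed. The main obstacle — more of a conceptual check than a technical difficulty — is to justify invoking the almost-linear-time attention algorithm on a sequence of length $\Theta(n^2)$ rather than $\Theta(n)$; one must verify that the parameter regime $d = O(\log n)$ and $R = o(\sqrt{\log n})$ still falls inside the admissible regime $d' = O(\log N)$, $R' = o(\sqrt{\log N})$ when $N = \Theta(n^2)$, which it does since $\log N = \Theta(\log n)$. Everything else reduces to the routine per-layer accounting sketched above, deferred to Lemma~\ref{lem:fast_decoder_formal}.
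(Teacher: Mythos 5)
Your proposal is correct and follows essentially the same route as the paper's proof of Lemma~\ref{lem:fast_decoder_formal}: bound the up-interpolation and convolution layers at $O(n^{2+o(1)})$ each, flatten the feature map into a sequence of length $N = O(n^2)$ so that the almost-linear-time attention algorithm (Lemma~\ref{lem:as23_attention}) gives $N^{1+o(1)} = n^{2+o(1)}$ per attention layer, and sum over the constantly many layers. Your explicit check that the parameter regime survives the substitution $N = \Theta(n^2)$ (since $\log N = \Theta(\log n)$) is a point the paper leaves implicit, but it does not change the argument.
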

\begin{proof}
    Please refer to Lemma~\ref{lem:fast_decoder_formal} for the proof's details.
\end{proof}

\subsection{Error Propagation Analysis}\label{sec:ec_error_propagation_analysis}
In this section, we present an error analysis introduced by the fast algorithm applied to the $\VAR$ and VQVAE models.

Firstly, we can show that the model output error introduced by the fast algorithm for $\VAR$ will not exceed $1/\poly(n)$.

\begin{lemma} [Error analysis for the Fast $\VAR$ Transformer, informal version of Lemma~\ref{error_analysis_fast_var_transformer_formal}] \label{error_analysis_fast_var_transformer_informal}
If the following conditions hold:
\begin{itemize}
    \item Let $X_{\mathrm{init}} \in \R^{1 \times d}$ denote the initial input token map.
    
    \item Let $K \in \mathbb{N}$ represent the number of approximate attention layers in the $\VAR$ model.

    \item Let the $\VAR$ transformer  $\mathsf{TF}_K$ be defined as Definition~\ref{def:var_transformer}.
    
    \item Let the Fast $\VAR$ Transformer  $\mathsf{FTF}_K$ as given in Definition~\ref{def:fast_var_transformer}.
    
    \item For $i \in [K]$, let $\mathsf{TF}_i(X_{\mathrm{init}})$ denote the output of the i-th iteration of the $\VAR$ transformer.
    \item For $i \in [K]$, let $\mathsf{FTF}_i(X_{\mathrm{init}})$ denote the output of the i-th iteration of the fast $\VAR$ transformer.
    \item Let $\mathsf{TF}_K(X_{\mathrm{init}}) \in \R^{O(n^2) \times d}$ denote the final output of the $\VAR$ transformer.
    \item Let $\mathsf{FTF}_K(X_{\mathrm{init}}) \in \R^{O(n^2) \times d}$ denote the final output of the fast $\VAR$ transformer.
    \item Assume each entry in the matrices can be represented using $O(\log n)$ bits. 
    \item Let $U,V \in \R^{n \times k}$ be low-rank matrices constructed for polynomial approximation of attention matrix $\AAttC(X)$.
    \item Let $f$ be a polynomial with degree $g$.
\end{itemize}

Then, we can show that the error bound of the final output $\mathsf{FTF}_K(X_{\mathrm{init}})$ as
\begin{align*}
    \| \mathsf{FTF}_K(X_{\mathrm{init}}) - \mathsf{TF}_K(X_{\mathrm{init}}) \|_\infty \leq 1/\poly(n)
\end{align*}
\end{lemma}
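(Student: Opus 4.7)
The plan is to induct on the layer index $i \in [K]$, tracking $\epsilon_i := \|\mathsf{FTF}_i(X_{\mathrm{init}}) - \mathsf{TF}_i(X_{\mathrm{init}})\|_\infty$ and showing it satisfies a linear recurrence. At the base $i = 1$, both transformers consume the same $X_{\mathrm{init}}$ and differ only in replacing $\mathsf{Attn}_1$ by $\mathsf{AAttC}_1$; Definition~\ref{def:aattc} then gives $\epsilon_1 \leq \delta$, where $\delta$ is the tunable additive-error parameter of approximate attention, which can be driven to $1/\poly(n)$ via the low-rank polynomial approximation constructed from $U, V$ and $f$ of degree $g$ at polylogarithmic overhead.

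For the inductive step, between layers $i$ and $i+1$ the state is first passed through the pyramid up-interpolation $\Phi_{\mathrm{up}, i}$ and then through $\mathsf{AAttC}_{i+1}$. By Definitions~\ref{def:up_inter_layer_one_step} and~\ref{def:bi_spline_kernel}, each entry of $\Phi_{\mathrm{up}, i}(X)$ is a sum of at most $16$ entries of $X$ with coefficients in $[0,1]$, so up-interpolation is $O(1)$-Lipschitz in $\ell_\infty$ and inflates the incoming error by at most a constant. For the attention piece, writing the layer as $D^{-1} A X W_V$ with $A_{i,j} = \exp(X_{i,*} W_Q W_K^\top X_{j,*}^\top)$, the assumptions $d = O(\log n)$ and $R = \Theta(\sqrt{\log n})$ bound the pre-softmax logits by $O(dR^2) = O(\log^2 n)$, and differentiating through $\exp$ and the normalizer $D^{-1}$ gives an $\ell_\infty$-Lipschitz constant $L$ for one exact attention layer; additionally, $\mathsf{AAttC}_{i+1}$ contributes an extra additive $\delta$. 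Combining these yields
\begin{align*}
\epsilon_{i+1} \leq O(1) \cdot L \cdot \epsilon_i + \delta.
\end{align*}

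Unrolling over $K = \log_\alpha n = O(\log n)$ layers, we obtain $\epsilon_K \leq (O(L))^K \cdot \delta$. A naive evaluation gives $L \leq n^{O(\log n)}$ and hence $\epsilon_K \leq n^{O(\log^2 n)} \cdot \delta$, so choosing $\delta$ sufficiently small (still representable in $O(\log n)$-bit arithmetic, as assumed in the lemma) yields $\epsilon_K \leq 1/\poly(n)$, completing the induction. The main obstacle I anticipate is a clean control of the Lipschitz constant $L$ of one attention layer in $\ell_\infty$: the naive exponential bound produces a final amplification of $n^{\Theta(\log^2 n)}$ across depth, which, while absorbable into the choice of $\delta$, is quite loose. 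A sharper argument would exploit the cancellation between the numerator $A$ and the row-normalizer $D$ in $D^{-1} A$ to obtain a polynomial-in-$n$ Lipschitz constant, and must simultaneously carry an inductive $\ell_\infty$ bound on $\mathsf{FTF}_i(X_{\mathrm{init}})$ so that the input-range assumption $R = \Theta(\sqrt{\log n})$ required by $\mathsf{AAttC}_{i+1}$ continues to hold at every successive layer.
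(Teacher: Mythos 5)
Your induction skeleton matches the paper's: the base case follows from the $1/\poly(n)$ guarantee of $\AAttC$ (Lemma~\ref{lem:as23_attention}), and the up-interpolation layer is handled exactly as in the paper's Lemma~\ref{lem:ea_up_layer} (each output entry is a $[0,1]$-weighted sum of $16$ input entries, hence $O(1)$-Lipschitz in $\ell_\infty$). The genuine gap is in how you propagate error through the attention layer. You bound the $\ell_\infty$-Lipschitz constant $L$ of the \emph{exact} softmax attention $D^{-1}AXW_V$ by differentiating through $\exp$ and the normalizer, obtaining $L = n^{O(\log n)}$ and a total amplification $n^{O(\log^2 n)}$ over $K = O(\log n)$ layers, and you propose to absorb this by taking $\delta$ small. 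That escape hatch is not available: a $\delta = n^{-\Theta(\log^2 n)}$ is not representable in the $O(\log n)$-bit arithmetic the lemma assumes, and, more importantly, the $n^{1+o(1)}$-time guarantee of Lemma~\ref{lem:as23_attention} only delivers $\delta = 1/\poly(n)$ --- driving $\delta$ superpolynomially small forces the degree $g$ and the rank $k$ of the low-rank factors $U,V$ to grow, which destroys the $O(n^{2+o(1)})$ running time that Theorem~\ref{thm:upper_bound:formal} needs this error bound to be compatible with. So as written the recursion does not close.

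The paper's fix is essentially the ``sharper argument'' you anticipate, but it does not go through cancellation in $D^{-1}A$; it never takes a Lipschitz constant of the exact attention at all. Instead it splits $\| \AAttC(X') - \Attn(X) \|_\infty \le \| \AAttC(X') - \AAttC(X) \|_\infty + \| \AAttC(X) - \Attn(X) \|_\infty$ (Lemma~\ref{lem:polynomial_approximation_error_propagation}). The second term is the $1/\poly(n)$ of Lemma~\ref{lem:as23_attention}; the first is controlled by the Lipschitz constant of the \emph{approximation} $UV^\top$, whose entries are values of the degree-$g$ polynomial $f$ on $R$-bounded inputs, giving $O(R^{g-1})$ per entry (Lemma~\ref{lem:polynomial_lipschitz}) and $O(kR^{g+1}d)\cdot\epsilon$ overall (Lemmas~\ref{lem:inner_product_error_propagation} and~\ref{lem:aattc_x_prime_aattc_x_error_analysis}) --- a per-layer factor of $n^{o(1)}$ rather than your $n^{O(\log n)}$. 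To repair your argument you would need to restructure the inductive step along these lines, and, as you correctly note, also carry an $\ell_\infty$ bound on the iterates so that the range-$R$ hypothesis persists at every layer.
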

\begin{proof}
    Please refer to Lemma~\ref{error_analysis_fast_var_transformer_formal} for the proof's details.
\end{proof}

\begin{remark}
    Since the modules in the Feature Map Reconstruction stage (Stage 2) only consist of an up-interpolation layer and a convolution layer, without any attention layers, the acceleration method proposed in this paper cannot be applied to this stage. Moreover, since the Feature Map Reconstruction phase only involves simple linear operations, it is trivial that this stage will not introduce more than $1/\poly(n)$ error.
\end{remark}

We also present that the model output error introduced by the fast algorithm for the VQ-VAE Decoder will not exceed $1/\poly(n)$.
\begin{lemma} [Error analysis of Fast VQ-VAE Decoder, informal version of Lemma~\ref{lem:error_analysis_fast_vae_decoder_formal}]\label{lem:error_analysis_fast_vae_decoder_informal}
Let $X \in \R^{n \times d}$ denote the input matrix.
Let the up-interpolation Layer $\phi_{\rm up}$ be defined as Definition~\ref{def:up_inter_layer_one_step}.
Let the convolution layer $\phi_{\rm conv}$ be defined as Definition~\ref{def:conv_layer}.
Let the attention layer $\Attn$ be defined as Definition~\ref{def:single_layer_attn}. Let the fast attention layer $\AAttC$ be defined as Definition~\ref{def:aattc}.
Let the VQ-VAE Decoder be the composition of a constant number of up-interpolation layers, convolutions layers, and attention layers. 
Let the Fast VQ-VAE Decoder be defined as substituting all $\Attn$ layers in VQ-VAE with $\AAttC$ layers. 

Then, we can show that the approximation error of the Fast VQ-VAE Decoder can be bounded by $1 / \poly(n)$. 
\end{lemma}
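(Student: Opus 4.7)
The plan is to proceed by induction on the depth of the VQ-VAE decoder, showing that the $\|\cdot\|_\infty$-distance between corresponding intermediate outputs of the Fast decoder and the exact decoder stays bounded by $1/\poly(n)$. Since the decoder has constant depth, each replacement of $\Attn$ with $\AAttC$ contributes at most $\delta = 1/\poly(n)$ of fresh error (by Definition~\ref{def:aattc}), and every layer type is Lipschitz continuous with a polynomially bounded constant under $\|\cdot\|_\infty$, the accumulated error remains $1/\poly(n)$. This mirrors the approach already used for the Fast $\VAR$ Transformer in Lemma~\ref{error_analysis_fast_var_transformer_informal}, but applied to the specific layer stack of the VQ-VAE decoder described in Section~\ref{sec:phase_3}.

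First I would establish per-layer Lipschitz bounds. For the up-interpolation layer $\phi_{\rm up}$, equation~\eqref{eq:ui_entry_cal} shows that each output entry is a sum of $16$ input entries weighted by values $W(\cdot) \in [0,1]$ (Definition~\ref{def:bi_spline_kernel}), so $\phi_{\rm up}$ has $\|\cdot\|_\infty$-Lipschitz constant at most $16$. For the convolution layer $\phi_{\rm conv}$ (Definition~\ref{def:conv_layer}), each output entry is a sum of $9\,c_{\rm in}$ weighted input entries, giving Lipschitz constant $O(c_{\rm in}\|K\|_\infty) = \poly(\log n)$. For the attention layer $\Attn$ (Definition~\ref{def:single_layer_attn}), a softmax perturbation argument using $d = O(\log n)$, $R = o(\sqrt{\log n})$, together with the row-normalization via $D^{-1}$, yields a Lipschitz constant $L_{\Attn} \le \poly(n)$.

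Then I would apply the triangle inequality layer-by-layer. Writing $Y$ for the exact input to a given layer and $\widetilde Y$ for its approximation produced by the Fast decoder, the attention step satisfies
\begin{align*}
    \|\AAttC(\widetilde Y) - \Attn(Y)\|_\infty
    \leq &~ \|\AAttC(\widetilde Y) - \Attn(\widetilde Y)\|_\infty + \|\Attn(\widetilde Y) - \Attn(Y)\|_\infty \\
    \leq &~ \delta + L_{\Attn} \cdot \|\widetilde Y - Y\|_\infty,
\end{align*}
while for the deterministic $\phi_{\rm up}$ and $\phi_{\rm conv}$ layers the first term vanishes and only the Lipschitz factor applies. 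Unrolling this recurrence over the constantly many layers of the decoder gives a bound of the form $C \cdot \delta$ where $C$ is a finite product of polynomial Lipschitz constants, and choosing the $\AAttC$ accuracy parameter as $\delta = 1/\poly(n)$ with a sufficiently large polynomial makes the final error at most $1/\poly(n)$.

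The main obstacle will be establishing a clean, usable Lipschitz bound for $\Attn$, because the exponentials in $A_{i,j} = \exp(X_{i,*} W_Q W_K^\top X_{j,*}^\top)$ can in principle blow up a perturbation. I would control this by combining two facts: (i) in the regime $d = O(\log n)$, $R = o(\sqrt{\log n})$, the exponent is $o(\log n)$ in absolute value, so all entries of $A$ are $\poly(n)$-bounded; and (ii) the $D^{-1}$ normalization converts multiplicative perturbations of $A$ into bounded multiplicative perturbations of $D^{-1}A$. A secondary issue is verifying that the intermediate features remain within the $\|\cdot\|_\infty \le R$ regime required by $\AAttC$; since at each stage the perturbation is only $1/\poly(n)$, the approximate features stay inside an inflated but still admissible range, which can be absorbed by choosing constants appropriately.
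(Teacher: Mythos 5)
Your proposal is correct and follows the same overall architecture as the paper's proof: the decoder has constant depth, each non-attention layer is Lipschitz in $\|\cdot\|_\infty$ with a benign constant, each $\Attn\to\AAttC$ substitution injects at most $\delta=1/\poly(n)$ fresh error, and unrolling the recurrence over constantly many layers multiplies the error by a finite product of $\poly(n)$ factors. Your Lipschitz bounds for $\phi_{\rm up}$ (at most $16$, from $W(\cdot)\in[0,1]$) and $\phi_{\rm conv}$ (at most $9c_{\rm in}R$) coincide with the paper's Lemma~\ref{lem:ea_up_layer} and Lemma~\ref{lem:error_analysis_of_convolution_layer}. The one genuine difference is at the attention step: you split $\|\AAttC(\widetilde Y)-\Attn(Y)\|_\infty$ through $\Attn(\widetilde Y)$, which forces you to prove Lipschitz continuity of the \emph{exact} softmax attention, whereas the paper splits through $\AAttC(Y)$ (Lemma~\ref{lem:polynomial_approximation_error_propagation}), invoking the $\AAttC$-vs-$\Attn$ guarantee at the exact input and needing only Lipschitzness of the \emph{approximate} attention $UV^\top$, which follows mechanically from the polynomial Lipschitz lemma and the inner-product perturbation bound (Lemmas~\ref{lem:polynomial_lipschitz}, \ref{lem:inner_product_error_propagation}, \ref{lem:aattc_x_prime_aattc_x_error_analysis}). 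The paper's decomposition buys a cleaner argument: it avoids your "main obstacle" entirely, and also avoids your secondary issue of invoking the $\AAttC$ accuracy guarantee at the perturbed input $\widetilde Y$, whose entries may slightly exceed the admissible range $R$. Your route is viable, but the softmax perturbation argument you sketch is the only substantive piece left unproved, and your claim that the entries of $A$ are $\poly(n)$-bounded needs care: with the paper's unnormalized exponent $X_{i,*}W_QW_K^\top X_{j,*}^\top$ the magnitude is only bounded by $O(dR^2)=o(\log^2 n)$, so you should either adopt the $/d$ scaling of \cite{as23} or carry the resulting $n^{o(\log n)}$ factor explicitly and absorb it by choosing $\delta$ smaller; as written, the bound "entries of $A$ are $\poly(n)$" does not follow.
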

\begin{proof}
    Please refer to Lemma~\ref{lem:error_analysis_fast_vae_decoder_formal} for the proof's details.
\end{proof}

\subsection{Existence of Almost Quadratic Time Algorithm}\label{sec:ea_existence_almost_quuadratic}
This section presents a theorem proving the existence of a quadratic-time algorithm that speeds up the $\VAR$ model and guarantees a bounded additive error.

\begin{theorem}[Existence of Almost Quadratic Time Algorithm, formal version of Theorem~\ref{thm:upper_bound:informal}]
\label{thm:upper_bound:formal}
Suppose $d = O(\log n)$ and $R = o(\sqrt{\log n})$. There is an algorithm that approximates the $\VAR$ model up to $1/\poly(n)$ additive error in $O(n^{2+o(1)})$ time.
\end{theorem}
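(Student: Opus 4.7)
The plan is to assemble the theorem by combining the per-stage runtime and error guarantees already established in Section~\ref{sec:ec_running_time} and Section~\ref{sec:ec_error_propagation_analysis}. Since the $\VAR$ pipeline decomposes into the three stages laid out in Section~\ref{sec:overall_architecture} (Token Maps Generation, Feature Map Reconstruction, and VQ-VAE Decoder), the strategy is to build a ``fast $\VAR$ model'' by replacing every $\mathsf{Attn}$ block with an $\mathsf{AAttC}$ block (Definition~\ref{def:aattc}) in Stages~1 and~3 while leaving Stage~2 unchanged, and then bound runtime and error additively across stages.

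First I would invoke the runtime lemmas in sequence. Lemma~\ref{lem:fast_var_transformer_informal} gives $O(n^{2+o(1)})$ time for Stage~1, Lemma~\ref{lem:run_time_fm_reconstruction_informal} gives $O(n^{2+o(1)})$ for Stage~2, and Lemma~\ref{lem:fast_decoder_informal} gives $O(n^{2+o(1)})$ for Stage~3. Since there are only three stages and they are composed sequentially, summing gives a total of $O(n^{2+o(1)})$. Note that these runtime lemmas rely implicitly on the polynomial kernel approximation from \cite{as23}, which requires the entry-bound hypothesis $R = o(\sqrt{\log n})$ and $d = O(\log n)$; these are exactly the hypotheses of the theorem, so the low-rank factorization $U,V \in \R^{n \times n^{o(1)}}$ underlying $\mathsf{AAttC}$ is available at each attention layer.

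Next I would chain the error lemmas. Lemma~\ref{error_analysis_fast_var_transformer_informal} bounds the $\ell_\infty$ error introduced by Stage~1 by $1/\poly(n)$. For Stage~2, as noted in the remark following Lemma~\ref{error_analysis_fast_var_transformer_informal}, the feature map reconstruction block contains only an up-interpolation layer (a bounded linear map given the bicubic kernel of Definition~\ref{def:bi_spline_kernel}) and a convolution layer (Definition~\ref{def:conv_layer}, also linear with constant kernel size), so a $1/\poly(n)$ input perturbation propagates to at most a $1/\poly(n)$ output perturbation. Finally, Lemma~\ref{lem:error_analysis_fast_vae_decoder_informal} bounds the additional error contributed by Stage~3 by $1/\poly(n)$, when taking into account the perturbation coming from its input. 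Summing three $1/\poly(n)$ contributions preserves the $1/\poly(n)$ bound, completing the guarantee.

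The main obstacle I anticipate is not in any single stage but in verifying that the $R = o(\sqrt{\log n})$ bound on entry magnitudes is preserved across compositions of $\mathsf{AAttC}$ layers interleaved with $\Phi_{\mathrm{up},k}$ and convolutional post-processing. Specifically, one must confirm that the output of each $\mathsf{AAttC}_k$ fed into the next layer still satisfies the entry-bound hypothesis so that Lemma~\ref{lem:lower_bound_attc}'s companion upper bound from \cite{as23} remains applicable at every attention layer; the up-interpolation step with a bicubic spline kernel (which has $W(x) \in [0,1]$) and the convolution with constant-size kernels both preserve entry magnitudes up to constant factors, so by rescaling weight matrices we can maintain $R = o(\sqrt{\log n})$ throughout. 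Once this invariant is verified layer-by-layer, the theorem follows directly from the composition of Lemmas~\ref{lem:fast_var_transformer_informal}, \ref{lem:run_time_fm_reconstruction_informal}, \ref{lem:fast_decoder_informal}, \ref{error_analysis_fast_var_transformer_informal}, and \ref{lem:error_analysis_fast_vae_decoder_informal}.
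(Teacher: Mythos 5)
Your proposal matches the paper's proof, which likewise just combines Lemmas~\ref{lem:fast_var_transformer_informal}, \ref{lem:run_time_fm_reconstruction_informal}, \ref{lem:fast_decoder_informal}, \ref{error_analysis_fast_var_transformer_informal}, and \ref{lem:error_analysis_fast_vae_decoder_informal} to bound runtime and error stage by stage. Your additional remark about verifying that the entry bound $R = o(\sqrt{\log n})$ is preserved across layer compositions is a reasonable point of care that the paper glosses over, but it does not change the route of the argument.
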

\begin{proof}
    By combining the result of Lemma~\ref{lem:fast_var_transformer_informal}, Lemma~\ref{lem:run_time_fm_reconstruction_informal}, Lemma~\ref{lem:fast_decoder_informal}, Lemma~\ref{error_analysis_fast_var_transformer_informal} and Lemma~\ref{lem:error_analysis_fast_vae_decoder_informal}, we can easily derive the proof.
\end{proof}
In Theorem~\ref{thm:upper_bound:formal}, we show that we can accurately approximates ($\epsilon = 1/\poly(n)$) the overall $\VAR$ model output in almost quadratic time $n^{2+o(1)}$ under practical assumptions. 
By Lemma~\ref{lem:fast_var_transformer_informal}, Lemma~\ref{lem:run_time_fm_reconstruction_informal} and Lemma~\ref{lem:fast_decoder_informal}, we know that the bottleneck in accelerating the runtime of the VAR model is the attention computation (The origin running time cost is $O(n^{4+o(1)})$). The insight of the theorem is that we apply approximate attention computation $\AAttC$ (The running time cost is $O(n^{2+o(1)})$) to replace the original attention computation $\Attn$.  In this way, our methods solve the $\VAR$ model acceleration. This result enables the $\VAR$ model to significantly accelerate the inference process in image generation, making it more competitive in the field of image generation.
\section{Discussion} \label{sec:discussion}
The fine-grained analysis of Visual Autoregressive ($\VAR$) models we provided in this paper uncovers the critical computational limitations and proposes criteria that ensure efficiency under the Strong Exponential Time Hypothesis ($\mathsf{SETH}$). The insights from this analysis are not only important for $\VAR$ models but also carry broader implications for the deep learning and machine learning communities as a whole. One of the key contributions of this work is that understanding the computational bottlenecks of $\VAR$ models allows us to more clearly delineate the theoretical boundaries of model performance, which in turn helps guide the design of future models.

By exploring the specific conditions under which the $\VAR$ models hit computational limits, it is important to identify and address these bottlenecks early in the model development process. This understanding can prevent the misallocation of resources toward achieving computational feats that are not feasible, particularly in the context of autoregressive models used for visual generation tasks. In particular, demonstrating that sub-quartic time complexity is unattainable when input matrices exceed a critical threshold provides a crucial reference point for the deep learning community. This knowledge empowers researchers to set realistic expectations regarding model efficiency and to focus their efforts on optimizations that are computationally viable.

This work provides a foundational framework for understanding and overcoming the computational bottlenecks in generative models. It will serve as a key resource for researchers striving to design the next generation of efficient autoregressive models. By addressing the limitations of current models and offering clear guidance on how to optimize them, we hope to inspire more efficient and scalable solutions for a wide array of machine-learning applications, extending far beyond visual generation.
\section{Conclusion}\label{sec:conclusion}
This paper provides a fine-grained complexity analysis of Visual Autoregressive ($\VAR$) models, identifying computational limits and efficient criteria under the Strong Exponential Time Hypothesis ($\mathsf{SETH}$). By rigorously analyzing computational trade-offs and proposing provably efficient criteria, this work establishes a foundational understanding that will guide the development of next-generation autoregressive models in visual generation. We demonstrate the infeasibility of achieving sub-quartic time complexity for $\VAR$ computations when the norm of input matrices exceeds a critical threshold. In contrast, we establish that sub-quadratic time approximations become feasible under carefully designed conditions, leveraging low-rank approximations.
In future works, we will explore the extension of these methods to other domains where autoregressive models play a pivotal role, such as text-to-image synthesis and multi-modal generation tasks. Additionally, integrating hardware acceleration strategies could further optimize the computational pipeline, broadening the applicability of $\VAR$ models in resource-constrained environments.

\ifdefined\isarxiv
\else 
\section*{Impact Statement}
This paper presents work whose goal is to advance the field of Machine Learning. There are many potential societal consequences of our work, none of which we feel must be specifically highlighted here.
\fi

\ifdefined\isarxiv
\bibliographystyle{alpha}
\bibliography{ref}
\else
\bibliography{ref}

\newcommand{\etalchar}[1]{$^{#1}$}
\begin{thebibliography}{VdOKE{\etalchar{+}}16}

\bibitem[AA22]{aa22}
Amol Aggarwal and Josh Alman.
\newblock Optimal-degree polynomial approximations for exponentials and gaussian kernel density estimation.
\newblock In {\em Proceedings of the 37th Computational Complexity Conference}, pages 1--23, 2022.

\bibitem[AI24]{llama3_blog}
Meta AI.
\newblock Introducing meta llama 3: The most capable openly available llm to date, 2024.
\newblock \url{https://ai.meta.com/blog/meta-llama-3/}.

\bibitem[AKH{\etalchar{+}}24]{akh+24}
Reza Azad, Amirhossein Kazerouni, Moein Heidari, Ehsan~Khodapanah Aghdam, Amirali Molaei, Yiwei Jia, Abin Jose, Rijo Roy, and Dorit Merhof.
\newblock Advances in medical image analysis with vision transformers: a comprehensive review.
\newblock {\em Medical Image Analysis}, 91:103000, 2024.

\bibitem[Ant24]{claude3_pdf}
Anthropic.
\newblock The claude 3 model family: Opus, sonnet, haiku, 2024.
\newblock \url{https://www-cdn.anthropic.com/de8ba9b01c9ab7cbabf5c33b80b7bbc618857627/Model_Card_Claude_3.pdf}.

\bibitem[AS23]{as23}
Josh Alman and Zhao Song.
\newblock Fast attention requires bounded entries.
\newblock {\em Advances in Neural Information Processing Systems}, 36, 2023.

\bibitem[AS24a]{as24_rope}
Josh Alman and Zhao Song.
\newblock Fast rope attention: Combining the polynomial method and fast fourier transform.
\newblock {\em manuscript}, 2024.

\bibitem[AS24b]{as24b}
Josh Alman and Zhao Song.
\newblock The fine-grained complexity of gradient computation for training large language models.
\newblock In {\em The Thirty-eighth Annual Conference on Neural Information Processing Systems}, 2024.

\bibitem[AS24c]{as24_iclr}
Josh Alman and Zhao Song.
\newblock How to capture higher-order correlations? generalizing matrix softmax attention to kronecker computation.
\newblock In {\em The Twelfth International Conference on Learning Representations}, 2024.

\bibitem[AWT{\etalchar{+}}24]{awt+24}
Tej~D Azad, Anmol Warman, Jovanna~A Tracz, Liam~P Hughes, Brendan~F Judy, and Timothy~F Witham.
\newblock Augmented reality in spine surgery--past, present, and future.
\newblock {\em The Spine Journal}, 24(1):1--13, 2024.

\bibitem[BNX{\etalchar{+}}23]{bnx+23}
Fan Bao, Shen Nie, Kaiwen Xue, Yue Cao, Chongxuan Li, Hang Su, and Jun Zhu.
\newblock All are worth words: A vit backbone for diffusion models.
\newblock In {\em Proceedings of the IEEE/CVF conference on computer vision and pattern recognition}, pages 22669--22679, 2023.

\bibitem[CGX{\etalchar{+}}25]{cgx+25}
Junsong Chen, Chongjian Ge, Enze Xie, Yue Wu, Lewei Yao, Xiaozhe Ren, Zhongdao Wang, Ping Luo, Huchuan Lu, and Zhenguo Li.
\newblock Pixart-$sigma$ : Weak-to-strong training of diffusion transformer for 4k text-to-image generation.
\newblock In {\em European Conference on Computer Vision}, pages 74--91. Springer, 2025.

\bibitem[CHL{\etalchar{+}}24]{chl+24_rope}
Yifang Chen, Jiayan Huo, Xiaoyu Li, Yingyu Liang, Zhenmei Shi, and Zhao Song.
\newblock Fast gradient computation for rope attention in almost linear time.
\newblock {\em arXiv preprint arXiv:2412.17316}, 2024.

\bibitem[CLS{\etalchar{+}}24]{cls+24}
Bo~Chen, Yingyu Liang, Zhizhou Sha, Zhenmei Shi, and Zhao Song.
\newblock Hsr-enhanced sparse attention acceleration.
\newblock {\em arXiv preprint arXiv:2410.10165}, 2024.

\bibitem[CMRA18]{cmr+18}
Xi~Chen, Nikhil Mishra, Mostafa Rohaninejad, and Pieter Abbeel.
\newblock Pixelsnail: An improved autoregressive generative model.
\newblock In {\em International conference on machine learning}, pages 864--872. PMLR, 2018.

\bibitem[Doe16]{doe16}
Carl Doersch.
\newblock Tutorial on variational autoencoders.
\newblock {\em arXiv preprint arXiv:1606.05908}, 2016.

\bibitem[DYH{\etalchar{+}}21]{dyh+21}
Ming Ding, Zhuoyi Yang, Wenyi Hong, Wendi Zheng, Chang Zhou, Da~Yin, Junyang Lin, Xu~Zou, Zhou Shao, Hongxia Yang, et~al.
\newblock Cogview: Mastering text-to-image generation via transformers.
\newblock {\em Advances in neural information processing systems}, 34:19822--19835, 2021.

\bibitem[DZHT22]{dzht22}
Ming Ding, Wendi Zheng, Wenyi Hong, and Jie Tang.
\newblock Cogview2: Faster and better text-to-image generation via hierarchical transformers.
\newblock {\em Advances in Neural Information Processing Systems}, 35:16890--16902, 2022.

\bibitem[ERO21]{ero21}
Patrick Esser, Robin Rombach, and Bjorn Ommer.
\newblock Taming transformers for high-resolution image synthesis.
\newblock In {\em Proceedings of the IEEE/CVF conference on computer vision and pattern recognition}, pages 12873--12883, 2021.

\bibitem[GLD{\etalchar{+}}25]{gld+25}
Hang Guo, Jinmin Li, Tao Dai, Zhihao Ouyang, Xudong Ren, and Shu-Tao Xia.
\newblock Mambair: A simple baseline for image restoration with state-space model.
\newblock In {\em European Conference on Computer Vision}, pages 222--241. Springer, 2025.

\bibitem[GPAM{\etalchar{+}}20]{gpm+20}
Ian Goodfellow, Jean Pouget-Abadie, Mehdi Mirza, Bing Xu, David Warde-Farley, Sherjil Ozair, Aaron Courville, and Yoshua Bengio.
\newblock Generative adversarial networks.
\newblock {\em Communications of the ACM}, 63(11):139--144, 2020.

\bibitem[HJA20]{hja20}
Jonathan Ho, Ajay Jain, and Pieter Abbeel.
\newblock Denoising diffusion probabilistic models.
\newblock {\em Advances in neural information processing systems}, 33:6840--6851, 2020.

\bibitem[HSC{\etalchar{+}}22]{hsc+22}
Jonathan Ho, Chitwan Saharia, William Chan, David~J Fleet, Mohammad Norouzi, and Tim Salimans.
\newblock Cascaded diffusion models for high fidelity image generation.
\newblock {\em Journal of Machine Learning Research}, 23(47):1--33, 2022.

\bibitem[HWL{\etalchar{+}}24]{hwl+24}
Jerry Yao-Chieh Hu, Weimin Wu, Yi-Chen Lee, Yu-Chao Huang, Minshuo Chen, and Han Liu.
\newblock On statistical rates of conditional diffusion transformers: Approximation, estimation and minimax optimality.
\newblock {\em arXiv preprint arXiv:2411.17522}, 2024.

\bibitem[HWSL24]{hwsl24}
Jerry Yao-Chieh Hu, Weimin Wu, Zhao Song, and Han Liu.
\newblock On statistical rates and provably efficient criteria of latent diffusion transformers (dits).
\newblock {\em arXiv preprint arXiv:2407.01079}, 2024.

\bibitem[HYW{\etalchar{+}}24]{hyw+23}
Jerry Yao-Chieh Hu, Donglin Yang, Dennis Wu, Chenwei Xu, Bo-Yu Chen, and Han Liu.
\newblock On sparse modern hopfield model.
\newblock {\em Advances in Neural Information Processing Systems}, 36, 2024.

\bibitem[IP01]{ip01}
Russell Impagliazzo and Ramamohan Paturi.
\newblock On the complexity of k-sat.
\newblock {\em Journal of Computer and System Sciences}, 62(2):367--375, 2001.

\bibitem[LHC{\etalchar{+}}25]{lhc+25}
Xinqi Lin, Jingwen He, Ziyan Chen, Zhaoyang Lyu, Bo~Dai, Fanghua Yu, Yu~Qiao, Wanli Ouyang, and Chao Dong.
\newblock Diffbir: Toward blind image restoration with generative diffusion prior.
\newblock In {\em European Conference on Computer Vision}, pages 430--448. Springer, 2025.

\bibitem[LKK{\etalchar{+}}22]{lkk+22}
Doyup Lee, Chiheon Kim, Saehoon Kim, Minsu Cho, and Wook-Shin Han.
\newblock Autoregressive image generation using residual quantization.
\newblock In {\em Proceedings of the IEEE/CVF Conference on Computer Vision and Pattern Recognition}, pages 11523--11532, 2022.

\bibitem[LLL{\etalchar{+}}24]{lll+24}
Chenxin Li, Xinyu Liu, Wuyang Li, Cheng Wang, Hengyu Liu, Yifan Liu, Zhen Chen, and Yixuan Yuan.
\newblock U-kan makes strong backbone for medical image segmentation and generation.
\newblock {\em arXiv preprint arXiv:2406.02918}, 2024.

\bibitem[LLL{\etalchar{+}}25]{lll+25_loop}
Xiaoyu Li, Yingyu Liang, Jiangxuan Long, Zhenmei Shi, Zhao Song, and Zhen Zhuang.
\newblock Neural algorithmic reasoning for hypergraphs with looped transformers.
\newblock {\em arXiv preprint arXiv:2501.10688}, 2025.

\bibitem[LLS{\etalchar{+}}24a]{lls+24_conv}
Yingyu Liang, Heshan Liu, Zhenmei Shi, Zhao Song, and Junze Yin.
\newblock Conv-basis: A new paradigm for efficient attention inference and gradient computation in transformers.
\newblock {\em arXiv preprint arXiv:2405.05219}, 2024.

\bibitem[LLS{\etalchar{+}}24b]{lls+24_prune}
Yingyu Liang, Jiangxuan Long, Zhenmei Shi, Zhao Song, and Yufa Zhou.
\newblock Beyond linear approximations: A novel pruning approach for attention matrix.
\newblock {\em arXiv preprint arXiv:2410.11261}, 2024.

\bibitem[LLSS24]{llss24_sparse}
Xiaoyu Li, Yingyu Liang, Zhenmei Shi, and Zhao Song.
\newblock A tighter complexity analysis of sparsegpt.
\newblock {\em arXiv preprint arXiv:2408.12151}, 2024.

\bibitem[LSS{\etalchar{+}}24a]{lss+24}
Yingyu Liang, Zhizhou Sha, Zhenmei Shi, Zhao Song, and Yufa Zhou.
\newblock Looped relu mlps may be all you need as practical programmable computers.
\newblock {\em arXiv preprint arXiv:2410.09375}, 2024.

\bibitem[LSS{\etalchar{+}}24b]{lss+24_grad}
Yingyu Liang, Zhizhou Sha, Zhenmei Shi, Zhao Song, and Yufa Zhou.
\newblock Multi-layer transformers gradient can be approximated in almost linear time.
\newblock {\em arXiv preprint arXiv:2408.13233}, 2024.

\bibitem[LSSS24]{lsss24}
Yingyu Liang, Zhizhou Sha, Zhenmei Shi, and Zhao Song.
\newblock Differential privacy mechanisms in neural tangent kernel regression.
\newblock {\em arXiv preprint arXiv:2407.13621}, 2024.

\bibitem[LSSZ24a]{lssz24_dp}
Yingyu Liang, Zhenmei Shi, Zhao Song, and Yufa Zhou.
\newblock Differential privacy of cross-attention with provable guarantee.
\newblock {\em arXiv preprint arXiv:2407.14717}, 2024.

\bibitem[LSSZ24b]{lssz24_tat}
Yingyu Liang, Zhenmei Shi, Zhao Song, and Yufa Zhou.
\newblock Tensor attention training: Provably efficient learning of higher-order transformers.
\newblock {\em arXiv preprint arXiv:2405.16411}, 2024.

\bibitem[LZB{\etalchar{+}}22]{lzb+22}
C~Lu, Y~Zhou, F~Bao, J~Chen, and C~Li.
\newblock A fast ode solver for diffusion probabilistic model sampling in around 10 steps.
\newblock {\em Proc. Adv. Neural Inf. Process. Syst., New Orleans, United States}, pages 1--31, 2022.

\bibitem[LZW{\etalchar{+}}24]{lzw+24}
Chengyi Liu, Jiahao Zhang, Shijie Wang, Wenqi Fan, and Qing Li.
\newblock Score-based generative diffusion models for social recommendations.
\newblock {\em arXiv preprint arXiv:2412.15579}, 2024.

\bibitem[MHL{\etalchar{+}}24]{mhl+24}
Jun Ma, Yuting He, Feifei Li, Lin Han, Chenyu You, and Bo~Wang.
\newblock Segment anything in medical images.
\newblock {\em Nature Communications}, 15(1):654, 2024.

\bibitem[Ope24]{gpto1}
OpenAI.
\newblock Introducing openai o1-preview.
\newblock \url{ https://openai.com/index/introducing-openai-o1-preview/}, 2024.
\newblock Accessed: September 12.

\bibitem[PJBM22]{pjbm22}
Ben Poole, Ajay Jain, Jonathan~T Barron, and Ben Mildenhall.
\newblock Dreamfusion: Text-to-3d using 2d diffusion.
\newblock {\em arXiv preprint arXiv:2209.14988}, 2022.

\bibitem[PX23]{px23}
William Peebles and Saining Xie.
\newblock Scalable diffusion models with transformers.
\newblock In {\em Proceedings of the IEEE/CVF International Conference on Computer Vision}, pages 4195--4205, 2023.

\bibitem[RBL{\etalchar{+}}22]{rbl+22}
Robin Rombach, Andreas Blattmann, Dominik Lorenz, Patrick Esser, and Bj{\"o}rn Ommer.
\newblock High-resolution image synthesis with latent diffusion models.
\newblock In {\em Proceedings of the IEEE/CVF conference on computer vision and pattern recognition}, pages 10684--10695, 2022.

\bibitem[RHR{\etalchar{+}}20]{rhr+20}
Janet Rafner, Arthur Hjorth, Sebastian Risi, Lotte Philipsen, Charles Dumas, Michael~Mose Biskj{\ae}r, Lior Noy, Kristian Tyl{\'e}n, Carsten Bergenholtz, Jesse Lynch, et~al.
\newblock Crea. blender: a neural network-based image generation game to assess creativity.
\newblock In {\em Extended abstracts of the 2020 annual symposium on computer-human interaction in play}, pages 340--344, 2020.

\bibitem[RVdOV19]{rvav19}
Ali Razavi, Aaron Van~den Oord, and Oriol Vinyals.
\newblock Generating diverse high-fidelity images with vq-vae-2.
\newblock {\em Advances in neural information processing systems}, 32, 2019.

\bibitem[SE19]{se19}
Yang Song and Stefano Ermon.
\newblock Generative modeling by estimating gradients of the data distribution.
\newblock {\em Advances in neural information processing systems}, 32, 2019.

\bibitem[SME20]{sme20}
Jiaming Song, Chenlin Meng, and Stefano Ermon.
\newblock Denoising diffusion implicit models.
\newblock {\em arXiv preprint arXiv:2010.02502}, 2020.

\bibitem[SSZ{\etalchar{+}}25a]{ssz+25_dit}
Xuan Shen, Zhao Song, Yufa Zhou, Bo~Chen, Yanyu Li, Yifan Gong, Kai Zhang, Hao Tan, Jason Kuen, Henghui Ding, Zhihao Shu, Wei Niu, Pu~Zhao, Yanzhi Wang, and Jiuxiang Gu.
\newblock Lazydit: Lazy learning for the acceleration of diffusion transformers.
\newblock In {\em Proceedings of the AAAI Conference on Artificial Intelligence}, 2025.

\bibitem[SSZ{\etalchar{+}}25b]{ssz+25_prune}
Xuan Shen, Zhao Song, Yufa Zhou, Bo~Chen, Jing Liu, Ruiyi Zhang, Ryan~A. Rossi, Hao Tan, Tong Yu, Xiang Chen, Yufan Zhou, Tong Sun, Pu~Zhao, Yanzhi Wang, and Jiuxiang Gu.
\newblock Numerical pruning for efficient autoregressive models.
\newblock In {\em Proceedings of the AAAI Conference on Artificial Intelligence}, 2025.

\bibitem[TJY{\etalchar{+}}24]{tjy+24}
Keyu Tian, Yi~Jiang, Zehuan Yuan, Bingyue Peng, and Liwei Wang.
\newblock Visual autoregressive modeling: Scalable image generation via next-scale prediction.
\newblock {\em arXiv preprint arXiv:2404.02905}, 2024.

\bibitem[VdOKE{\etalchar{+}}16]{vke+16}
Aaron Van~den Oord, Nal Kalchbrenner, Lasse Espeholt, Oriol Vinyals, Alex Graves, et~al.
\newblock Conditional image generation with pixelcnn decoders.
\newblock {\em Advances in neural information processing systems}, 29, 2016.

\bibitem[WCZ{\etalchar{+}}23]{wcz+23}
Yilin Wang, Zeyuan Chen, Liangjun Zhong, Zheng Ding, Zhizhou Sha, and Zhuowen Tu.
\newblock Dolfin: Diffusion layout transformers without autoencoder.
\newblock {\em arXiv preprint arXiv:2310.16305}, 2023.

\bibitem[WHL{\etalchar{+}}24]{whl+24}
Dennis Wu, Jerry Yao-Chieh Hu, Weijian Li, Bo-Yu Chen, and Han Liu.
\newblock {ST}anhop: Sparse tandem hopfield model for memory-enhanced time series prediction.
\newblock In {\em The Twelfth International Conference on Learning Representations (ICLR)}, 2024.

\bibitem[WLW{\etalchar{+}}24]{wlw+24}
Zhengyi Wang, Cheng Lu, Yikai Wang, Fan Bao, Chongxuan Li, Hang Su, and Jun Zhu.
\newblock Prolificdreamer: High-fidelity and diverse text-to-3d generation with variational score distillation.
\newblock {\em Advances in Neural Information Processing Systems}, 36, 2024.

\bibitem[WSD{\etalchar{+}}24]{wsd+24}
Zirui Wang, Zhizhou Sha, Zheng Ding, Yilin Wang, and Zhuowen Tu.
\newblock Tokencompose: Text-to-image diffusion with token-level supervision.
\newblock In {\em Proceedings of the IEEE/CVF Conference on Computer Vision and Pattern Recognition}, pages 8553--8564, 2024.

\bibitem[WXZ{\etalchar{+}}24]{wxz+24}
Yilin Wang, Haiyang Xu, Xiang Zhang, Zeyuan Chen, Zhizhou Sha, Zirui Wang, and Zhuowen Tu.
\newblock Omnicontrolnet: Dual-stage integration for conditional image generation.
\newblock In {\em Proceedings of the IEEE/CVF Conference on Computer Vision and Pattern Recognition}, pages 7436--7448, 2024.

\bibitem[XHH{\etalchar{+}}24]{xhh+24}
Chenwei Xu, Yu-Chao Huang, Jerry Yao-Chieh Hu, Weijian Li, Ammar Gilani, Hsi-Sheng Goan, and Han Liu.
\newblock Bishop: Bi-directional cellular learning for tabular data with generalized sparse modern hopfield model.
\newblock In {\em Forty-first International Conference on Machine Learning}, 2024.

\bibitem[XLC{\etalchar{+}}24]{xlc+24}
Haiyang Xu, Yu~Lei, Zeyuan Chen, Xiang Zhang, Yue Zhao, Yilin Wang, and Zhuowen Tu.
\newblock Bayesian diffusion models for 3d shape reconstruction.
\newblock In {\em Proceedings of the IEEE/CVF Conference on Computer Vision and Pattern Recognition}, pages 10628--10638, 2024.

\bibitem[XSG{\etalchar{+}}24]{xsg+24}
Zeyue Xue, Guanglu Song, Qiushan Guo, Boxiao Liu, Zhuofan Zong, Yu~Liu, and Ping Luo.
\newblock Raphael: Text-to-image generation via large mixture of diffusion paths.
\newblock {\em Advances in Neural Information Processing Systems}, 36, 2024.

\end{thebibliography}
\bibliographystyle{icml2025}
\fi

\newpage
\onecolumn
\appendix
\begin{center}
    \textbf{\LARGE Appendix }
\end{center}

{\bf Roadmap.} Section~\ref{app:notations} introduces key notations.
Section~\ref{sec:error_analysis_var_transformer} details the error analysis for the VAR Transformer. In Section~\ref{sec:error_analysis_vqvae}, we present the error analysis of the VQVAE decoder. In Section~\ref{sec:running_time}, we evaluate the running time of VAR models and fast VAR models.

\section{Notations}\label{app:notations}
Given an integer $n \in \mathbb{Z}^+ \cup \{0\}$, the set $\{1,2,\dots,n\}$ is represented by $[n]$. In our paper, nearly linear time is defined as $O(n \poly \log n)$, and almost linear time is defined as $O(n^{1+o(1)})$. Given a vector $c$, the diagonal matrix formed from $c$ is denoted as $\diag(c)$, where $c_i$ is the $i$-th diagonal entry of this matrix. Given a matrix $ U$, we use $U^{\top}$ to denote the transpose of $U$. Given two vectors $a$ and $b$, which have the same length. The element-wise multiplication of $c$ and $d$ is denoted as $c \circ d$ with $i$-th entry being $c_i d_i$. Given a matrix $U$, we use $\|U\|_F$ to represent the Frobenius norm of $U$. Specifically, we have $\|U\|_F := \sqrt{\sum_{i,j} U_{i,j}^2}$. Given a matrix $U$, we use $\|U\|_\infty$ to represent the maximum norm of $U$. Specifically, we have $\|U\|_\infty := \max_{i,j}|U_{i,j}|$.

\section{Error Analysis of Fast Visual Auto-Regressive Transformer}\label{sec:error_analysis_var_transformer}
This section focuses on the error analysis of the $\VAR$ model. In Section~\ref{sec:lips_of_polynomial}, we introduce the Lipschitz property of a polynomial function. In Section~\ref{sec:err_prop_inner_product}, we analyze the error propagation of inner product operation by giving two approximated inputs. In Section~\ref{sec:err_ana_aattc}, we analyze the error propagation of the fast attention $\mathsf{AAttC}(X)$. In Section~\ref{sec:err_ana_aattc_attn}, we analyze the error between $\mathsf{AAttC}(X')$ and $\Attn(X)$ where $X'$ is the approximated value of $X$. In Section~\ref{sec:err_ana_up}, we conduct an error analysis of the up-interpolation layer. In Section~\ref{sec:err_ana_var_transformer}, we conduct the error analysis of the $\VAR$ transformer. Finally, in Section~\ref{sec:err_ana_fast_var_transformer}, we conduct the error analysis of the Fast $\VAR$ transformer.

\subsection{Lipschitz of Polynomial}\label{sec:lips_of_polynomial}
In this section, we introduce the Lipschitz property of polynomials.
\begin{lemma} [Lipschitz of polynomial] \label{lem:polynomial_lipschitz}
Assuming the conditions below are satisfied:
\begin{itemize}
    \item Let $x \in \R$. \item Let $x' \in \R$ denote the approximated version of $x$.
    \item Let $R > 1$. 
    \item Suppose we have $|x| \leq R, |x'| \leq R$.
    \item Let $f$ be a polynomial with degree $g$. 
\end{itemize}

We can then demonstrate that:
\begin{align}\label{eq:f_x}
    |f(x) - f(x')| \leq O(R^{g-1}) \cdot |x - x'|
\end{align}
\end{lemma}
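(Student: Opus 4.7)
The plan is to expand $f$ in its monomial basis and apply the classical factorization $x^i - x'^i = (x - x') \sum_{j=0}^{i-1} x^{j} x'^{\,i-1-j}$. Writing $f(y) = \sum_{i=0}^{g} a_i y^i$, linearity of the difference $f(x) - f(x')$ together with this factorization lets us pull out the factor $(x - x')$ from every monomial term, so that $f(x) - f(x')$ is manifestly proportional to $|x - x'|$. This reduces the lemma to estimating the coefficient of $(x - x')$.

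Next I would bound that coefficient uniformly. The hypothesis $|x|, |x'| \leq R$ gives $|x^{j} x'^{\,i-1-j}| \leq R^{i-1}$, and there are at most $i \leq g$ terms in the inner sum, so after applying the triangle inequality twice the argument reduces to
\begin{align*}
|f(x) - f(x')| \leq |x - x'| \cdot \sum_{i=1}^{g} i \cdot |a_i| \cdot R^{i-1}.
\end{align*}
Since $R > 1$, each $R^{i-1}$ is dominated by $R^{g-1}$, so the sum is at most a constant (depending only on $g$ and the coefficients of $f$) times $R^{g-1}$, which yields the advertised bound~\eqref{eq:f_x}.

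There is no real obstacle here — the proof is essentially one algebraic identity followed by a uniform estimate. The only point worth emphasizing is the convention that $g$ and the coefficients $a_i$ are absorbed into the $O(\cdot)$ notation; this is legitimate in the present setting because the polynomials appearing downstream (arising from the polynomial-kernel approximation to attention used in Lemma~\ref{lem:lower_bound_attc}) have constant degree and bounded coefficients, so the hidden constant does not interfere with the later error-propagation arguments that invoke this lemma.
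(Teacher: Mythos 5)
Your proposal is correct and follows essentially the same route as the paper's proof: expand $f$ in monomials, factor $x^i - x'^i = (x-x')\sum_{j=0}^{i-1}x^j x'^{\,i-1-j}$, apply the triangle inequality, bound each product by $R^{i-1}$, and use $R>1$ to dominate the sum by $O(R^{g-1})$ with the degree and coefficients absorbed into the constant. No gaps; the only nitpick is that the downstream polynomial approximation you allude to comes from the fast-attention construction (Lemma~\ref{lem:as23_attention}), not from Lemma~\ref{lem:lower_bound_attc}.
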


\begin{proof}
Firstly, we can show
\begin{align*}
    f(x) = a_g x^g + \cdots+ a_1 x + a_0
\end{align*}
where for each $i \in [g]$, $a_i \in \R$.

Thus, we can show
\begin{align*}
    |f(x) - f(x')| =&~  |\sum_{i=1}^g a_i (x^i-x'^i) |\\
    \leq&~ \sum_{i=1}^g|a_i (x^i - x'^i)|\\
     = &~ \sum_{i=1}^g |a_i\cdot (x-x') \cdot \sum_{j = 0}^{i-1} x^j x'^{i-1-j} |
\end{align*}
The first step is derived from Eq.~\eqref{eq:f_x}, the second from the triangle inequality, and the final step from simple algebra.

Then, we can move forward to show that
\begin{align*}
    &~ \sum_{i=1}^g |a_i\cdot (x-x') \cdot \sum_{j = 0}^{i-1} x^j x'^{i-1-j} |\\
     \leq &~ \sum_{i = 1}^g |a_i \cdot (x-x') | \cdot  i\cdot R^{i-1}\\
     =&~ |x-x'|\cdot \sum_{i=1}^g |a_i|\cdot i \cdot R^{i-1}\\
     \leq&~ O(R^{g-1}) \cdot |x-x'|
\end{align*}
The first step above is a consequence of the condition that $|x| \leq R$ and $|x'| \leq R$, And the second and last step derive from basic algebra.
\end{proof}

\subsection{Error Propagation of Inner Product}\label{sec:err_prop_inner_product}
In this section, we conduct the error analysis of the inner product operation given two approximated inputs $u'$ and $v'$.
\begin{lemma} [Error propagation of inner product] \label{lem:inner_product_error_propagation}
Assuming the conditions below are satisfied:
\begin{itemize}
    \item Let $u , v \in \R^k$ denote two vectors.
    \item Define $u' , v' \in \R^k$ as the approximation value of $u, v$.
    \item Let $R > 1$.
    \item Assume the value of each entry in matrices can be bounded by $R$.
    \item Let $\epsilon \in (0, 0.1)$ denote the initial approximation error. 
    \item Suppose we have $\max \{\| u' - u \|_\infty, \| v' - v \|_\infty\} \leq \epsilon$.
\end{itemize}

Then, we can prove that
\begin{align*}
    | \langle u', v' \rangle - \langle u, v \rangle | \leq 2k\epsilon R
\end{align*}
\end{lemma}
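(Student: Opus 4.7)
The plan is to prove the bound by a standard add-subtract trick that decomposes the error into two inner products, each controlled by the infinity norm of one of the vectors together with the approximation error in the other.

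First I would write
\begin{align*}
    \langle u', v' \rangle - \langle u, v \rangle = \langle u' - u, v' \rangle + \langle u, v' - v \rangle,
\end{align*}
which is just telescoping. This reduces the problem to bounding two separate inner products involving a ``small'' difference vector and a vector whose entries are uniformly bounded by $R$.

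Next I would apply the triangle inequality at the coordinate level: for any two vectors $a, b \in \R^k$ we have $|\langle a, b \rangle| \le \sum_{i=1}^k |a_i| |b_i| \le k \cdot \|a\|_\infty \cdot \|b\|_\infty$. Applying this to each of the two pieces:
\begin{align*}
    |\langle u' - u, v' \rangle| &\le k \cdot \|u' - u\|_\infty \cdot \|v'\|_\infty \le k \cdot \epsilon \cdot R, \\
    |\langle u, v' - v \rangle| &\le k \cdot \|u\|_\infty \cdot \|v' - v\|_\infty \le k \cdot R \cdot \epsilon,
\end{align*}
where I use the assumption that every entry of the matrices (and thus of these vectors) is bounded in absolute value by $R$, together with the hypothesis $\max\{\|u' - u\|_\infty, \|v' - v\|_\infty\} \le \epsilon$. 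Summing these two bounds yields $|\langle u', v' \rangle - \langle u, v \rangle| \le 2k\epsilon R$, as required.

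There is no real obstacle here. The only subtle point is justifying that $\|v'\|_\infty \le R$: the stated assumption ``the value of each entry in matrices can be bounded by $R$'' should be read as applying to both the exact and the approximate vectors. If instead one only had $\|v\|_\infty \le R$, then $\|v'\|_\infty \le R + \epsilon$ and the bound would degrade by a negligible additive $k\epsilon^2$ term, which is absorbed since $\epsilon < 0.1 < R$. Either way the stated conclusion follows.
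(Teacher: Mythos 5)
Your proof is correct and follows essentially the same route as the paper: the same telescoping decomposition of $u'_iv'_i - u_iv_i$ into a term with $(v'-v)$ and a term with $(u'-u)$, each bounded by $\epsilon \cdot R$ per coordinate and summed over $k$ coordinates. The paper's proof also silently uses the entry bound $R$ on both the exact and approximate vectors, so your remark on that subtlety applies equally to the original argument.
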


\begin{proof}
Firstly, we can show that
\begin{align*}
    |\langle u', v' \rangle - \langle u, v\rangle | = &~ |\sum_{i=1}^k (u'_1 v'_1-u_1 v_1)| \\
    \leq &~ \sum_{i=1}^k |u'_1 v'_1 - u_1 v_1|\\
    \leq &~ \sum_{i=1}^k |u'_1 (v'_1 - v_1) + v_1 (u'_1 - u_1)|\\
    \leq &~ \sum_{i=1}^k (|u'_1|\cdot |v'_1 -v_1| + |v_1|\cdot |u'_1 - u_1| )
\end{align*}
The first step results from simple algebra, the second from triangle inequality, the third from algebraic manipulation, and the final step again from triangle inequality.

Then, we can move forward to show
\begin{align*}
    &~ \sum_{i=1}^k (|u'_1|\cdot |v'_1 -v_1| + |v_1|\cdot |u'_1 - u_1| )\\
    \leq &~ \sum_{i=1}^k 2 \cdot R \cdot \epsilon\\
    \leq &~ 2k\epsilon R
\end{align*}
The first step is derived from the conditions that each entry is at most $R$, $\|u'-u\|_\infty \leq \epsilon$ and $\|v'-v\|_\infty \leq \epsilon$. The second step follows directly from algebraic manipulation.

\end{proof}

\subsection{Error Analysis of \texorpdfstring{$\AAttC(X')$}{} and \texorpdfstring{$\AAttC(X)$}{} } \label{sec:err_ana_aattc}
This section presents the error analysis between $\AAttC(X')$ and $\AAttC(X)$.
\begin{lemma} [Error analysis of $\AAttC(X')$ and $\AAttC(X)$] \label{lem:aattc_x_prime_aattc_x_error_analysis}
Assuming the conditions below are satisfied:
\begin{itemize}
    \item Let $X \in \R^{n \times d}$ denote the input matrix.
    \item Define $X' \in \R^{n \times d}$ as the approximation version of input matrix.
    \item Let $\epsilon \in (0, 0.1)$ denote the approximation error. 
    \item Suppose we have $\| X' - X \|_\infty
     \leq \epsilon$.
    \item Let $R > 1$.
    \item Assume the value of each entry in matrices can be bounded by $R$.
    \item Let $\AAttC$ denote the approximated attention layer defined in Definition~\ref{def:aattc}.
    \item Let $U,V \in \R^{n \times k}$ represent low-rank matrices constructed for polynomial approximation of attention matrix $\AAttC(X)$.
    \item Let $f$ be a polynomial with degree $g$.
\end{itemize}

Then, we can show that 
\begin{align*}
    \| \AAttC(X') - \AAttC(X) \|_\infty \leq O( k R^{g+2} d) \cdot \epsilon
\end{align*}
\end{lemma}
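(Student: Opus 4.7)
The plan is to trace the perturbation $\|X' - X\|_\infty \leq \epsilon$ through the three structural pieces of the approximate attention: the low-rank surrogate $\tilde A = UV^\top$ for the attention matrix, the right-multiplication by $XW_V$, and the diagonal normalizer $D^{-1}$ formed from the row sums of $\tilde A$. At each stage I would rely only on the two general-purpose estimates already established in the paper, namely the polynomial Lipschitz bound (Lemma~\ref{lem:polynomial_lipschitz}) and the inner-product error propagation (Lemma~\ref{lem:inner_product_error_propagation}).

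First, I would bound the change in the pre-kernel scalar $y_{ij} := X_i W_Q W_K^\top X_j^\top$. Expanding this bilinear form and applying the argument behind Lemma~\ref{lem:inner_product_error_propagation} twice (once for each occurrence of $X$ that is being perturbed) shows $|y_{ij} - y'_{ij}| \leq O(dR)\cdot \epsilon$. Since each entry of $U$ and $V$ arises as a degree-at-most-$g$ polynomial evaluated on entries of $XW_Q$ or $XW_K$, Lemma~\ref{lem:polynomial_lipschitz} then upgrades the entrywise perturbation of $U$ and $V$ by an additional Lipschitz factor $O(R^{g-1})$ that is valid on the range where all inputs are bounded in magnitude by $R$.

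Next I would propagate these entrywise errors through the matrix product $\tilde A = UV^\top$, using Lemma~\ref{lem:inner_product_error_propagation} with vector length $k$, and then through the further multiplication by $XW_V$, using the same lemma with vector length $d$. Each such application contributes a factor of the corresponding dimension and one extra factor of the coordinate bound $R$. The denominator is handled separately via the identity
\begin{align*}
\frac{1}{D_{ii}} - \frac{1}{D'_{ii}} = \frac{D'_{ii} - D_{ii}}{D_{ii}\,D'_{ii}},
\end{align*}
which converts an additive perturbation on $D$ into one on $D^{-1}$ once positivity of the row sums is invoked to lower bound the denominator.

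Finally, I would assemble the three contributions via the triangle inequality
\begin{align*}
\AAttC(X') - \AAttC(X) = \bigl((D')^{-1} - D^{-1}\bigr)\bigl(U' (V')^\top\bigr)(X' W_V) + D^{-1}\bigl(U'(V')^\top (X'W_V) - UV^\top (X W_V)\bigr),
\end{align*}
splitting the second bracketed term by adding and subtracting hybrid products, and then collecting factors of $k$, $d$, and powers of $R$ to obtain the stated bound $O(kR^{g+2}d)\cdot\epsilon$. The main obstacle is bookkeeping rather than any single delicate inequality: one must carefully track how the factors $R^{g-1}$ (polynomial Lipschitz), $dR$ (inner product in the bilinear form), $k$ (low-rank feature inner product), and $R$ (right-multiplication by $W_V$) combine, while ensuring that the two normalizers $D^{-1}$ and $(D')^{-1}$ do not inflate the final estimate beyond the target exponent structure.
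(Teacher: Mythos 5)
Your proposal is correct in spirit but follows a genuinely different, and more ambitious, route than the paper. The paper's own proof short-circuits the structure of the attention layer entirely: it identifies $\AAttC(X)$ with the low-rank product $UV^\top$, propagates the perturbation only through the chain $X \to Q=XW_Q \to U$ (getting $\|Q-Q'\|_\infty \le Rd\epsilon$ by matrix multiplication, then $\|U-U'\|_\infty \le O(R^{g}d)\epsilon$ via Lemma~\ref{lem:polynomial_lipschitz}), and finishes with a single application of Lemma~\ref{lem:inner_product_error_propagation} over vectors of length $k$, arriving at $O(kR^{g+1}d)\cdot\epsilon$ (which is then absorbed into the stated $O(kR^{g+2}d)\cdot\epsilon$). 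It never touches the normalizer $D^{-1}$ or the value multiplication $XW_V$. Your decomposition, which tracks all three pieces of $D^{-1}(UV^\top)(XW_V)$, is more faithful to Definition~\ref{def:single_layer_attn} and buys a bound for the object one actually cares about; the paper's version buys brevity at the cost of proving a statement about a proxy.

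The one place where your argument has a genuine hole is the treatment of the normalizer. The identity $D_{ii}^{-1} - (D'_{ii})^{-1} = (D'_{ii}-D_{ii})/(D_{ii}D'_{ii})$ is only useful once you have a quantitative lower bound on $D_{ii}$ and $D'_{ii}$, and for the \emph{polynomial-approximated} kernel the row sums of $U V^\top$ are not automatically positive, let alone bounded away from zero: positivity must be inherited from the fact that the polynomial approximates $\exp$ to small relative error on the bounded range $|X_{i,*}W_QW_K^\top X_{j,*}^\top| \le O(R^2 d)$, which in turn forces each row sum to be at least $n\cdot e^{-O(R^2d)}(1-o(1))$. You assert "positivity of the row sums" without this justification, and the resulting factor $e^{O(R^2d)}$ in the denominator bound must also be checked against the target exponent $O(kR^{g+2}d)$ — under the paper's regime $d=O(\log n)$, $R=o(\sqrt{\log n})$ this stays subpolynomial, but it is not free. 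None of this arises in the paper's proof precisely because the paper drops $D^{-1}$ from consideration.
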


\begin{proof}
Firstly, given $X$ and $X'$, we need to compute $Q, Q', K, K'$. And we demonstrate that 
\begin{align}\label{eq:Q_minus_Q_prime}
    \|Q-Q'\|_\infty= &~ \| X\cdot W_Q - X' \cdot W_Q\|_\infty \notag \\
    =&~ \|(\underbrace{X-X'}_{n \times d}) \cdot \underbrace{W_Q}_{d \times d}\|_\infty \notag \\
    \leq&~ d \cdot \|X-X'\|_\infty \cdot \|W_Q\|_\infty \notag\\
    \leq&~ R \cdot d \cdot \epsilon
\end{align}
The initial step is derived from the computation of matrix $Q$. The second step is a consequence of basic algebra, the third step arises from standard matrix multiplication, and the final step is a result of the condition $|X-X'|_\infty \leq \epsilon$ and the fact that each entry is bounded by $R$.

In the same way, we can have $\|K-K'\|_\infty \leq d\cdot \epsilon \cdot R$.

Then, we move forward to calculate $U \in \R^{n \times k}$ and $V \in \R^{n \times k}$. Specifically, for every $i \in [n] ~\mathrm{and}~ j \in [k]$, we have $U_{i,j} = f(Q_{i,1},\dots,Q_{i,d}) ~\mathrm{and}~ V_{i,j} = f(K_{i,1},\dots,K_{i,d})$. Then, we can show that
\begin{align*}
    \|U-U'\|_\infty \leq &~ O(R^{g-1}) \cdot \|Q-Q'\|_\infty\\
    \leq &~ O(R^{g} d) \cdot \epsilon
\end{align*}
The first step above is derived from Lemma~\ref{lem:polynomial_lipschitz} and the condition that each entry is bounded by $R$, while the second step results from Eq.~\eqref{eq:Q_minus_Q_prime} and basic algebra.

In the same way, we can have $\|V-V'\|_\infty \leq O(R^{g}d) \cdot \epsilon$.

Finally, we can move forward to calculate $\AAttC(X') = U'V'^\top$ and $\AAttC(X) = UV^\top$. Then, for each $ i \in [n], j \in [n]$, it can be demonstrated that

\begin{align*}
    |\AAttC(X')_{i,j} - \AAttC(X)_{i,j}| = &~ |\langle U'_{i,*},V'_{*,j}\rangle - \langle U_{i,*},V_{*,j}\rangle |\\
    \leq &~ 2k \cdot O(R^{g} d) \cdot \epsilon \cdot R\\
    \leq &~  O( k R^{g+1} d) \cdot \epsilon
\end{align*}
The first step above is a result of basic algebra, the second step comes from Lemma~\ref{lem:inner_product_error_propagation} and the lemma's condition, and the final step is derived from basic algebra.

Thus, using the definition of the $\ell_\infty$ norm of a matrix, we can demonstrate that
\begin{align*}
    \|\AAttC(X') - \AAttC(X) \|_\infty \leq O( k R^{g+1} d) \cdot \epsilon
\end{align*}
Thus, we complete the proof.
\end{proof}

\subsection{Error Analysis of \texorpdfstring{$\AAttC(X')$}{} and \texorpdfstring{$\Attn(X)$}{}}\label{sec:err_ana_aattc_attn}
In this section, we conduct the error analysis between $\AAttC(X')$ and $\Attn(X)$.
\begin{lemma} [Error analysis of $\AAttC(X')$ and $\Attn(X)$] \label{lem:polynomial_approximation_error_propagation}
Assuming the conditions below are satisfied:
\begin{itemize}
    \item Let $X \in \R^{n \times d}$ denote the input matrix.
    \item Define $X' \in \R^{n \times d}$ as the approximation version of input matrix.
    \item Let $\epsilon \in (0, 0.1)$ denote the approximation error. 
    \item Suppose we have $\| X' - X \|_\infty
     \leq \epsilon$.
    \item Let $R > 1$.
    \item Assume the value of each entry in matrices can be bounded by $R$. 
    \item Let $\Attn$ denote the attention layer defined in Definition~\ref{def:single_layer_attn}.
    \item Let $\AAttC$ denote the approximated attention layer defined in Definition~\ref{def:aattc}.
    \item Let $U,V \in \R^{n \times k}$ be low-rank matrices constructed for polynomial approximation of attention matrix $\AAttC(X)$.
    \item Let $f$ be a polynomial with degree $g$.
\end{itemize}

We can demonstrate the following:
\begin{align*}
    \| \AAttC(X') - \Attn(X) \|_\infty \leq O( k R^{g+1} d) \cdot \epsilon
\end{align*}
\end{lemma}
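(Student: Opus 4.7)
The plan is to apply the triangle inequality to split the target quantity into two pieces that match the two previously established error results:
\begin{align*}
\|\AAttC(X') - \Attn(X)\|_\infty \leq \|\AAttC(X') - \AAttC(X)\|_\infty + \|\AAttC(X) - \Attn(X)\|_\infty.
\end{align*}
The first term captures the effect of perturbing the input from $X$ to $X'$ on the approximate attention, while the second captures the intrinsic polynomial-approximation error of $\AAttC$ at the (unperturbed) input $X$.

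For the first term, I would invoke Lemma~\ref{lem:aattc_x_prime_aattc_x_error_analysis} directly. Its hypotheses (the perturbation bound $\|X - X'\|_\infty \leq \epsilon$, the entrywise bound $R$, the same low-rank factors $U,V \in \R^{n \times k}$, and the same polynomial $f$ of degree $g$) are identical to ours. Tracing through that lemma's argument---first bounding $\|Q - Q'\|_\infty$ and $\|K - K'\|_\infty$ by $Rd\epsilon$, then propagating through $f$ via Lemma~\ref{lem:polynomial_lipschitz} to control $\|U - U'\|_\infty$ and $\|V - V'\|_\infty$, and finally combining the two via the inner-product stability bound of Lemma~\ref{lem:inner_product_error_propagation}---gives a contribution of order $O(k R^{g+1} d)\cdot\epsilon$.

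For the second term, I would appeal to Definition~\ref{def:aattc} itself: by construction $\AAttC$ is a polynomial-kernel approximation whose entrywise additive error relative to $\Attn$ is at most a tunable parameter $\delta$. Choosing $\delta$ sufficiently small---for instance $\delta \leq \epsilon$, a choice that only affects the polynomial degree $g$ through a constant factor and thus leaves the asymptotic bound untouched---makes this second contribution dominated by the first. Adding the two bounds via the triangle inequality and absorbing constants then yields
\begin{align*}
\|\AAttC(X') - \Attn(X)\|_\infty \leq O(kR^{g+1}d)\cdot \epsilon,
\end{align*}
as claimed.

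The main obstacle here is purely bookkeeping rather than substantive: one has to check that the accuracy parameter $\delta$ of $\AAttC$ can be taken small enough to be swallowed into the first term without inflating the degree $g$ past the level tolerated by the stated bound, and that the constants across the two invoked lemmas combine cleanly under a single $O(\cdot)$. No genuinely new technical machinery is needed beyond the triangle inequality together with Lemma~\ref{lem:aattc_x_prime_aattc_x_error_analysis} and the defining guarantee of $\AAttC$.
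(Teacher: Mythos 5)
Your proposal is correct and follows essentially the same route as the paper's proof: a triangle-inequality split into $\|\AAttC(X')-\AAttC(X)\|_\infty$ (bounded via Lemma~\ref{lem:aattc_x_prime_aattc_x_error_analysis}) plus $\|\AAttC(X)-\Attn(X)\|_\infty$ (bounded by the $\delta$-guarantee of $\AAttC$, which the paper invokes through Lemma~\ref{lem:as23_attention} with $\delta \le \epsilon$). Your explicit remark about choosing $\delta$ small enough to be absorbed is exactly the implicit step the paper takes when it bounds the second term by $\epsilon$.
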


\begin{proof}
It can be shown that
\begin{align*}
     \| \AAttC(X') - \Attn(X) \|_\infty = & ~ \| (\AAttC(X') - \AAttC(X)) + (\AAttC(X) - \Attn(X)) \|_\infty \\
    \leq & ~ \| (\AAttC(X') - \AAttC(X)) \|_\infty + \| (\AAttC(X) - \Attn(X)) \|_\infty \\
    \leq & ~ O( k R^{g+1} d) \cdot \epsilon + \epsilon\\
    = &~ O( k R^{g+1} d) \cdot \epsilon
\end{align*}
The first step is based on simple algebra, the second step is derived using the triangle inequality, the third step is obtained from Lemma~\ref{lem:aattc_x_prime_aattc_x_error_analysis} and Lemma~\ref{lem:as23_attention}, and the final step results from basic algebra.

\end{proof}

\subsection{Error Analysis of Up-Interpolation Layer}\label{sec:err_ana_up}
Furthermore, we still need the error analysis of up-interpolation layers.

\begin{lemma}[Error Analysis of Up-Interpolation Layer]\label{lem:ea_up_layer}
If the following conditions hold:
\begin{itemize}
    \item Let $X \in \R^{h \times w \times d}$ denote the input feature map.
    \item Define $X' \in \R^{h \times w \times d}$ as the approximated input feature map.
    \item Let $\Phi_{\rm up}:\R^{h \times w \times d} \to \R^{h' \times w' \times d}$ represent the pyramid up-interpolation layer defined in Definition~\ref{def:pyramid_up_interpolation_layer}.
    \item Let $W: \R \to \R$ be a bicubic spline kernel as defined in~\ref{def:bi_spline_kernel}.
    \item Let $\epsilon \in (0,0.1)$ denote the approximation error.
    \item Let $\|X-X'\|_\infty\leq \epsilon$.
\end{itemize}
Then we have
\begin{align*}
    \| \Phi_{\rm up}(X) - \Phi_{\rm up}(X') \|_\infty \leq O(\epsilon)
\end{align*}
\end{lemma}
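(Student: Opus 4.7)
The up-interpolation layer acts entrywise on the input tensor via the explicit formula in Definition~\ref{def:up_inter_layer_one_step}, namely
\begin{align*}
[\Phi_{\rm up}(X)]_{i,j,l} = \sum_{s=-1}^{2} \sum_{t=-1}^{2} W(s) \cdot [X]_{\frac{i \cdot h}{h'}+s,\, \frac{j \cdot w}{w'}+t,\, l} \cdot W(t),
\end{align*}
and likewise for $X'$. Since the map is linear in its input tensor, the plan is simply to subtract the two formulas, pull the finite (16-term) sum through the absolute value, and bound each factor using the structural properties of $W$ together with the entrywise hypothesis $\|X - X'\|_\infty \le \epsilon$.

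Concretely, I would proceed as follows. First, fix an arbitrary output coordinate $(i,j,l)$ and apply the triangle inequality to write
\begin{align*}
\bigl|[\Phi_{\rm up}(X)]_{i,j,l} - [\Phi_{\rm up}(X')]_{i,j,l}\bigr| \le \sum_{s=-1}^{2} \sum_{t=-1}^{2} W(s) \cdot W(t) \cdot \bigl| [X]_{\cdot,\cdot,l} - [X']_{\cdot,\cdot,l} \bigr|,
\end{align*}
where the indexing is the same as in Definition~\ref{def:up_inter_layer_one_step}. Next, invoke Definition~\ref{def:bi_spline_kernel} to conclude that $W(s), W(t) \in [0,1]$, and the hypothesis to conclude that each entrywise difference is at most $\epsilon$. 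Finally, observe that the double sum contains exactly $4 \cdot 4 = 16$ terms, giving a per-coordinate bound of $16\epsilon$. Taking the maximum over $(i,j,l)$ yields $\|\Phi_{\rm up}(X) - \Phi_{\rm up}(X')\|_\infty \le 16\epsilon = O(\epsilon)$, as claimed.

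\textbf{Main obstacle.} There is essentially no obstacle: the lemma is a direct consequence of the linearity of the layer, the boundedness of the bicubic spline weights, and the constant size of the interpolation stencil. The only thing to be slightly careful about is to make sure one is bounding $\Phi_{\rm up}$ viewed as a map on a single feature map (as stated), rather than the full pyramid operator from Definition~\ref{def:pyramid_up_interpolation_layer}; however, since the pyramid layer is just a parallel application of the one-step layer to each scale with $Y_1 = X_{\mathrm{init}}$ passed through unchanged, the same entrywise argument covers both readings without modification.
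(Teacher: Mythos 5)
Your proposal is correct and follows essentially the same argument as the paper's proof: linearity of the interpolation stencil, the triangle inequality over the 16-term sum, the bound $W(s),W(t)\in[0,1]$ from Definition~\ref{def:bi_spline_kernel}, and the entrywise hypothesis $\|X-X'\|_\infty\le\epsilon$, yielding a per-coordinate bound of $16\epsilon=O(\epsilon)$. Your remark distinguishing the one-step layer $\phi_{\rm up}$ from the pyramid operator $\Phi_{\rm up}$ is a minor point the paper glosses over, but it does not change the argument.
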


\begin{proof}
For each $i \in [h'], j \in [w'], l \in [d]$, we have
\begin{align*}
    |\Phi_{\rm up}(X)_{i,j,l} - \Phi_{\rm up}(X')_{i,j,l}| =&~ |\sum_{s=-1}^2 \sum_{t=-1}^2 W(s) \cdot (X_{\frac{i h}{h'}+s,\frac{j w}{w'}+t,l} - X'_{\frac{i h}{h'}+s,\frac{j w}{w'}+t,l} )\cdot  W(t)|\\
    \leq &~ \sum_{s=-1}^2 \sum_{t=-1}^2|W(s) \cdot (X_{\frac{i h}{h'}+s,\frac{j w}{w'}+t,l} - X'_{\frac{i h}{h'}+s,\frac{j w}{w'}+t,l} )\cdot  W(t)|\\
    \leq&~ \sum_{s=-1}^2 \sum_{t=-1}^2 |W(s) \cdot W(t)| \cdot \epsilon\\
    =&~ O(\epsilon)
\end{align*}
The first step is based on Definition~\ref{def:up_inter_layer_one_step}, the second step is derived using the triangle inequality, the third step is a consequence of $\|X-X'\|_\infty \leq \epsilon$, and the final step follows from $W(x) \in [0,1]$ and basic algebra.

Then, according to the definition of the $l_\infty$ norm, we obtain
\begin{align*}
    \| \Phi_{\rm up}(X) - \Phi_{\rm up}(X') \|_\infty \leq O(\epsilon)
\end{align*}

\end{proof}

\subsection{\texorpdfstring{Error Analysis for $\VAR$ Transformer}{}}\label{sec:err_ana_var_transformer}

Then, we move forward to show the error propagation analysis for one $\VAR$ Transformer Layer.

\begin{lemma} [Error propagation analysis for one $\VAR$ Transformer Layer] \label{lem:error_propagation_for_var_transformer_layer}
Assuming the conditions below are satisfied:
\begin{itemize}
    \item Let $X \in \R^{n \times d}$ denote the input data matrix. 
    \item Define $X' \in \R^{n \times d}$ as the approximation version of $X$.
    \item Let $\epsilon \in (0, 0.1)$ denote the approximation error.
    \item Suppose we have $\| X' - X \|_\infty \leq \epsilon$.
    
    \item Let $R > 1$.
    \item Assume the value of each entry in matrices can be bounded by $R$. 
    \item Let $\Attn$ denote the attention layer defined in Definition~\ref{def:single_layer_attn}.
    \item Let $\AAttC$ denote the approximated attention layer defined in Definition~\ref{def:aattc}.
    
    \item Let $U,V \in \R^{n \times k}$ be low-rank matrices constructed for polynomial approximation of attention matrix $\AAttC(X)$.

    \item Let $f$ be a polynomial with degree $g$.
\end{itemize}

It can be shown that
\begin{align*}
    \| \AAttC(\Phi_{\rm up}(X')) - \Attn(\Phi_{\rm up}(X)) \|_\infty \leq O(kR^{g+1}d) \cdot \epsilon
\end{align*}

\end{lemma}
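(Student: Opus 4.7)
The plan is to decompose the error across the two operations in the composition. Let $Y := \Phi_{\rm up}(X)$ and $Y' := \Phi_{\rm up}(X')$, so that the quantity to bound is $\|\AAttC(Y') - \Attn(Y)\|_\infty$. I would first invoke Lemma~\ref{lem:ea_up_layer} on the pair $(X, X')$, which, using $\|X' - X\|_\infty \leq \epsilon$, yields $\|Y' - Y\|_\infty \leq O(\epsilon)$. Before proceeding I would briefly verify that entries of $Y$ and $Y'$ remain $O(R)$-bounded: by Definition~\ref{def:up_inter_layer_one_step} each entry of the up-interpolation output is a sum of at most $16$ entries of the input weighted by the bicubic kernel $W(\cdot) \in [0,1]$, so the entry bound degrades only by an absolute constant. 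Thus the hypotheses required by Lemma~\ref{lem:polynomial_approximation_error_propagation} are preserved at the $Y$-level.

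Second, I would apply Lemma~\ref{lem:polynomial_approximation_error_propagation} with the pair $(Y, Y')$ as the new input, the same polynomial $f$ of degree $g$, and the new approximation bound $O(\epsilon)$ inherited from Step~1. That lemma directly gives
\begin{align*}
    \|\AAttC(Y') - \Attn(Y)\|_\infty \leq O(kR^{g+1}d) \cdot O(\epsilon) = O(kR^{g+1}d) \cdot \epsilon,
\end{align*}
which is precisely the target inequality once we substitute $Y = \Phi_{\rm up}(X)$ and $Y' = \Phi_{\rm up}(X')$.

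The proof is therefore essentially a two-step composition of already-established error bounds, and I would not expect any serious obstacle. The only subtlety to be careful about is the entrywise-bound hypothesis for Lemma~\ref{lem:polynomial_approximation_error_propagation}: Lemma~\ref{lem:ea_up_layer} itself does not track how the $\ell_\infty$ bound on the \emph{entries} (as opposed to the entrywise error) transforms under $\Phi_{\rm up}$, so I would include the short constant-factor argument above to certify that $\|Y\|_\infty, \|Y'\|_\infty$ remain within the regime where the polynomial approximation's Lipschitz-in-$R$ factor from Lemma~\ref{lem:polynomial_lipschitz} is valid. Absorbing this constant factor into the $O(\cdot)$ yields the stated bound.
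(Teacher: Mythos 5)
Your proposal is correct and follows exactly the route of the paper's own proof, which simply derives the result by composing Lemma~\ref{lem:ea_up_layer} with Lemma~\ref{lem:polynomial_approximation_error_propagation}. Your additional check that the entries of $\Phi_{\rm up}(X)$ and $\Phi_{\rm up}(X')$ stay $O(R)$-bounded (a sum of at most $16$ terms with kernel weights in $[0,1]$) is a detail the paper leaves implicit, and including it only strengthens the argument.
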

\begin{proof}
The result is easily derived from Lemma~\ref{lem:polynomial_approximation_error_propagation} and Lemma~\ref{lem:ea_up_layer}.
\end{proof}

\subsection{\texorpdfstring{Error Analysis for the Fast $\VAR$ Transformer}{}}\label{sec:err_ana_fast_var_transformer}
we perform an error analysis of the Fast $\VAR$ Transformer in this section.
\begin{lemma} [Error analysis of the Fast $\VAR$ Transformer, formal version of Lemma~\ref{error_analysis_fast_var_transformer_informal}] \label{error_analysis_fast_var_transformer_formal}
If the following conditions hold:
\begin{itemize}
    \item Let $X_{\mathrm{init}} \in \R^{1 \times d}$ denote the initial input token map.
    
    \item Let $K \in \mathbb{N}$ represent the number of approximate attention layers in the $\VAR$ model.

    \item Let the $\VAR$ transformer  $\mathsf{TF}_K$ be defined as Definition~\ref{def:var_transformer}.
    
    \item Let the Fast $\VAR$ Transformer  $\mathsf{FTF}_K$ as given in Definition~\ref{def:fast_var_transformer}.
    
    \item For $i \in [K]$, let $\mathsf{TF}_i(X_{\mathrm{init}})$ denote the output of the i-th iteration of the $\VAR$ transformer.
    \item For $i \in [K]$, let $\mathsf{FTF}_i(X_{\mathrm{init}})$ denote the output of the i-th iteration of the fast $\VAR$ transformer.
    \item Let $\mathsf{TF}_K(X_{\mathrm{init}}) \in \R^{O(n^2) \times d}$ denote the final output of the $\VAR$ transformer.
    \item Let $\mathsf{FTF}_K(X_{\mathrm{init}}) \in \R^{O(n^2) \times d}$ denote the final output of the fast $\VAR$ transformer.
    \item Assume each entry in the matrices can be represented using $O(\log n)$ bits. 
    \item Let $U,V \in \R^{n \times k}$ be low-rank matrices constructed for polynomial approximation of attention matrix $\AAttC(X)$.
    \item Let $f$ be a polynomial with degree $g$.
\end{itemize}

Then, we can show that the error bound of the final output $\mathsf{FTF}_K(X_{\mathrm{init}})$ as
\begin{align*}
    \| \mathsf{FTF}_K(X_{\mathrm{init}}) - \mathsf{TF}_K(X_{\mathrm{init}}) \|_\infty \leq 1/\poly(n)
\end{align*}

\end{lemma}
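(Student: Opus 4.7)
The plan is to prove the lemma by induction on the layer index, using the per-layer error-propagation machinery already assembled earlier in the section. Define the scalar $\epsilon_i := \| \mathsf{FTF}_i(X_{\mathrm{init}}) - \mathsf{TF}_i(X_{\mathrm{init}}) \|_\infty$ for $i \in \{0,1,\dots,K\}$, and aim to establish a recursive bound of the form $\epsilon_i \leq L \cdot \epsilon_{i-1} + \delta$, where $L = O(kR^{g+1}d)$ is the one-layer Lipschitz/amplification factor produced by Lemma~\ref{lem:error_propagation_for_var_transformer_layer}, and $\delta = 1/\poly(n)$ is the intrinsic additive approximation error of a single $\mathsf{AAttC}$ call inherited from Lemma~\ref{lem:lower_bound_attc}. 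Unrolling this recursion and then tuning $\delta$ relative to $L^K$ will give the desired $1/\poly(n)$ bound.

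The base case $i=0$ is immediate: both transformers read the same initial token map, so $\epsilon_0 = 0$. For the inductive step, I would fix a layer $i$, write the iterate of $\mathsf{TF}$ as $\mathsf{Attn}_i \circ \Phi_{\mathrm{up},i-1}$ applied to the previous exact state, and the iterate of $\mathsf{FTF}$ as $\mathsf{AAttC}_i \circ \Phi_{\mathrm{up},i-1}$ applied to the corresponding approximate state. I first invoke Lemma~\ref{lem:ea_up_layer} to see that $\Phi_{\mathrm{up},i-1}$ expands input error by only an $O(1)$ factor, and then invoke Lemma~\ref{lem:polynomial_approximation_error_propagation} to combine (i) the Lipschitz sensitivity of $\mathsf{AAttC}$ in its input, captured by the factor $L = O(kR^{g+1}d)$, with (ii) the intrinsic $\delta$-gap between $\mathsf{AAttC}$ and $\mathsf{Attn}$ on a common input. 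Adding these two contributions yields the claimed one-step recursion $\epsilon_i \leq L\cdot \epsilon_{i-1} + \delta$.

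Solving the recursion gives
\begin{align*}
    \epsilon_K \leq L^K \cdot \epsilon_0 + \delta \sum_{j=0}^{K-1} L^j \leq K \cdot L^K \cdot \delta.
\end{align*}
In the regime of the lemma we have $d = O(\log n)$, $R = \Theta(\sqrt{\log n})$, and $K = O(\log n)$ (because the spatial side grows geometrically, so $K \le \log_\alpha n + O(1)$), while the $O(\log n)$-bit representation assumption lets me bound all intermediate magnitudes uniformly by a polynomial in $n$. Choosing the $\mathsf{AAttC}$ polynomial degree $g$ and rank parameter $k$ from the construction behind Lemma~\ref{lem:lower_bound_attc}, one has $L = n^{o(1)}$ and can independently drive $\delta$ down to $1/n^c$ for any desired constant $c$. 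Taking $c$ large enough to absorb $K \cdot L^K$ then yields $\epsilon_K \leq 1/\poly(n)$, completing the proof.

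The only nontrivial step is parameter calibration: the additive error $\delta$ must be chosen with respect to the \emph{global} $n$ rather than the (much smaller) token count inside each individual $\mathsf{AAttC}$ call, since the amplification $L^K$ and the factor $K$ are paid once for the whole pipeline. I would verify that the required $\delta = 1/n^{c}$ can be achieved by a single $\mathsf{AAttC}$ invocation while remaining in the regime where the fast algorithm of Lemma~\ref{lem:lower_bound_attc} (and hence the runtime claim of Lemma~\ref{lem:fast_var_transformer_informal}) still goes through. Once this compatibility is checked, the induction closes and the $1/\poly(n)$ error bound follows.
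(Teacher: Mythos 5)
Your proposal follows essentially the same route as the paper's proof: induction on the layer index, with Lemma~\ref{lem:error_propagation_for_var_transformer_layer} supplying the one-step bound $\epsilon_i \le O(kR^{g+1}d)\cdot\epsilon_{i-1} + \delta$ and the fast-attention guarantee of \cite{as23} (Lemma~\ref{lem:as23_attention}) supplying the intrinsic per-call error $\delta = 1/\poly(n)$; the paper's base case starts at $i=1$ rather than $i=0$, but that is cosmetic. You are in fact more explicit than the paper, which never unrolls the recursion and simply asserts $1/\poly(n)$ at each inductive step. One caveat in your calibration step: with $K=\Theta(\log n)$ and $L=n^{o(1)}$, the product $L^K$ is $n^{o(\log n)}$, which is \emph{not} automatically $\poly(n)$ (e.g.\ $L=n^{1/\sqrt{\log n}}$ gives $L^K=n^{\Theta(\sqrt{\log n})}$), so ``take $c$ large enough to absorb $K\cdot L^K$'' requires either $L=O(1)$ or a bound of the form $L \le n^{O(1/\log n)}$; this is a genuine soft spot, but it is one your write-up surfaces and the paper's induction silently shares, so it does not constitute a divergence from the paper's argument.
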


\begin{proof}

We can conduct math induction as the following:

Consider the first iteration. We can show that.
\begin{align*}
    \| \mathsf{FTF}_1(X_{\mathrm{init}}) - \mathsf{TF}_1(X_{\mathrm{init}}) \|_\infty =&~ \|\AAttC_1(X_{\mathrm{init}}) -\Attn_1(X_{\mathrm{init}}) \|_\infty\\
    \leq&~ 1/\poly(n)
\end{align*}
The first step is based on Definition~\ref{def:var_transformer} and Definition~\ref{def:fast_var_transformer},  and the final step follows from Lemma~\ref{lem:as23_attention}.

Assume that the following statement is true for the $k$-th iteration (where $k < K$):
\begin{align}\label{eq:k_iter_assume}
    \| \mathsf{FTF}_k(X_{\mathrm{init}}) - \mathsf{TF}_k(X_{\mathrm{init}}) \|_\infty\leq 1/\poly(n)
\end{align}
 
Then we move forward to consider the $k+1$-th iteration as the following:
\begin{align*}
     \| \mathsf{FTF}_{k+1}(X_{\mathrm{init}}) - \mathsf{TF}_{k+1}(X_{\mathrm{init}}) \|_\infty = &~ \|\AAttC_{k+1}(\Phi_{\mathrm{up},k}(\mathsf{FTF}_k(X_{\mathrm{init}}))) - \Attn_{k+1}(\Phi_{\mathrm{up},k}(\mathsf{TF}_k(X_{\mathrm{init}})))\|_\infty\\
     \leq &~ 1/\poly(n)
\end{align*}
The first step is based on Definition~\ref{def:var_transformer} and Definition~\ref{def:fast_var_transformer}, the second step is derived from Lemma~\ref{lem:error_propagation_for_var_transformer_layer}, the fact that each entry in the matrices can be represented using $O(\log(n))$ bits, and Eq.~\eqref{eq:k_iter_assume}.

Finally, we can use math induction to show that
\begin{align*}
    \| \mathsf{FTF}_K(X_{\mathrm{init}}) - \mathsf{TF}_K(X_{\mathrm{init}}) \|_\infty \leq 1/\poly(n)
\end{align*}
Thus, we complete the proof.
\end{proof}

\section{Error Analysis of VQVAE Decoder}\label{sec:error_analysis_vqvae}
In this section, we conduct the error analysis of the VQ-VAE Decoder. Firstly, the following lemma presents the error analysis of the Convolution Layer. 
\begin{lemma}[Error analysis of Convolution Layer] \label{lem:error_analysis_of_convolution_layer}
Assuming the conditions below are satisfied:
\begin{itemize}
    \item Let $X \in \R^{h \times w \times c_{\rm in}}$ denote the input feature map.
    \item Let $X' \in \R^{h \times w \times c_{\rm out}}$ denote the output feature map.
    \item Let $\phi_{\rm conv}: \R^{h \times w \times c_{\rm in}} \to \R^{h \times w \times c_{\rm out}}$ denote the convolution layer defined in Definition~\ref{def:conv_layer}.
    \item Let $\epsilon \in (0,0.1)$ denote the approximation error.
    \item Let $\|X-X'\|_\infty\leq \epsilon$.
    \item Let $R > 1$.
    \item Assume the value of each entry in matrices can be bounded by $R$.
    \item Let $C = 9 c_{\rm in}$ denote a constant.
\end{itemize}
Then we have
\begin{align*}
    \| \phi_{\rm conv}(X) - \phi_{\rm conv}(X') \|_\infty \leq C \epsilon R
\end{align*}
\end{lemma}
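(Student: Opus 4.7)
The plan is to prove this entry-wise using the explicit formula for $\phi_{\rm conv}$ from Definition~\ref{def:conv_layer}. Fix any $i \in [h]$, $j \in [w]$, $l \in [c_{\rm out}]$, and write out the difference $\phi_{\rm conv}(X)_{i,j,l} - \phi_{\rm conv}(X')_{i,j,l}$ using the convolution definition. The additive bias $b$ appears identically in both terms and therefore cancels, leaving only a linear combination of differences $X_{i+m-2,j+n-2,c} - X'_{i+m-2,j+n-2,c}$ weighted by kernel entries $K^l_{m,n,c}$.

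Next I would apply the triangle inequality to pull the absolute value inside the triple sum over $m \in [3]$, $n \in [3]$, $c \in [c_{\rm in}]$. Then I would use the two hypotheses in turn: $\|X - X'\|_\infty \le \epsilon$ bounds each factor $|X_{i+m-2,j+n-2,c} - X'_{i+m-2,j+n-2,c}|$ by $\epsilon$, and the assumption that every matrix entry (including kernel entries) is at most $R$ in magnitude bounds each $|K^l_{m,n,c}|$ by $R$. Multiplying these bounds gives an upper estimate of $\epsilon R$ per summand, and there are exactly $3 \cdot 3 \cdot c_{\rm in} = 9 c_{\rm in} = C$ summands, so each entry of the difference is bounded by $C \epsilon R$. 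Taking the maximum over $(i,j,l)$ then yields $\|\phi_{\rm conv}(X) - \phi_{\rm conv}(X')\|_\infty \le C \epsilon R$ by the definition of the $\ell_\infty$ norm.

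There is essentially no obstacle here: the convolution layer is a fixed affine map with a $3 \times 3 \times c_{\rm in}$ receptive field and a bounded kernel, so the Lipschitz constant in the $\ell_\infty$ norm is immediate from counting terms and using a single triangle inequality. The only minor subtlety is noting that the padding $p = 1$ ensures the indices $i+m-2, j+n-2$ are well-defined (with zeros used where they fall outside $[h] \times [w]$), so the differences at padded positions are simply $0$ and do not inflate the bound. The entire argument parallels the error analysis of the up-interpolation layer in Lemma~\ref{lem:ea_up_layer}, with the bicubic kernel values replaced by convolution kernel values (bounded by $R$) and the $4 \times 4$ support replaced by a $3 \times 3 \times c_{\rm in}$ support.
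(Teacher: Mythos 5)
Your proposal is correct and follows essentially the same argument as the paper's proof: expand the entry-wise difference from the convolution definition, apply the triangle inequality to the triple sum, bound each factor by $\epsilon$ and $R$ respectively, and count the $9c_{\rm in} = C$ summands. Your additional observations that the bias $b$ cancels and that padded positions contribute zero difference are correct and slightly more careful than the paper, but do not change the method.
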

\begin{proof}
    For each $i \in [h], j \in [w], l \in [c_{\rm out}]$, we have
    \begin{align*}
        | \phi_{\rm conv}(X)_{i,j,l} - \phi_{\rm conv}(X')_{i,j,l} | = &~ |\sum_{m=1}^3 \sum_{n=1}^3 \sum_{c = 1}^{c_{\rm in}} (X_{i+m-1,j+n-1,c}-X'_{i+m-1,j+n-1,c}) \cdot K^l_{m,n,c} |\\
        \leq &~\sum_{m=1}^3 \sum_{n=1}^3 \sum_{c = 1}^{c_{\rm in}} |(X_{i+m-1,j+n-1,c}-X'_{i+m-1,j+n-1,c}) \cdot K^l_{m,n,c}| \\
        \leq&~ \sum_{m=1}^3 \sum_{n=1}^3 \sum_{c = 1}^{c_{\rm in}} \epsilon \cdot R\\
        \leq &~ 9 \cdot c_{\rm in} \epsilon R\\
        = &~ C \epsilon R
    \end{align*}
    The 1st step is a consequence of Definition~\ref{def:conv_layer}, the 2nd step is based on the triangle inequality, the 3rd is a result of the lemma's conditions, the fourth arises from elementary algebra, and the final step stems from the definition of $C$.
\end{proof}

Then, we can show the lemma, which presents the error analysis of the Fast VQ-VAE Decoder.
\begin{lemma} [Error analysis of Fast VQ-VAE Decoder
]\label{lem:error_analysis_fast_vae_decoder_formal}
In the case that the following conditions are satisfied:
\begin{itemize}
    \item Let $X \in \R^{n \times d}$ denote the input matrix.
    \item Let the up-interpolation Layer $\phi_{\rm up}$ be defined as Definition~\ref{def:up_inter_layer_one_step}.
    \item Let the convolution layer $\phi_{\rm conv}$ be defined as Definition~\ref{def:conv_layer}.
    \item Let the attention layer $\Attn$ be defined as Definition~\ref{def:single_layer_attn}\item Let the fast attention layer $\AAttC$ be defined as Definition~\ref{def:aattc}.
    \item Let the VQ-VAE Decoder be the composition of a constant number of up-interpolation layers, convolutions layers, and attention layers. 
    \item Let the Fast VQ-VAE Decoder be defined as substituting all $\Attn$ layers in VQ-VAE with $\AAttC$ layers. 
\end{itemize}

Then, we can show that the approximation error of the Fast VQ-VAE Decoder can be bounded by $1 / \poly(n)$. 
\end{lemma}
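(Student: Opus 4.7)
The plan is to proceed by induction on the depth of the decoder, treating each of the three layer types as an error-amplification step whose constant we have already bounded in a previous lemma. Since the Fast VQ-VAE Decoder differs from the original only at the attention sub-layers (each $\Attn$ is replaced by its approximate counterpart $\AAttC$), the accumulated discrepancy between the two outputs is a sum of per-layer contributions, and each contribution can be tracked by composing the single-layer Lipschitz-type bounds.

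Concretely, I would first fix notation for the layer sequence $L_1, L_2, \dots, L_T$ of the VQ-VAE Decoder (where $T = O(1)$ by hypothesis), and let $L'_i$ denote the corresponding layer in the Fast VQ-VAE Decoder (so $L'_i = L_i$ for up-interpolation and convolution layers, and $L'_i = \AAttC$ whenever $L_i = \Attn$). Writing $Y_i$ and $Y'_i$ for the partial outputs after the first $i$ layers of the exact and fast decoders respectively (with $Y_0 = Y'_0 = X$), the triangle inequality gives
\begin{align*}
\|Y'_i - Y_i\|_\infty \leq \|L'_i(Y'_{i-1}) - L'_i(Y_{i-1})\|_\infty + \|L'_i(Y_{i-1}) - L_i(Y_{i-1})\|_\infty.
\end{align*}
The first term is the propagation of the input discrepancy through layer $i$ and the second is the fresh error introduced by replacing $L_i$ with $L'_i$ at this step (this second term is $0$ for up-interpolation and convolution layers).

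For the three layer types I would then invoke: Lemma~\ref{lem:ea_up_layer} to get $\|L'_i(Y'_{i-1}) - L'_i(Y_{i-1})\|_\infty \leq O(\|Y'_{i-1}-Y_{i-1}\|_\infty)$ when $L_i$ is an up-interpolation layer; Lemma~\ref{lem:error_analysis_of_convolution_layer} to bound the propagation factor by $C \cdot R$ when $L_i$ is a convolution layer (with $C = O(1)$ since $c_{\mathrm{in}}$ is constant); and Lemma~\ref{lem:polynomial_approximation_error_propagation} to combine the two terms at once when $L_i$ is an attention layer, yielding $\|L'_i(Y'_{i-1}) - L_i(Y_{i-1})\|_\infty \leq O(kR^{g+1}d) \cdot \max(\|Y'_{i-1}-Y_{i-1}\|_\infty, \epsilon_0)$, where $\epsilon_0$ denotes the approximation parameter of $\AAttC$. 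Unrolling the recursion over the $T$ layers, the total error is bounded by $A^T \cdot \epsilon_0$ for some $A = \poly(R, d, k)$, and because $T, R, d, k$ are all at most polynomial in $n$ (indeed $T = O(1)$), we can choose the $\AAttC$ accuracy $\epsilon_0 = 1/\poly(n)$ sufficiently small to make the final bound $1/\poly(n)$.

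The main obstacle is not the induction itself, which is straightforward once the per-layer lemmas are in place, but rather ensuring that the intermediate outputs $Y_i$ and $Y'_i$ remain entrywise bounded by the constant $R$ that appears in the hypotheses of Lemma~\ref{lem:error_analysis_of_convolution_layer} and Lemma~\ref{lem:polynomial_approximation_error_propagation}. I would handle this by observing that each of the three layer types maps bounded inputs to bounded outputs (since each entry is $O(\log n)$-bit representable throughout, as used in Lemma~\ref{error_analysis_fast_var_transformer_formal}), so the boundedness assumption can be carried through the induction without trouble. With this mild technical check, the depth-$T$ unrolling delivers the claimed $1/\poly(n)$ bound on $\|\mathrm{FastDec}(X) - \mathrm{Dec}(X)\|_\infty$.
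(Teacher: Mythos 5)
Your proposal is correct and follows essentially the same route as the paper: compose the per-layer error bounds (Lemma~\ref{lem:ea_up_layer} for up-interpolation, Lemma~\ref{lem:error_analysis_of_convolution_layer} for convolution, and the attention-approximation lemma for the replaced layers) over the constant number of layers, so the amplification factor is $\poly(n)$ and the initial $\AAttC$ accuracy can be chosen to absorb it. Your write-up is in fact more explicit than the paper's three-sentence proof, particularly in stating the recursion and flagging the need to keep intermediate entries bounded by $R$, which the paper leaves implicit.
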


\begin{proof}
By Lemma~\ref{lem:error_propagation_for_var_transformer_layer}, we have shown that fast attention computation will introduce an approximation error no more than $1 / \poly(n)$.

Then, by Lemma~\ref{lem:ea_up_layer} and Lemma~\ref{lem:error_analysis_of_convolution_layer}, we still can bounded have shown the approximation error by $1 / \poly(n)$ after passing another up-interpolation layer, convolutions layer.

Since VQ-VAE is a composition of a constant number of up-interpolation layers, convolution layers, and attention layers, the overall approximation error can still be bounded by $1 / \poly(n)$. 
\end{proof}
\section{Running Time}\label{sec:running_time}
In this section, we conduct the running time analysis of every component of the $\VAR$ model and the fast $\VAR$ model. In Section~\ref{sec:running_time_transformer}, we conduct the running time analysis of the $\VAR$ transformer and the fast transformer. In Section~\ref{sec:running_time_fmr}, we conduct the running time analysis of the feature map reconstruction layer. In Section~\ref{sec:running_time_vqvae}, we conduct the running time analysis of the VQVAE Decoder and fast VQVAE Decoder.

\subsection{Phase 1: Running Time of Token Maps Generation}\label{sec:running_time_transformer}
In this section, we present lemmas on the time complexity of $\VAR$ transformer defined in Definition~\ref{def:var_transformer} and fast $\VAR$ transformer defined in Definition~\ref{def:fast_var_transformer}.
\begin{lemma}[Running time of $\VAR$ Transformer]\label{lem:run_time_var}
If the following conditions hold:
\begin{itemize}
    \item Assume the $\VAR$ transformer defined in Definition~\ref{def:var_transformer} has $K$ attention layers.
    \item Let $k_1 \in [K]$ and $k_2 \in [K-1]$.
    
    \item Let $X_{\mathrm{init}} \in \R^{1\times 1 \times d}$ denote the first scale token map.
    \item Let $\alpha > 1$ denote the growth rate of the height and width of the token map at each level. Then for $k_1 \in [K]$, the $k_1$-th token map $r_{k_1} \in \R^{\alpha^{k_1-1} \times \alpha^{k_1-1} \times d}$.
    \item Let $r_K \in \R^{n \times n \times d}$ denote the last scale token map, where $n = \alpha^{K-1}$.
    \item Let $\Phi_{\mathrm{up},k_2}$ denote the $k_2$-th pyramid up-interpolation layer defined in Def~\ref{def:pyramid_up_interpolation_layer}.
    \item Let $\Attn_{k_1}$ denote the $k_1$-th attention layer defined in Definition~\ref{def:single_layer_attn}.
    \item Let $d = O(\log(n))$ denote the embedding size of each token.
\end{itemize}
then the time complexity of $\VAR$ transformer $\mathsf{TF}$ is $O(n^{4+o(1)})$.
\end{lemma}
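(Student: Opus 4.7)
The plan is to bound the cost layer by layer and then sum, observing that the attention layers dominate and form a geometric series whose last term gives $n^{4+o(1)}$.

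First I would compute the size of the input to each attention layer. By Definition~\ref{def:var_transformer}, the $k_1$-th attention layer receives a flattened concatenation of all scales $Y_1, \dots, Y_{k_1}$, so its input has
\begin{align*}
L_{k_1} \;=\; \sum_{i=1}^{k_1} h_i w_i \;=\; \sum_{i=1}^{k_1} \alpha^{2(i-1)} \;=\; \frac{\alpha^{2k_1}-1}{\alpha^2 - 1}
\end{align*}
rows and $d$ columns. In particular $L_K = \Theta(\alpha^{2(K-1)}) = \Theta(n^2)$.

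Next I would bound the per-layer running time. A standard (exact) single attention layer on an $L \times d$ input costs $O(L^2 d)$ to form $QK^\top$, apply entrywise $\exp$, normalize, and multiply by $V$. Hence $\Attn_{k_1}$ costs $O(L_{k_1}^2 \, d) = O(\alpha^{4(k_1-1)} \, d)$. For the $k_2$-th pyramid up-interpolation layer $\Phi_{\mathrm{up},k_2}$, every output entry is computed from a constant ($4\times 4$) neighborhood via the bicubic spline kernel (Definition~\ref{def:up_inter_layer_one_step}), so its cost is linear in the total number of output entries across the $k_2+1$ scales, which is $O(\alpha^{2k_2} d)$, strictly dominated by the attention cost at the same stage.

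Then I would sum over all $K$ stages. The total cost is
\begin{align*}
\sum_{k_1=1}^{K} O(\alpha^{4(k_1-1)} d) \;+\; \sum_{k_2=1}^{K-1} O(\alpha^{2 k_2} d) \;=\; O(\alpha^{4(K-1)} d)
\end{align*}
since the first sum is a geometric series with ratio $\alpha^4 > 1$ and is therefore dominated (up to constants) by its last term, while the second sum is of lower order. Substituting $n = \alpha^{K-1}$ and $d = O(\log n)$ yields a total of $O(n^4 \log n) = O(n^{4+o(1)})$.

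I do not expect a real obstacle here; the only delicate point is correctly identifying the input length to $\Attn_{k_1}$ as the cumulative length $L_{k_1}$ rather than the single-scale length $\alpha^{2(k_1-1)}$, because $\VAR$ attention is applied to the concatenation of all previously generated scales. Once that is in place, the geometric-series collapse to the last term and the observation that up-interpolation is linear-time per scale immediately give the claimed $O(n^{4+o(1)})$ bound.
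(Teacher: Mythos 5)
Your proposal is correct and follows essentially the same route as the paper: identify the cumulative input length $L_{k_1}=\sum_{i=1}^{k_1}\alpha^{2(i-1)}=\frac{\alpha^{2k_1}-1}{\alpha^2-1}$ for the $k_1$-th attention layer, charge $O(L_{k_1}^2 d)$ per layer, sum the resulting geometric series to get $O(n^4 d)$, and note that the up-interpolation layers contribute only $O(n^{2+o(1)})$. The only cosmetic difference is that you collapse the geometric series to its last term while the paper bounds each $L_{k_1}\le 2\alpha^{2k_1-2}$ and sums explicitly; both yield $O(n^{4+o(1)})$.
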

\begin{proof}
    The runtime computation of the $\VAR$ transformer can be divided into two components: the runtime of the Attention layers $\Attn_{1},\dots,\Attn_{K}$ and the runtime of the pyramid up-interpolation layer $\Phi_{\mathrm{up},1},\dots,\Phi_{\mathrm{up},K-1}$.

    {\bf Part 1.} Running time of the attention layers $\Attn_{1},\dots,\Attn_{K}$.
    
    Firstly, we consider the $k_1$-th autoregressive token map generation. We use $L_{k_1}$ to denote the total number of the tokens input into the $\VAR$ attention layer at $k_1$-th step and $L_{k_1}$ can be calculated as the following:
    \begin{align}\label{eq:L_k}
        L_{k_1} = &~\sum_{i=1}^{k_1} (\alpha^{i-1})^2\notag \\
        = &~ \frac{\alpha^{2k_{1}}-1}{\alpha^2-1} \notag\\
        \leq  &~ \frac{\alpha^{2k_{1}}}{\alpha^2-1} \notag\\
        \leq  &~ \frac{\alpha^{2k_{1}}}{0.5\alpha^2} \notag\\
        = &~ 2\cdot \alpha^{2k_{1}-2}
    \end{align}
    In the first step, we use the condition of this lemma. The second and third steps are a consequence of basic algebra. The fourth step is due to $\alpha \geq 2$, and the last step is derived from elementary algebra.

    Thus, at the $k_1$-th step, we use $X_{k_1} \in \R^{L_{k_1} \times d}$ to denote the input matrix. The attention computation cost at the $k_1$-th step is $O(L_{k_1}^2 d)$. We then sum up the computation time across all $K$ steps:
    \begin{align*}
        {\cal T}_{\mathrm{attn}} = &~ O( (L_1^2+L_2^2+ \dots+L_K^2) \cdot d)\\
        \leq &~ O(\sum_{k_1=1}^{(\log_\alpha {n})+1} (2\cdot \alpha^{2k_1-2})^2 \cdot d)\\
        = &~ O(\sum_{k_1=1}^{(\log_\alpha {n})+1} 4 \alpha^{4k_1-4} \cdot d)\\
        = &~ O( n^{4} d )\\
        = &~ O( n^{4+o(1)})
    \end{align*}
    In the first step, the total time is calculated by summing the attention computation times for each $k_1$-th step. The second step follows directly from Eq.~\eqref{eq:L_k}, while the third and fourth step is a result of basic algebraic manipulation. The last step follows from the condition that $d = O(\log(n))$.

    {\bf Part 2.} Running time of the pyramid up-interpolation layers $\Phi_{\mathrm{up},1},\dots,\Phi_{\mathrm{up},K-1}$.

    We begin by considering the runtime of the last pyramid up-interpolation layer, $\Phi_{\mathrm{up},K-1}$. From Definition~\ref{def:var_transformer}, we know that the output of $\Phi_{\mathrm{up},K-1}$ serves as the input to $\Attn_{K}$. Therefore, the number of tokens generated by $\Phi_{\mathrm{up},K-1}$ is $L_K \leq 2n^2$, with the inequality following from Eq.~\ref{eq:L_k}. Furthermore, by Eq.~\eqref{eq:ui_entry_cal}, we know that every token generated by $\Phi_{\mathrm{up},K-1}$ needs $O(d)$ times multiplications and $O(d)$ times additions. Thus, the running time of $\Phi_{\mathrm{up},K-1}$ is
    \begin{align}\label{eq:time_up_K_1}
        \mathcal{T}^{K-1}_{\mathrm{up}} \leq&~ O(d) \cdot L_K\notag\\
        =&~ O(n^2 d)
    \end{align}
    where the step comes from summing up the time cost for each token generated by $\Phi_{\mathrm{up},K-1}$.

    For each $k' \in [K-2]$, the number of tokens generated by $\Phi_{\mathrm{up},k'}$ is less than $L_K$ which is due to Definition~\ref{def:pyramid_up_interpolation_layer}. Then we can compute the total rumming time for pyramid up-interpolation layers $\Phi_{\mathrm{up},1},\dots,\Phi_{\mathrm{up},K-1}$:
    \begin{align*}
        \mathcal{T}_{\mathrm{up}} \leq&~ (K-1) \cdot \mathcal{T}^{K-1}_{up}\\
        =&~ O(\log(n)) \cdot O(n^2 d)\\
        =&~ O(n^{2+o(1)})
    \end{align*}
    where the step comes from summing up the running time of $\Phi_{\mathrm{up},1},\dots,\Phi_{\mathrm{up},K-1}$, the second step follows from $n = \alpha^{K-1}$ and Eq~\eqref{eq:time_up_K_1} and the last step follows from $d = O(\log(n))$ and simple algebra.

    Thus, by summing up the $\mathcal{T}_{\mathrm{attn}}$ and $\mathcal{T}_{\mathrm{up}}$, we know that the time complexity of $\VAR$ transformer is $O(n^{4+o(1)})$.

\end{proof}

Then, we show a lemma that demonstrates a fast way to compute attention in \cite{as23}.
\begin{lemma}[Fast Attention Computation, Theorem 1.4 of \cite{as23}]\label{lem:as23_attention}
Let $\AAttC$ be defined as Definition~\ref{def:aattc}. Then we have
$\mathsf{AAttC}(n,d=O(\log n), R=\Theta(\sqrt{\log n}), \delta = 1/\poly(n))$ can be solved in time $\mathcal{T}_{\rm mat}(n,n^{o(1)},d) = n^{1+o(1)}$. 

\end{lemma}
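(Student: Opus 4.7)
The plan is to apply the polynomial method of [AS23]: replace the exponential kernel by a low-degree polynomial approximation so that the attention matrix admits an explicit low-rank factorization, and then compute the output through a few small matrix products whose sizes are all nearly linear in $n$.

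First, I would observe that under the bounds $\|Q\|_\infty,\|K\|_\infty \leq R$ and $d=O(\log n)$, every inner product $\langle Q_{i,*},K_{j,*}\rangle$ lies in a bounded interval $[-B,B]$ for an explicit $B=O(\poly\log n)$, and the analysis of [AS23] shows that at $R=\Theta(\sqrt{\log n})$ there is still a polynomial approximation of $\exp$ of degree $g=n^{o(1)}$ with uniform error $1/\poly(n)$ on $[-B,B]$. The standard construction uses a truncated Taylor series together with the tail estimate $\sum_{t>g}B^t/t!\leq 1/\poly(n)$, which fixes the required $g$.

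Next, I would expand the polynomial $P(\langle q,k\rangle)$ monomial by monomial. Every monomial of total degree $t\le g$ in the $2d$ coordinates of $(q,k)$ factors as a product of a $q$-only part and a $k$-only part, so $P(\langle q_i,k_j\rangle)=\langle \phi(q_i),\psi(k_j)\rangle$ for explicit feature maps $\phi,\psi:\R^d\to\R^k$ with $k\le \binom{g+2d}{2d}$. Stacking these vectors yields $U,V\in\R^{n\times k}$ such that $\|UV^\top - A\|_\infty \leq 1/\poly(n)$. Plugging $g,d=n^{o(1)}$ into the binomial bound delivers $k=n^{o(1)}$, which is the engine behind the speedup.

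With $U,V$ in hand, I would compute the approximate attention output as $\widetilde{D}^{-1}\bigl(U(V^\top V_{\mathrm{val}})\bigr)$, where $V_{\mathrm{val}}\in\R^{n\times d}$ is the value matrix and $\widetilde{D}=\diag\bigl(U(V^\top\mathbf{1}_n)\bigr)$. Each of these products has the form $\mathcal{T}_{\mathrm{mat}}(n,k,\max\{k,d\})$, which for $k,d=n^{o(1)}$ is $n^{1+o(1)}$. A short error-propagation argument, combining the uniform lower bound $D_{ii}\ge n\cdot e^{-B}$ with the entrywise error of $UV^\top$, converts the $1/\poly(n)$ matrix approximation into the required $1/\poly(n)$ additive error on the final output, meeting Definition~\ref{def:aattc}.

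The main obstacle, and the technical heart of [AS23], is showing that the two constraints ``polynomial error $\le 1/\poly(n)$ on $[-B,B]$'' and ``$k=\binom{g+2d}{2d}=n^{o(1)}$'' are simultaneously feasible at the threshold $R=\Theta(\sqrt{\log n})$. This is a delicate calibration of $g$ against $B$ and $d$: any larger $R$ forces $g$ to grow fast enough that $k$ ceases to be $n^{o(1)}$, and the whole factorization collapses into $\Omega(n^2)$ work, exactly matching the SETH-based lower bound in Lemma~\ref{lem:lower_bound_attc}.
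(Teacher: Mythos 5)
The paper gives no proof of this lemma at all: it is imported verbatim as a black box (Theorem 1.4 of \cite{as23}), so there is nothing internal to compare against. Your sketch is, in outline, a faithful reconstruction of the actual argument in \cite{as23}: truncate the Taylor series of $\exp$ on the bounded interval of inner products, expand the resulting polynomial into a tensor of monomials to obtain feature maps $\phi,\psi$ and an explicit rank-$k$ factorization $UV^\top$ of the attention matrix with $k \le \binom{g+2d}{2d} = n^{o(1)}$, and then compute $\widetilde{D}^{-1}U(V^\top V_{\mathrm{val}})$ with a handful of $n \times n^{o(1)}$ matrix products, finishing with an error-propagation step that uses the lower bound on the diagonal of $D$. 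That is exactly the right engine, and it is the same mechanism the paper later relies on in Lemmas~\ref{lem:aattc_x_prime_aattc_x_error_analysis} and~\ref{lem:fast_var_transformer_formal}.

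One substantive point needs correction. You assert that the calibration ``polynomial error $\le 1/\poly(n)$'' versus ``$k=n^{o(1)}$'' is feasible \emph{at} the threshold $R=\Theta(\sqrt{\log n})$. It is not: Theorem 1.4 of \cite{as23} requires $R=o(\sqrt{\log n})$, and at $R=\Theta(\sqrt{\log n})$ the problem is SETH-hard by the very lower bound you cite (Lemma~\ref{lem:lower_bound_attc}), so your last paragraph contradicts itself. The degree needed to approximate $e^x$ on $[-B,B]$ with $B=\Theta(R^2)$ to $1/\poly(n)$ accuracy is $g=\Theta(\log n)$ once $R^2=\Theta(\log n)$, and then $\binom{g+2d}{2d}$ is no longer $n^{o(1)}$; only $R^2=o(\log n)$ keeps $g=o(\log n)$ and hence $k=n^{o(1)}$. (The lemma statement in the paper carries the same $\Theta$-versus-$o$ typo --- compare it with Theorem~\ref{thm:upper_bound:formal}, which correctly assumes $R=o(\sqrt{\log n})$ --- but your proof should prove the corrected statement rather than endorse the impossible one.) Also note that a degree bound of the form $g=n^{o(1)}$, as you write it, is too weak on its own: the rank bound $\binom{g+2d}{2d}=n^{o(1)}$ genuinely needs $g=o(\log n)$, not merely $g=n^{o(1)}$.
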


Now we can apply the result Lemma~\ref{lem:as23_attention} to the $\VAR$ Transformer.
\begin{lemma}[Running time of Fast $\VAR$ Transformer, formal version of Lemma~\ref{lem:fast_var_transformer_informal}]\label{lem:fast_var_transformer_formal}
Assuming the conditions below are satisfied:
\begin{itemize}
    \item Assume the fast $\VAR$ transformer defined in Definition~\ref{def:fast_var_transformer} has $K$ attention layers.
    \item Let $k_1 \in [K]$ and $k_2 \in [K-1]$.
    
    \item Let $X_{\mathrm{init}} \in \R^{1\times 1 \times d}$ denote the first scale token map.
    \item Let $\alpha > 1$ denote the growth rate of the height and width of the token map at each level. Then for $k_1 \in [K]$, the $k_1$-th token map $r_{k_1} \in \R^{\alpha^{k_1-1} \times \alpha^{k_1-1} \times d}$.
    \item Let $r_K \in \R^{n \times n \times d}$ denote the last scale token map, where $n = \alpha^{K-1}$.

    \item Let $d = O(\log n)$ denote the embedding size of each token.
    \item Let $\Phi_{\mathrm{up},k_2}$ denote the $k_2$-th pyramid up-interpolation layer defined in Def~\ref{def:pyramid_up_interpolation_layer}.
    \item Let $\AAttC_{k_1}$ denote the $k_1$-th approximate attention layer defined in Definition~\ref{def:aattc}.
\end{itemize}
Then, the total running time of the fast $\VAR$ transformer $\mathsf{FTF}$ can be accelerated to $O(n^{2+o(1)})$.
\end{lemma}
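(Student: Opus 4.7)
The plan is to decompose the total runtime of the fast $\VAR$ transformer $\mathsf{FTF}$ into two contributions, exactly as in the proof of Lemma~\ref{lem:run_time_var}: the $K$ approximate attention layers $\AAttC_1, \ldots, \AAttC_K$ and the $K-1$ pyramid up-interpolation layers $\Phi_{\mathrm{up},1}, \ldots, \Phi_{\mathrm{up}, K-1}$. The only difference from Lemma~\ref{lem:run_time_var} is that each exact attention $\Attn_{k_1}$ has been replaced by the polynomial low-rank surrogate $\AAttC_{k_1}$, so the up-interpolation accounting is completely unchanged.

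For the attention part, I first recall from Eq.~\eqref{eq:L_k} in the proof of Lemma~\ref{lem:run_time_var} that at step $k_1$ the input sequence length satisfies $L_{k_1} \le 2 \alpha^{2k_1 - 2}$, and in particular $L_K \le 2 n^2$. Invoking Lemma~\ref{lem:as23_attention} (Theorem 1.4 of \cite{as23}) with the hypotheses $d = O(\log n)$ and the implicit hypothesis $R = o(\sqrt{\log n})$ of the fast regime, each call $\AAttC_{k_1}$ on an input of length $L_{k_1}$ runs in time $L_{k_1}^{1 + o(1)}$. Summing over $k_1 \in [K]$ gives
\begin{align*}
    \mathcal{T}_{\mathrm{aattc}} \;=\; \sum_{k_1 = 1}^{K} L_{k_1}^{1 + o(1)} \;\le\; \sum_{k_1 = 1}^{K} (2 \alpha^{2k_1 - 2})^{1 + o(1)},
\end{align*}
which is a geometric series in $k_1$ dominated by its last term, so it is bounded by $O(L_K^{1+o(1)}) = O((n^2)^{1+o(1)}) = O(n^{2+o(1)})$. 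This is where the quartic-to-quadratic speedup happens and is the only nontrivial step.

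For the up-interpolation part, I directly reuse Part 2 of the proof of Lemma~\ref{lem:run_time_var}: the layers $\Phi_{\mathrm{up},k_2}$ are not touched by the fast attention construction, so each costs at most $O(n^2 d)$ time, and summing over the $K - 1 = O(\log n)$ layers yields $\mathcal{T}_{\mathrm{up}} = O(n^{2+o(1)})$ since $d = O(\log n)$. Adding the two contributions $\mathcal{T}_{\mathrm{aattc}} + \mathcal{T}_{\mathrm{up}} = O(n^{2+o(1)})$ gives the claimed bound.

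The main obstacle, if any, is simply confirming that the hypotheses of Lemma~\ref{lem:as23_attention} are met uniformly across all $K$ attention layers: the hidden dimension $d$ stays at $O(\log n)$, the entry bound $R$ remains $o(\sqrt{\log n})$, and the approximation accuracy $\delta = 1/\poly(n)$ is maintained — which is consistent with the stated hypotheses and with the error-propagation guarantee of Lemma~\ref{error_analysis_fast_var_transformer_formal} ensuring that the inputs to each subsequent $\AAttC_{k_1}$ still have entries of the required magnitude. Once this is checked, the geometric-series collapse to $L_K^{1+o(1)}$ is immediate.
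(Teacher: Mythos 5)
Your proposal is correct and mirrors the paper's own proof almost line for line: the same two-part decomposition into $\mathcal{T}_{\mathrm{aattc}}$ and $\mathcal{T}_{\mathrm{up}}$, the same bound $L_{k_1} \le 2\alpha^{2k_1-2}$ from Eq.~\eqref{eq:L_k}, the same invocation of Lemma~\ref{lem:as23_attention} to replace $O(L_{k_1}^2 d)$ with $O(L_{k_1}^{1+o(1)} d)$, the same geometric-series collapse to the last term $L_K^{1+o(1)} = O(n^{2+o(1)})$, and the same reuse of the up-interpolation cost from Lemma~\ref{lem:run_time_var}. No substantive differences.
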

\begin{proof}
    The runtime computation of the fast $\VAR$ transformer can be divided into two components: the runtime of the approximate attention layers $\AAttC_{1},\dots,\AAttC_{K}$ and the runtime of the pyramid up-interpolation layer $\Phi_{\mathrm{up},1},\dots,\Phi_{\mathrm{up},K-1}$.

    {\bf Part 1.} Running time of the attention layers $\AAttC_{1},\dots,\AAttC_{K}$.
    
    To generate the $k$-th token map, let the input be $X \in \R^{L_k \times d}$. And we have
    \begin{align}\label{eq:fast_l_k}
        L_k \leq 2 \cdot \alpha^{2k - 2}
    \end{align}
    where this step is a consequence of Eq.~\eqref{eq:L_k}. So the transformer computation cost at the $k$-th step can be improved from $O(L_k^2 d)$ to $O(L_k^{1+o(1)} d)$ by using the result of Lemma~\ref{lem:as23_attention}. Then, we sum up the computation time across all $K$ steps:
    \begin{align*}
        {\cal T}_{\mathrm{aattc}} = &~ O( (L_1+L_2+ \dots+L_K) \cdot d)\\
        \leq &~ O(\sum_{k=1}^{\log_{\alpha} {n}+1} (2\cdot \alpha^{2k-2})^{1+o(1)} \cdot d)\\
        = &~ O( n^{2+o(1)} d )\\
        = &~ O(n^{2+o(1)})
    \end{align*}
    where the first step follows from summing up the computation time of $\AAttC_{1},\dots,\AAttC_{K}$, the second step follows from Eq.~\eqref{eq:fast_l_k}, the third step follows from simple algebra and the last step follows from the condition that $d = \log(n)$.

    {\bf Part 2.} Running time of the pyramid up-interpolation layers $\Phi_{\mathrm{up},1},\dots,\Phi_{\mathrm{up},K-1}$.

    The total running time for pyramid up-interpolation layers $\Phi_{\mathrm{up},1},\dots,\Phi_{\mathrm{up},K-1}$ is the same as Lemma~\ref{lem:run_time_var}:
    \begin{align*}
        \mathcal{T}_{\mathrm{up}} = O(n^{2+o(1)})
    \end{align*}

    Thus, by summing up the $\mathcal{T}_{\mathrm{aattc}}$ and $\mathcal{T}_{\mathrm{up}}$, we know that the time complexity of $\VAR$ transformer is $O(n^{2+o(1)})$.
    
\end{proof}

\subsection{Phase 2: Running Time of Feature Map Reconstruction}\label{sec:running_time_fmr}
In this section, we analyze the total runtime of the $\VAR$ models for feature map reconstruction.

\begin{lemma}[Running time of Feature Map Reconstruction Layer, formal version of Lemma~\ref{lem:run_time_fm_reconstruction_informal}]\label{lem:run_time_fm_reconstruction_formal}
If the following conditions hold:
\begin{itemize}
    \item Let $K \in \mathbb{N}$ denote the total number of the token maps.
    \item Let $k \in [K]$.
    \item Let $d$ denote the embedding size of each token.
    \item Let $X_{\mathrm{init}} \in \R^{1 \times 1 \times d}$ denote the initial token map.
    \item Let $\alpha > 1$ denote the growth rate of the height and width of the token map at each level. Then for $k \in [K]$, the $k$-th token map $r_k \in \R^{\alpha^{k-1} \times \alpha^{k-1} \times d}$.
    \item Let $r_K \in \R^{n \times n \times d}$ denote the last scale token map, where $n = \alpha^{K-1}$.
\end{itemize}
then the total runtime of the $\VAR$ models for feature map reconstruction is $O(n^{2+o(1)})$.
\begin{proof}
For each $k \in [K]$, $\VAR$ Model will up-interpolate token map $r_k \in \R^{\alpha^{k-1}\times \alpha^{k-1}\times d}$ to $r'_k \in \R^{n\times n\times d}$ by using bicubic interpolation defined in Definition~\ref{def:up_inter_layer_one_step}. Specifically, the computation of each token in $r'_k$ requires $O(d)$ multiplications and $O(d)$ additions which is due to Eq.~\eqref{eq:ui_entry_cal}. Thus, the computation cost for the up-interpolation of each token map is 
\begin{align*}
    {\cal T}^k_{\rm up} = &~ O(d) \cdot n^2\\
    =&~ O(n^2 d)
\end{align*}
There are total $K = O(\log n)$ token maps needed to be up-interpolated, so the total time for up-interpolation is
\begin{align*}
    {\cal T}_{\rm up} =&~ O(n^2 d) \cdot O(\log n)\\
    = &~ O(n^{2} d \log n)
\end{align*}
where the first step follows from summing up the running time of $K$ scale token maps, and the second step follows from simple algebra.

Furthermore, to address the information loss in the up-interpolation process, the $\VAR$ Model uses a convolution operation $\phi_{\mathrm{conv}}(\cdot)$ on the token map $\{r'_1,\dots,r'_K\}$ generated by up-interpolation. We assume the convolution kernel size is $3 \times 3 \times d$, and the convolution layer does not change the dimension of each token map, i.e., for each $i \in [K]$, $\phi(r'_i) \in \R^{n \times n \times d}$. Hence, for every entry in $\phi(r'_i)$, it needs $O(d)$ operations. Then, we can have the convolution computation time for one token map is
\begin{align*}
    {\cal T}^{k}_{\rm conv} = &~ O(d) \cdot n^2 d \\
    =&~ O(n^2 d^2)
\end{align*}
In the first step, the total computation time is obtained by adding the times for the $n \times n \times d$ entries, while the second step results from simple algebra.

There are total $O(\log n)$ token maps needed to be passed to the convolution layer, so the total time for convolution operations is
\begin{align*}
    {\cal T}_{\rm conv} = &~ O(\log n) \cdot O(n^2 d^2)\\
     = &~ O(n^{2}d^2 \log n)
\end{align*}

Then, the $\VAR$ Model will sum up $O(\log n)$ token maps processed by convolution layers, where each token map has a size of $n \times n \times d$. Thus, the computation cost of addition needs
\begin{align*}
    {\cal T}_{\rm add} = &~ O(\log n) \cdot (n^2 d)\\
    =&~ O(n^{2} d \log n)
\end{align*}
In the first step, token maps are added element-wise, and there are $O(\log(n))$ token maps in total, while the second step results from basic algebra.

Hence, the running time of feature map reconstruction is as follows:
\begin{align*}
    {\cal T}_{\rm rc} =&~ {\cal T}_{\rm up} + {\cal T}_{\rm conv}+{\cal T}_{\rm add}\\
    =&~ O(n^{2}d^2 \log n)\\
    =&~ O(n^{2+o(1)})
\end{align*}
The first step is derived by summing the times for up-interpolation operations, convolution operations, and token map additions, while the second step is due to basic algebra. The last step follows from $d = O(\log n)$.
\end{proof}
    
\end{lemma}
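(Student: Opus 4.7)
The plan is to split the feature map reconstruction pipeline into its three constituent sub-routines described in Section~\ref{sec:phase_2} --- up-interpolating each intermediate token map $r_k$ to the common resolution $n \times n \times d$, applying a convolution layer to each up-interpolated map to offset the interpolation loss, and finally summing the $K$ resulting maps element-wise --- and to bound the runtime of each piece separately before adding them together. Since $K = \log_\alpha n + 1 = O(\log n)$ and $d = O(\log n)$, the target bound $O(n^{2+o(1)})$ will follow as long as each sub-routine costs at most $\mathrm{poly}(d)\cdot n^2$ per scale, with the $K = O(\log n)$ scales contributing only a polylogarithmic overhead.

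For the up-interpolation step I would invoke Definition~\ref{def:up_inter_layer_one_step}: the bicubic spline kernel has constant support ($s,t \in \{-1,0,1,2\}$), so by Eq.~\eqref{eq:ui_entry_cal} each of the $n^2 d$ output entries of $\phi_{{\rm up},k}(r_k)$ needs only $O(1)$ kernel evaluations and multiply-adds. Crucially, the cost is driven by the output resolution, not by the (much smaller) input resolution $\alpha^{k-1}\times\alpha^{k-1}$. This gives $O(n^2 d)$ per scale and $O(n^2 d \log n)$ summed over the $K$ scales. For the convolution step I would use Definition~\ref{def:conv_layer}: each of the $n^2 d$ output entries aggregates a $3\times 3\times d$ window and so costs $O(d)$ operations, making one layer run in $O(n^2 d^2)$ time and the total over $K$ scales $O(n^2 d^2 \log n)$. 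Finally the element-wise sum of $K$ maps of size $n^2 d$ contributes only $O(n^2 d \log n)$.

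Adding the three contributions gives $O(n^2 d^2 \log n)$, and substituting $d = O(\log n)$ collapses this to $O(n^{2+o(1)})$, which is exactly the claimed bound. There is no genuine obstacle here --- the argument is essentially an operation count combined with a sum over the $K = O(\log n)$ scales. The one subtlety worth emphasizing is that, because each intermediate token map is first blown up to the final resolution, the per-scale cost is governed uniformly by $n^2$ rather than by the varying input size $\alpha^{2(k-1)}$, so the geometric-series argument used for Stage~1 in Lemma~\ref{lem:run_time_var} is replaced here by a straightforward multiplication by $K$.
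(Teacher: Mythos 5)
Your proposal matches the paper's proof essentially step for step: the same three-way decomposition into up-interpolation, convolution, and element-wise summation, the same per-scale operation counts of $O(n^2 d)$, $O(n^2 d^2)$, and $O(n^2 d)$ respectively, multiplication by $K = O(\log n)$ scales, and the final substitution $d = O(\log n)$ to reach $O(n^{2+o(1)})$. The argument is correct and complete as written.
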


\subsection{Phase 3: Running Time of VQ-VAE Decoder}\label{sec:running_time_vqvae}

In this section, we analyze the running time of the VQ-VAE Decoder and fast VQ-VAE Decoder.
\begin{lemma}[Running time of VQ-VAE Decoder]\label{lem:run_time_decoder}
If the following conditions hold:
\begin{itemize}
    \item Let $k_1,k_2,k_3 \in \mathbb{N}$ be constant numbers.
    \item Given $X \in \R^{n \times n \times d}$ as the input feature map.
    \item Let $d = O(\log n)$
    \item Assume that there are $k_1$ up-interpolation layers $\phi_{\rm up}$ defined in Definition~\ref{def:up_inter_layer_one_step}.
    \item Given a feature map $M \in \R^{h \times w \times d}$.   
    For $i \in [k_1]$, we assume $i$-th up-interpolation layer's output $\phi^i_{\rm up}(M) \in \R^{O(h) \times O(w) \times d}$.
    \item We assume there are $k_2$ attention layers $\Attn$ defined in Definition~\ref{def:single_layer_attn}.
    \item Given a feature map $M \in \R^{h \times w \times d}$.   
    For $i \in [k_1]$, the $i$-th attention layer's output $\Attn(M) \in \R^{h \times w \times d}$.
    \item We assume there are $k_3$ convolution layers $\phi_{\rm conv}$ defined in Definition~\ref{def:conv_layer}.
    \item Given a feature map $M \in \R^{h \times w \times d}$.   
    For $i \in [k_1]$, we assume $i$-th convolution layer's output $\phi^i_{\rm conv}(M) \in \R^{h \times w \times O(d)}$.
\end{itemize}
then the total running time of the VQ-VAE Decoder is $O(n^{4+o(1)})$.
\begin{proof}
    By the condition, we can have that for each $l \in [k_1+k_2+k_3]$, the size of the output $M^l$ of any intermediate layer (up-interpolation layer, convolution layer, attention layer) is $O(n) \times O(n) \times O(d)$. The running time can be computed as follows:

    {\bf Part 1. Running time of Up-interpolation layers.} For each $l \in [k_1]$, we assume $M^l \in \R^{O(n) \times O(n) \times O(d)}$ as the output feature map from the $l$-th up-interpolation layer. For every token of $M^l$, it needs $O(d)$ multiplications and $O(d)$ additions (see more details in Definition~\ref{def:up_inter_layer_one_step}). Thus, the computation cost for the feature map $M^l$ is
    \begin{align*}
        {\cal T}^l_{\rm up} =&~ O(d) \cdot O(n) \cdot O(n) \\
        = &~ O(n^2 d)\\
        =&~ O(n^{2+o(1)})
    \end{align*}
    The first step is derived by summing the computation costs for each entry of  $M^l$,  while the second step is due to basic algebra. The last step follows from the condition that $d = O(\log n)$.
    
    Since there are $k_1$ up-interpolation layers in total, the total time of the up-interpolation layers in the VQ-VAE decoder is
    \begin{align}\label{eq:decoder_up_time}
        {\cal T}_{\rm up} = &~ k_1 \cdot O(n^{2+o(1)})\notag \\
        =&~ O(n^{2+o(1)})
    \end{align}
    The first step is derived by summing the computation costs for each up-interpolation layer, while the second step is due to the fact that $k_1$ is a constant number.
    
    {\bf Part 2. Running time of Attention layers.} For each $l \in [k_2]$, we assume $M^{l-1} \in \R^{O(n) \times O(n) \times O(d)}$ as the  feature map used as input for the $l$-th attention layer. We can consider the input of this attention layer as a sequence with length $O(n^2)$ and embedding dimension $O(d)$. Hence, the computation cost of this attention layer is
    \begin{align*}
        {\cal T}^l_{\rm attn} =&~ O(n^4 d)\\
        =&~ O(n^{4+o(1)})
    \end{align*}
    where the first step follows from computation cost for standard attention computation, the second step follows from the condition $d = O(\log n)$.
    
    Since there are $k_2$ attention layers in total, the total time of the attention layers in the VQ-VAE decoder is
    \begin{align*}
        {\cal T}_{\rm attn} =&~ k_2 \cdot O(n^{4+o(1)})\\
        =&~O(n^{4+o(1)})
    \end{align*}
    The first step is derived by summing the computation costs for each attention layer, while the second step is due to the fact that $k_2$ is a constant.

    {\bf Part 3. Running time of Convolution layers.} For each $l \in [k_3]$, we assume $M^l \in \R^{O(n) \times O(n) \times O(d)}$ as the output feature map of the $l$-th convolution layer. For every entry of $M^l$, it needs $O(d)$ operations. Thus, the computation cost for the feature map $M^l$ is
    \begin{align*}
        {\cal T}^l_{\rm conv} =&~ O(d) \cdot O(n^2 d)\\
        =&~ O(n^2 d^2)\\
        =&~ O(n^{2+o(1)})
    \end{align*}
    The first step is derived by summing the computation costs for each entry of $M^l$, while the second step stems from basic algebra. The last step follows from the condition that $d = O(\log n)$.

    Since there are $k_3$ convolution layers in total, the total time of the convolution layers in the VQ-VAE decoder is
    \begin{align}\label{eq:decoder_conv_time}
        {\cal T}_{\rm conv} = &~ k_3 \cdot O(n^{2+o(1)})\notag \\
        =&~ O(n^{2+o(1)})
    \end{align}
    The first step is derived by summing the computation costs for each convolution layer, while the second step is due to the fact that $k_3$ is a constant.

    Finally, the computation cost of the VQ-VAE decoder can be calculated as follows:
    \begin{align*}
        {\cal T}_{\rm dec} = &~{\cal T}_{\rm up} +{\cal T}_{\rm attn}+{\cal T}_{\rm conv}\\
        =&~ O(n^{4+o(1)})
    \end{align*}
    The first step results from summing the computation costs of up-interpolation, attention, and convolution layers, while the second step follows from simple algebra.
    
\end{proof}

\end{lemma}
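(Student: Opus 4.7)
The plan is to decompose the VQ-VAE Decoder into its three kinds of constituent layers (up-interpolation, attention, convolution) per the hypothesis, bound each layer's runtime separately, multiply by the constant layer counts $k_1,k_2,k_3$, and then observe that the attention layers dominate. The structural fact I will lean on throughout is the lemma's hypothesis that every intermediate feature map has spatial dimensions $O(n)\times O(n)$ and channel dimension $O(d)=O(\log n)$, so each layer's cost depends only on $n$ and $d$ and is independent of where in the pipeline it sits.

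First I would handle the up-interpolation layers. By Definition~\ref{def:up_inter_layer_one_step}, each entry of an output feature map in $\R^{O(n)\times O(n)\times O(d)}$ is a fixed-size bicubic combination of at most a constant number of input entries across the $d$ channels; counting $O(d)$ work per output entry and $O(n^2)$ output entries per channel (or equivalently $O(n^2 d)$ entries total with $O(1)$ work each) gives $O(n^2 d)$ per layer, which is $O(n^{2+o(1)})$ under $d=O(\log n)$. Summing over $k_1 = O(1)$ layers preserves this bound. Next, for the convolution layers, Definition~\ref{def:conv_layer} shows each of the $O(n^2 d)$ output entries is a $3\times 3\times d$-sized inner product, costing $O(d)$ per entry, for $O(n^2 d^2) = O(n^{2+o(1)})$ per layer and again $O(n^{2+o(1)})$ across all $k_3$ convolution layers.

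The main obstacle, and the step that actually determines the final bound, is the attention layers. Unlike the other two, the attention layer in Definition~\ref{def:single_layer_attn} operates on the flattened feature map viewed as a token sequence of length $L = O(n^2)$ with embedding dimension $O(d)$. Naively computing $A\in\R^{L\times L}$ and then the normalized product $D^{-1}AX W_V$ takes $\Theta(L^2 d) = \Theta(n^4 d) = O(n^{4+o(1)})$ time. I will argue this is the correct bound per attention layer, and since there are $k_2 = O(1)$ such layers, the total attention cost is $O(n^{4+o(1)})$. This is the bottleneck because every other layer is only quadratic in $n$.

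Finally I would combine the three contributions as
\begin{align*}
\mathcal{T}_{\rm dec} = \mathcal{T}_{\rm up} + \mathcal{T}_{\rm attn} + \mathcal{T}_{\rm conv} = O(n^{2+o(1)}) + O(n^{4+o(1)}) + O(n^{2+o(1)}) = O(n^{4+o(1)}),
\end{align*}
which is the claimed bound. The only subtlety worth double-checking is that the intermediate sizes really do stay at $O(n)\times O(n)\times O(d)$ across all $k_1+k_2+k_3 = O(1)$ layers: this is given in the hypothesis (each $\phi_{\rm up}^i$ grows $h,w$ by a constant factor, each $\phi_{\rm conv}^i$ grows $d$ by a constant factor, and attention preserves shape), so after a constant number of layers the dimensions remain in $O(n)$ and $O(d)$, and my per-layer bounds apply uniformly.
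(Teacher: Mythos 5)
Your proposal matches the paper's proof essentially step for step: the same three-part decomposition into up-interpolation, attention, and convolution layers, the same per-layer bounds of $O(n^2 d)$, $O(n^4 d)$, and $O(n^2 d^2)$ respectively, multiplication by the constant layer counts, and the observation that the quadratic-in-sequence-length attention cost on a sequence of length $O(n^2)$ dominates. The argument is correct and no different in substance from the paper's.
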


Then, we move forward to show the running time of the fast VQ-VAE decoder.
\begin{lemma}[Running time of Fast VQ-VAE Decoder, formal version of Lemma~\ref{lem:fast_decoder_informal}]\label{lem:fast_decoder_formal}
If the following conditions hold:
\begin{itemize}
    \item Let $k_1,k_2,k_3 \in \mathbb{N}$ be constant numbers.
    \item Given $X \in \R^{n \times n \times d}$ as the input feature map.
    \item Let $d = O(\log n)$
    \item Assume that there are $k_1$ up-interpolation layers $\phi_{\rm up}$ defined in Definition~\ref{def:up_inter_layer_one_step}.
    \item Given a feature map $M \in \R^{h \times w \times d}$.   
    For $i \in [k_1]$, we assume $i$-th up-interpolation layer's output $\phi^i_{\rm up}(M) \in \R^{O(h) \times O(w) \times d}$.
    \item We assume there are $k_2$ approximate attention layers $\AAttC$ defined in Definition~\ref{def:aattc}.
    \item Given a feature map $M \in \R^{h \times w \times d}$.   
    For $i \in [k_1]$, the $i$-th approximate attention layer's output $\AAttC(M) \in \R^{h \times w \times d}$.
    \item We assume there are $k_3$ convolution layers $\phi_{\rm conv}$ defined in Definition~\ref{def:conv_layer}.
    \item Given a feature map $M \in \R^{h \times w \times d}$.   
    For $i \in [k_1]$, we assume $i$-th convolution layer's output $\phi^i_{\rm conv}(M) \in \R^{h \times w \times O(d)}$.
\end{itemize}
then the total runtime of the VQ-VAE decoder can be accelerated to $O(n^{2+o(1)})$.
\end{lemma}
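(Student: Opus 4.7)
The plan is to mirror the proof of Lemma~\ref{lem:run_time_decoder} exactly, decomposing the total work of the Fast VQ-VAE Decoder into three contributions (up-interpolation layers, approximate attention layers, convolution layers) and then summing them. Since $k_1, k_2, k_3$ are absolute constants, it suffices to bound the cost of a single instance of each layer type when the input has spatial size $O(n) \times O(n)$ and hidden dimension $d = O(\log n)$, and then multiply by the corresponding constant.

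For the up-interpolation layers and the convolution layers, the analysis is identical to Lemma~\ref{lem:run_time_decoder}: each such layer costs $O(n^{2+o(1)})$ by the same per-entry accounting (bicubic sampling costs $O(d)$ per output entry via Definition~\ref{def:up_inter_layer_one_step}, and a $3 \times 3 \times d$ convolution costs $O(d)$ per output entry via Definition~\ref{def:conv_layer}), so these two contributions give
\begin{align*}
    \mathcal{T}_{\rm up} + \mathcal{T}_{\rm conv} = k_1 \cdot O(n^{2+o(1)}) + k_3 \cdot O(n^{2+o(1)}) = O(n^{2+o(1)}).
\end{align*}
I would copy the two corresponding blocks essentially verbatim from Lemma~\ref{lem:run_time_decoder}, with the only change being that nothing in those blocks depended on whether the attention layers are exact or approximated.

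The only genuinely new ingredient is the cost of the $k_2$ approximate attention layers. Here I would flatten the input feature map $M \in \R^{O(n) \times O(n) \times O(d)}$ into a sequence of length $N = O(n^2)$ with embedding dimension $O(d) = O(\log n) = O(\log N)$, and then invoke Lemma~\ref{lem:as23_attention} (which is applicable because the hypothesis $d = O(\log n)$ together with the parameter regime $R = o(\sqrt{\log n})$ in the ambient Theorem~\ref{thm:upper_bound:formal} puts us in the regime where $\AAttC$ runs in $N^{1+o(1)}$ time). This yields
\begin{align*}
    \mathcal{T}^l_{\rm aattc} = N^{1+o(1)} = (n^2)^{1+o(1)} = O(n^{2+o(1)}),
\end{align*}
and summing over the $k_2 = O(1)$ attention layers gives $\mathcal{T}_{\rm aattc} = O(n^{2+o(1)})$.

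Adding the three contributions yields $\mathcal{T}_{\rm dec}^{\rm fast} = \mathcal{T}_{\rm up} + \mathcal{T}_{\rm aattc} + \mathcal{T}_{\rm conv} = O(n^{2+o(1)})$, which is the claim. The main subtlety to check is the second step above, namely that the parameter regime required by Lemma~\ref{lem:as23_attention} (bounded entries and $d$ logarithmic in the sequence length $N$) really does apply uniformly to every one of the $k_2$ attention layers in the decoder; once $d = O(\log n)$ and $n^2 = N$, we have $d = O(\log N)$, and the entry-bound condition carries over from the ambient $R = o(\sqrt{\log n})$ assumption, so this is routine rather than a true obstacle. The rest is bookkeeping already done in Lemma~\ref{lem:run_time_decoder}.
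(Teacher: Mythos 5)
Your proposal is correct and follows essentially the same route as the paper's proof: reuse the up-interpolation and convolution bounds from Lemma~\ref{lem:run_time_decoder} unchanged, flatten each attention input to a sequence of length $O(n^2)$ with embedding dimension $O(d)=O(\log n)$, and invoke Lemma~\ref{lem:as23_attention} to replace the $O(n^{4+o(1)})$ attention cost with $O(n^{2+o(1)})$ per layer before summing over the constant number of layers. Your explicit check that $d=O(\log N)$ for $N=n^2$ and that the entry bound carries over is a point the paper leaves implicit, but it does not change the argument.
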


\begin{proof}
    As the same in Eq.~\eqref{eq:decoder_up_time} and Eq.~\eqref{eq:decoder_conv_time}, the computation cost for up-interpolation layers and convolution layers in VQ-VAE decoder still needs $O(n^{2+o(1)} d)$.

    For each $l \in [k_2]$, we assume $M^{l-1} \in \R^{O(n) \times O(n) \times O(d)}$ as the input feature map for the $l$-th approximate attention layer. We can consider the input of the attention layer as a sequence with length $O(n^2)$ and embedding dimension $O(d)$. By using the result of Lemma~\ref{lem:as23_attention}, the computation cost of $M^{l}$ can be speed up to
    \begin{align*}
        {\cal T}^l_{\rm attn} =&~ O(n^{2+o(1)} d)\\
        =&~ O(n^{2+o(1)})
    \end{align*}
    where the second step follows from the condition that $d = O(\log n)$.
    
    Since there are $k_2$ attention layers in total, the total computation cost of the attention layers in the VQ-VAE decoder is
    \begin{align*}
        {\cal T}_{\rm attn} = &~ k_2 \cdot O(n^{2+o(1)})\\
        =&~ O(n^{2+o(1)})
    \end{align*}
    The computation cost in the first step is obtained by adding the costs of the up-interpolation layers, attention layers, and convolution layers, while the second step stems from $k_2$ is a constant.

    Thus, the total runtime of the VQ-VAE decoder can be calculated as follows:
    \begin{align*}
        {\cal T}_{\rm dec} = &~{\cal T}_{\rm up} +{\cal T}_{\rm attn}+{\cal T}_{\rm conv}\\
        =&~ O(n^{2+o(1)})
    \end{align*}
    The computation cost in the first step is obtained by adding the costs of the up-interpolation layers, attention layers, and convolution layers, while the second step comes from simple algebra.
\end{proof}




\end{document}